\title{Sampling Ex-Post Group-Fair Rankings}
\date{}
\author[$1$]{Sruthi Gorantla}
\author[$2*$]{Amit Deshpande}
\author[$1*$]{Anand Louis}
\affil[$1$]{\small{Indian Institute of Science, Bengaluru, India. \texttt{\{gorantlas, anandl\}@iisc.ac.in}}}
\affil[$2$]{\small{Microsoft Research, Bengaluru, India. \texttt{amitdesh@microsoft.com}}}
\let\oldnl\nl
\newcommand{\nonl}{\renewcommand{\nl}{\let\nl\oldnl}}
\newtheorem{theorem}{Theorem}[section]
\newtheorem*{theorem*}{Theorem}
\newtheorem*{claim*}{Claim}
\newtheorem*{proposition*}{Proposition}
\newtheorem{lemma}[theorem]{Lemma}
\newtheorem*{lemma*}{Lemma}
\newtheorem{axiom}[theorem]{Axiom}
\newtheorem*{axiom*}{Axiom}
\newtheorem{corollary}[theorem]{Corollary}
\newtheorem*{conjecture*}{Conjecture}
\newtheorem*{fact*}{Fact}
\newtheorem*{hypothesis*}{Hypothesis}
\theoremstyle{definition}
\newtheorem{definition}[theorem]{Definition}
\newtheorem*{remark}{Remark}
\renewcommand{\le}{\leqslant}
\renewcommand{\geq}{\geqslant}
\renewcommand{\ge}{\geqslant}
\newcommand{\paren}[1]{\left(#1 \right )}
\newcommand{\sparen}[1]{\left[#1 \right ]}
\newcommand{\set}[1]{\left\{#1\right\}}
\newcommand{\abs}[1]{\left\lvert#1\right\rvert}
\newcommand{\ceil}[1]{\left\lceil #1 \right\rceil}
\newcommand{\floor}[1]{\left\lfloor #1 \right\rfloor}
\newcommand{\Z}{{\mathbb Z}}
\newcommand{\N}{{\mathbb Z}_{\geq 0}}
\newcommand{\R}{\mathbb R}
\newcommand{\I}{\mathbb{I}}
\newcommand{\E}{\mathbb E}
\newcommand{\cU}{\mathcal{U}}
\newcommand{\cP}{\mathcal{P}}
\newcommand{\cV}{\mathcal{V}}
\newcommand{\cH}{\mathcal{H}}
\newcommand{\cO}{\mathcal{O}}
\newcommand{\cD}{\mathcal{D}}
\newcommand{\cX}{\mathcal{X}}
\newcommand{\unif}{{\sc Sampling-Oracle}}
\newcommand{\vol}[2]{{\sf Vol}_{#1}\paren{#2}}
\newcommand{\jinl}{j \in [\ell]}
\newcommand\numberthis{\addtocounter{equation}{1}\tag{\theequation}}
\newcommand{\bY}{\boldsymbol{Y}}
\newcommand{\bX}{\boldsymbol{X}}
\newcommand{\vecY}{\paren{ Y_1, Y_2, \ldots, Y_k}}
\newcommand{\vecX}{\paren{X_1, X_2, \ldots, X_{\ell}}}
\newcommand{\vecy}{\paren{ y_1, y_2, \ldots, y_k}}
\newcommand{\vecx}{\paren{x_1, x_2, \ldots, x_{\ell}}}
\begin{document}

	\maketitle
	\def\thefootnote{*}\footnotetext{Equal contribution}\def\thefootnote{\arabic{footnote}}
	\begin{abstract}
		Randomized rankings have been of recent interest to achieve ex-ante fairer exposure and better robustness than deterministic rankings. We propose a set of natural axioms for randomized group-fair rankings and prove that there exists a unique distribution $\cD$ that satisfies our axioms and is supported only over ex-post group-fair rankings, i.e., rankings that satisfy given lower and upper bounds on group-wise representation in the top-$k$ ranks. Our problem formulation works even when there is implicit bias, incomplete relevance information, or only ordinal ranking is available instead of relevance scores or utility values.
		
		We propose two algorithms to sample a random group-fair ranking from the distribution $\cD$ mentioned above. Our first dynamic programming-based algorithm samples ex-post group-fair rankings uniformly at random in time $O(k^2\ell)$, where $\ell$ is the number of groups. Our second random walk-based algorithm samples ex-post group-fair rankings from a distribution $\delta$-close to $\cD$ in total variation distance and has expected running time $O^*(k^2\ell^2)$\footnote{$O^*$ suppresses logarithmic and error terms.}, when there is a sufficient gap between the given upper and lower bounds on the group-wise representation. The former does exact sampling, but the latter runs significantly faster on real-world data sets for larger values of $k$. We give empirical evidence that our algorithms compare favorably against recent baselines for fairness and ranking utility on real-world data sets.
	\end{abstract}
	
	\section{Introduction}
	
	Ranking individuals using algorithms has become ubiquitous in many applications such as college admissions \cite{BCCKP2019}, recruitment \cite{GAK2019}, and credit allocation \cite{schufa}, among others.
	In many such scenarios, individuals who belong to certain demographic groups based on race, gender, age, etc., face discrimination due to human and historical biases \cite{bias1,bias2,bias3}.
    Algorithms learning from biased data exacerbate representational harms for certain groups in the top ranks \cite{amazon,upturn}, leading to loss of opportunities.
	One way to mitigate representational harms is by imposing explicit representation-based fairness constraints that the ranking output by the algorithm must contain a certain minimum and maximum number of candidates from each group \cite{GAK2019,CMV2020}.
	Many fair processes such as the Rooney rule \cite{rooney}, the $4/5$-th rule \cite{fourfifths}, and fairness metrics such as demographic parity impose representation-based constraints.
	
	A large body of work has proposed deterministic post-processing of rankings to satisfy representation-based group-fairness constraints \cite{CSV2018,GAK2019,WZW2018,ZBCHMB2017,GDL2021}. These methods essentially merge the group-wise ranked lists to create a common ranking that satisfies representation-based constraints.
	However, these methods contain two critical flaws. First, deterministic rankings cannot create opportunities for \textit{all} the groups at the top, especially when the number of groups is large. Second, observations of merit are often noisy in the real world \cite{bias2} and could contain \emph{implicit bias} towards protected groups \cite{bias1}. Hence, inter-group comparisons of merit while merging the group-wise rankings could lead to a loss of opportunities for certain groups.
    For example, suppose multiple companies intend to hire for a limited number of similar open positions and use the same recruitment system to rank a common candidate pool for job interviews. 
    In a deterministic top-$k$ ranking based on biased merit scores, every company would see the same ranking, where the protected groups could be systematically ranked lower. Hence, equal representation at the top-$k$ may not translate into equal opportunities.

	We consider randomized group-fair rankings as a way to create opportunity for every group in the top ranks (or, more generally, any rank). We assume that we are given only the ordinal rankings of items within each group (i.e., intra-group ordering without any scores) and no comparison of items across different groups (i.e., inter-group comparisons). 
	This assumption is strong but circumvents implicit bias and allows us to consider group-fair rankings even under incomplete or biased data about pairwise comparisons.
    Our randomized ranking algorithms output ex-post group-fair rankings, i.e., every sampled ranking output is group-fair and satisfies representation-based fairness constraints as a stronger ex-post (or actual) guarantee instead of a weaker ex-ante (or expected) guarantee.

	The rest of the paper is organized as follows: In \Cref{sec:related_works}, we survey related work and summarize its limitations for our problem. In \Cref{sec:group_fairness} we describe our axiomatic approach to define a distribution over ex-post group-fair rankings (Axioms~\ref{axm:intra_group}-\ref{axm:inter_group}) and show that there is a unique distribution that satisfies our axioms (\Cref{thm:uniqueness}). The same distribution satisfies sufficient representation of every group in any consecutive ranks in the top-$k$ (\Cref{cor:rep_at_prefix}), a natural characteristic derived from our axioms. In \Cref{sec:algorithms}, we give an efficient dynamic programming-based algorithm (\Cref{alg:dp}) and a random walk-based algorithm (\Cref{alg:random_walk}) to sample an ex-post group-fair ranking from the above distribution.
    We also extend our algorithms to handle representation-based constraints on the top prefixes (\Cref{subsec:prefix}). 
    In \Cref{sec:experiments}, we empirically validate our theoretical and algorithmic guarantees on real-world datasets.
    Finally, \Cref{sec:conclusion} contains some limitations of our work and open problems. All the proofs and additional experimental results are included in the appendix.

    \section{Related Work}
    \label{sec:related_works}
    
	Representation-based fairness constraints for ranking $\ell$ groups in top $k$ ranks are typically given by numbers $L_j$ and $U_j$, for each group $j \in [\ell]$, representing the lower and upper bound on the group representation respectively.
	They capture several fairness notions; setting $L_j = U_j = \frac{k}{\ell}, \forall j\in[\ell]$ ensures equal representation for all groups (see section 5 of \cite{Zehlike_part1}), whereas $L_j = U_j = p_j\cdot k, \forall j \in [\ell]$, ensures proportional representation\footnote{It may not always be possible to satisfy equal or proportional representation constraints exactly. In that case the algorithms need to say that the instance is infeasible.}, where $p_j$ is the proportion of the group $j$ in the population \cite{GDL2021,GS2020,GAK2019}.
	These constraints have also been studied in other problems such as fair subset selection \cite{SYV2018}, fair matching \cite{GOTO201640}, and fair clustering \cite{CKLV2017}.

	Previous work \cite{Castillo2019} has tried to formalize the general principles of fair ranking as treating similar items consistently, maintaining a sufficient presence of items from minority groups, and proportional representation from every group. 
	There has been work on quantifying fairness requirements \cite{YS2017,GAK2019,BCDQ2019,NCGW2020,KVR2019}, which has predominantly proposed deterministic algorithms for group-fair ranking \cite{YS2017,GAK2019,GDL2021,CSV2018,Zehlike_part1}.

	Our recruitment example in the previous section shows the inadequacy of deterministic ranking to improve opportunities. Recent works have also observed this and proposed randomized ranking algorithms to achieve equality or proportionality of expected exposure \cite{DME+2020,SJ2018,BGW2018,robustltr,expohedron}.
    All of them require utilities or scores of the items to be ranked and hence, are susceptible to implicit bias or incomplete information about the true utilities.
    Moreover, they do not give ex-post guarantees on the representation of each group, which can be a legal or necessary requirement if the exposure cannot be computed efficiently and reliably \cite{MSdR2022}.

    Another recent workaround is to model the uncertainty in merit. Assuming access to the true merit distribution, \citet{SKJ2021} give a randomized ranking algorithm for a notion of individual fairness. On the other hand, \citet{CMV2020} try to model the systematic bias. Under strong distributional assumptions, they show that representation constraints are sufficient to achieve a fair ranking. However, their assumptions may not hold in the real world as unconscious human biases are unlikely to be systematic \cite{bias1,bias2}.

	Most aligned to our work is a heuristic randomized ranking algorithm -- \textit{fair $\epsilon$-greedy} -- proposed by \cite{GS2020}. Similar to our setup, they eschew comparing items across different groups. 
	However, their algorithm does not come with any theoretical guarantees, and it does not always sample ex-post group-fair rankings. Moreover, it works only when no upper-bound constraints exist on the group-wise representation.
	To the best of our knowledge, our work is the first to propose a distribution over ex-post group-fair rankings, using lower and upper bounds on the group-wise representations, and to give provably correct and efficient sampling algorithms for it.

    We note here that previous work has also used randomization in ranking, recommendations, and summarization of ranked results to achieve other benefits such as controlling polarization \cite{CKSV2019controlling}, mitigating data bias \cite{CKV2020}, and promoting diversity \cite{CKSDKV2018fair}.
	
	\section{Group Fairness in Ranking}
	\label{sec:group_fairness}
	
	Given a set $N := [n]$ of items, a \textit{top-$k$ ranking} is a selection of $k<n$ items followed by the assignment of each rank in $[k]$ to exactly one of the selected items.
 We use index $i$ to refer to a rank, index $j$ to refer to a group, and $a$ to refer to elements in the set $N$.
	Let $a,a'\in N$ be two different items such that the item $a$ is assigned to rank $i$ and item $a'$ is assigned to rank $i'$.
	Whenever $i < i'$ we say that item $a$ is \textit{ranked lower} than item $a'$.
	Going by the convention, we assume that being ranked at lower ranks gives items better visibility \cite{GDL2021}. 
	Throughout the paper, we refer to a top-$k$ ranking by just ranking.
	The set $N$ can be partitioned into $\ell$ disjoint groups of items depending on a sensitive attribute.
	A group-fair ranking is any ranking that satisfies a set of group fairness constraints. 
	Our fairness constraints are \textit{representation constraints}; lower and upper bounds, $L_j, U_j \in [k]$ respectively, on the number of top $k$ ranks assigned to group $j$, for each group $\jinl$.
	Throughout the paper, we assume that we are given a ranking of the items within the same group for all groups.
	We call these rankings \textit{in-group rankings}.
	We now take an axiomatic approach to characterize a random group-fair ranking.
	
	\subsection{Random Group-Fair Ranking}
	\label{subsec:definition}
	The three axioms we state below are natural consistency and fairness requirements for distribution over all the rankings.

	\begin{axiom}[In-group consistency]
		\label{axm:intra_group}
		For any ranking sampled from the distribution, for all items $a,a'$ belonging to the same group $j\in [\ell]$, item $a$ is ranked lower than item $a'$ if and only if item $a$ is ranked lower than item $a'$ in the in-group ranking of group $j$.
	\end{axiom}
    Since the intra-group merit comparisons are reliable, their in-group ranking must remain consistent, which is what the axiom asks for.
	Many post-processing algorithms for group-fairness ranking satisfy this axiom \cite{GDL2021,CSV2018,ZEHLIKE2022102707,ZBCHMB2017}.
	Once we assign the ranks to groups, \Cref{axm:intra_group} determines the corresponding items to be placed there consistent with in-group ranking.
	Hence, for the next axioms, we look at the group assignments instead of rankings.
	A \textit{group assignment} assigns each rank in the top $k$ ranking to exactly one of the $\ell$ groups.
	Let $Y_i$ be a random variable representing the group $i$th rank is assigned to.
	Therefore $\bY = \vecY$ is a random vector representing a group assignment.
	Let $y = \vecy$ represent an instance of a group assignment.
	A \textit{group-fair assignment} is a group assignment that satisfies the representation constraints.
	Therefore the set of group-fair assignments is $\set{y \in [\ell]^k : L_j \le \sum_{i \in [k]} \I[y_i = j] \le U_j, \forall \jinl}$, 
	where $\I[\cdot]$ is an indicator function.
	The ranking can then be obtained by assigning the items within the same group, according to their in-group ranking, to the ranks assigned to the group.
	We use $Y_0$ to represent a dummy group assignment of length $0$ for notational convenience when no group assignment is made to any group (e.g.~in \Cref{axm:inter_group}).
	
	Let $X_j$ be a random variable representing the number of ranks assigned to group $j$ in a group assignment for all $\jinl$.
	Therefore $\bX = \vecX$ represents a random vector for a \textit{group representation}.
	Let $x = \vecx$ represent an instance of a group representation.
	Then the set of group-fair representations is $\set{x \in \N^{\ell} :\sum_{j \in [\ell]} x_j = k~\text{and}~L_j \le x_j \le U_j, \forall \jinl}$.
	
	Since the inter-group comparisons are unreliable, any feasible group-fair representation is equally likely to be the best. That is, the distribution should be maximally non-committal distribution over the group-fair representations, which is nothing but a uniform distribution over all feasible group-fair representations.
	This is captured by our next axiom as follows,
	
	\begin{axiom}[Representation Fairness]
		\label{axm:rep_fairness}
		All the non-group-fair representations should be sampled with probability zero, and all the group-fair representations should be sampled uniformly at random. 
	\end{axiom}
	
	\begin{remark}
		Any distribution for top $k$ ranking that satisfies \Cref{axm:rep_fairness} is \textit{ex-post} group fair since the support of the distribution consists only of rankings that satisfy representation constraints.
		This is important when the fairness constraints are legal or strict requirements.
	\end{remark}
	
	Many distributions over rankings could satisfy \Cref{axm:intra_group} and \Cref{axm:rep_fairness}. 
	Consider a distribution that samples a group representation $x$ uniformly at random.
	Let $x_1\in [L_1,U_1]$ be the representation corresponding to group $1$.
	Let us assume that this distribution always assigns ranks $k-x_1+1$ to $k$ to group $1$.
	Due to in-group consistency, the best $x_1$ items in group $1$ get assigned to these ranks.
	However, always being at the bottom of the ranking is not fair to group $1$, since it gets low visibility.
	Therefore, we introduce a third axiom that asks for fairness in the second step of ranking -- assigning the top $k$ ranks to the groups in a \textit{rank-aware} manner.
	
	\begin{axiom}[Ranking Fairness]
		\label{axm:inter_group}
		For any two groups $j,j'\in [\ell]$, for all $i \in \set{0,\ldots, k-2}$, conditioned on the top $i$ ranks and a group representation $x$, the $(i+1)$-th and the $(i+2)$-th ranks are assigned to $j$ and $j'$ interchangeably with equal probability. That is, $\forall j, j' \in [\ell],\forall i \in \set{0,\ldots,k-2}$,
		\begin{equation*}
		\Pr\sparen{Y_{i+1} = j, Y_{i+2} = j' \mid Y_0, Y_1, \ldots, Y_i, \bX} =\Pr\sparen{Y_{i+1} = j', Y_{i+2} = j \mid Y_0, Y_1, \ldots, Y_i, \bX}.
		\end{equation*}
	\end{axiom}
	
	Let $\cU$ represent a uniform distribution. 
	In the result below, we prove that there exists a unique distribution over the rankings that satisfies all three axioms. 
	\begin{theorem}
		\label{thm:uniqueness}
		Let $\cD$ be a distribution from which a ranking is sampled as follows,
		\begin{enumerate}[leftmargin=*]
			\itemsep0em 
			\item Sample an $x$ as follows,\\
			$\bX\sim\cU\set{x\in \N^{\ell} :\sum\limits_{j \in [\ell]} x_j = k \text{ and }
				L_j \le x_j \le U_j, \forall \jinl}.$
			\item Sample a $y$, given $x$, as follows,\\
			$\bY\mid x\sim\cU\set{y\in [\ell]^{k}: \sum\limits_{i \in k}\I[y_i = j] = x_j, \forall \jinl}.$
		
			\item Rank the items within the same group in the order consistent with their in-group ranking, in the ranks assigned to the groups in the group assignment $y$.
		\end{enumerate}
		Then $\cD$ is the unique distribution that satisfies all three axioms.
	\end{theorem}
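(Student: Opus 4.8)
The plan is to split the statement into \textbf{existence} (the sampling scheme $\cD$ satisfies all three axioms) and \textbf{uniqueness} (no other distribution does), and to note at the outset that by \Cref{axm:intra_group} a distribution over rankings is completely determined by its law on group assignments $\bY$: once $y$ is fixed, the ranking is forced by in-group consistency. Existence is the easy direction. Step~3 of the construction enforces \Cref{axm:intra_group} by fiat; Step~1 makes the marginal law of $\bX$ exactly uniform over group-fair representations with no mass outside, which is \Cref{axm:rep_fairness}; and Step~2 makes $\bY \mid \bX = x$ uniform over all assignments of composition $x$. For such a uniform law, swapping the contents of positions $i+1$ and $i+2$ in any completion is a measure-preserving bijection, so the marginal identity of \Cref{axm:inter_group} holds. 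I would dispatch existence in a few lines this way.

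For uniqueness, let $\cD'$ be any distribution satisfying the three axioms. By \Cref{axm:rep_fairness} its $\bX$-marginal agrees with that of $\cD$, so it suffices to show that $\cD'$ forces $\bY \mid \bX = x$ to be uniform over the assignments of composition $x$, for every group-fair $x$. I would prove the stronger statement that this conditional law is \emph{exchangeable}; since every assignment consistent with $\bX = x$ has the identical composition $x$ and any two are related by a coordinate permutation, exchangeability immediately upgrades to uniformity.

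The heart of the argument is a downward induction on $i$ establishing the following: conditioned on any positive-probability prefix $(y_1,\dots,y_i)$ and on $\bX = x$, the suffix $(Y_{i+1},\dots,Y_k)$ is exchangeable. The cases $i \in \set{k,\, k-1}$ are trivial, and $i = k-2$ is exactly \Cref{axm:inter_group} applied at the final two ranks, where there is no tail to sum over and the axiom's marginal identity is literally the term-by-term swap $q(\dots,a,b) = q(\dots,b,a)$, writing $q(\cdot)$ for the law of the suffix given the fixed length-$i$ prefix and $\bX = x$. For the inductive step, I assume exchangeability of every length-$(i{+}1)$-conditioned suffix; this already gives invariance under all transpositions among positions $i+2,\dots,k$, so the only move left to verify is the adjacent transposition of positions $i+1$ and $i+2$, namely $q(a,b,z) = q(b,a,z)$ with $z$ ranging over tails.

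The main obstacle is precisely this last step: \Cref{axm:inter_group} only equates the \emph{marginals} $q(Y_{i+1}=a,Y_{i+2}=b)$ and $q(Y_{i+1}=b,Y_{i+2}=a)$ (each summed over all tails), whereas I need the tail-by-tail equality. I would resolve it by combining the induction hypothesis with the composition constraint. Writing $r = x - \mathrm{comp}(y_1,\dots,y_i)$ for the forced suffix composition and $e_a$ for the $a$-th unit vector, the induction hypothesis says the law $\mu_a$ of $(Y_{i+2},\dots,Y_k)$ given $Y_{i+1}=a$ is exchangeable; but $\bX = x$ pins its composition to $r - e_a$, so an exchangeable law supported on a single composition class must be \emph{uniform} over that class. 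Feeding this uniform $\mu_a$ into the marginal identity of \Cref{axm:inter_group} forces the first-coordinate marginal $\alpha_a := q(Y_{i+1}=a)$ to satisfy $\alpha_a\, r_b = \alpha_b\, r_a$, hence $\alpha_a = r_a/(k-i)$. Substituting this value of $\alpha_a$ together with the explicit uniform value of $\mu_a(b,z)$ into $\alpha_a\,\mu_a(b,z)$ and simplifying the resulting multinomial coefficients yields a quantity symmetric in $a$ and $b$, which is exactly $q(a,b,z) = q(b,a,z)$. The remaining multinomial bookkeeping is routine once this structure is in place, and taking $i = 0$ completes the induction and hence the proof.
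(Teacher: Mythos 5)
Your proof is correct, and its skeleton matches the paper's: both arguments reduce the problem to the law of the group assignment $\bY$ conditioned on $\bX=x$ (with \Cref{axm:intra_group,axm:rep_fairness} pinning everything else), and both run a downward induction on prefix length $i$, with base case $i=k-2$ read off verbatim from \Cref{axm:inter_group} and an inductive step that converts the axiom's marginal swap identity into a pointwise identity using the hypothesis about longer prefixes. Where you genuinely differ is in how that step is executed. The paper compares two fixed assignments $y^{(s)},y^{(t)}$ agreeing up to rank $i$, forms the sets $M^{(s)},M^{(t)}$ of completions whose ranks $i+1,i+2$ carry the two differing groups in either order, observes $\abs{M^{(s)}}=\abs{M^{(t)}}$ by an explicit bijection, and uses the induction hypothesis (all members of each set equiprobable) so that equality of the two set masses collapses to equality of per-element probabilities; no probability value is ever computed. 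You instead carry exchangeability as the inductive invariant, invoke the fact that an exchangeable law supported on a single composition class is uniform, and compute in closed form: \Cref{axm:inter_group} forces $\alpha_a r_b=\alpha_b r_a$, hence $\alpha_a=r_a/(k-i)$, and then $\alpha_a\,\mu_a(b,z)=\prod_j r_j!\,/\,(k-i)!$ is visibly symmetric in $a$ and $b$. Your route is more computational but buys more: it yields the exact conditional rank marginals along the way (your $\alpha_a=r_a/(k-i)$ is essentially the content of \Cref{thm:rep_at_i}, which the paper proves separately), and it delivers uniformity of the suffix law directly rather than pairwise equiprobability. One point you gloss over and should make explicit: the identity $\alpha_a r_b=\alpha_b r_a$ presupposes that $\mu_a$ is defined, i.e., $\alpha_a>0$. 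You need the short extra argument that $\alpha_a>0$ whenever $r_a>0$: take any $b$ with $\alpha_b>0$; if $b\neq a$, uniformity of $\mu_b$ gives $\Pr\sparen{Y_{i+1}=b,\,Y_{i+2}=a\mid w,x}>0$, and \Cref{axm:inter_group} then forces $\alpha_a>0$. With that patch the remaining bookkeeping is indeed routine, as you claim.
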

	\begin{proof}
	Recall that $x = \vecx$ is defined as \textit{group representation} where $x_j$ is the number of ranks assigned to group $j$ for all $\jinl$, and $y = \vecy$ is defined as \textit{group assignment} where $y_i$ is the group assigned to rank $i$ for all $i \in [k]$.
	
	For \Cref{axm:intra_group} to be satisfied, the distribution should consist only of rankings where the items from the same group are ranked in the order of their merit.
	Clearly $\cD$ satisfies \Cref{axm:intra_group}.
	
	To satisfy \Cref{axm:rep_fairness} all the group-fair representations need to be sampled uniformly at random, and all the non-group-fair rankings need to be sampled with probability zero.
	Hence, $\cD$ also satisfies \Cref{axm:rep_fairness}.
	
	We now use strong induction on the prefix length $i$ to show that any distribution over group assignments that satisfies \Cref{axm:inter_group} has to sample each group assignment $y$, conditioned on a group representation $x$, with equal probability.
	We note that whenever we say a common prefix, we refer to the longest common prefix.
	
	\textbf{Induction hypothesis.}
	Any two rankings with a common prefix of length $i$, for some $0 \le i \le k-2$, have to be sampled with equal probability.
	
	\textbf{Base case ($i = k-2$).}
	Let $y$ and $y'$ represent a pair of group assignments with fixed group representation $x$ and common prefix till ranks $k-2$.
	Then there exist exactly two groups $j,j'\in [\ell]$ such that
	\begin{gather*}
	y_{k-1} = y'_{k} = j\quad\text{and}\quad
	y_{k} = y'_{k-1} = j'.
	\end{gather*}
	Therefore, to satisfy \Cref{axm:inter_group}, these two group assignments $y$ and $y'$ need to be sampled with equal probability.
	Therefore we can conclude that for a fixed $x$, any two group assignments with the same prefix of length $k-2$ have to be sampled with equal probability.
	We note here that there do not exist two or more group assignments with group representation $x$ and common prefix of length exactly $k-1$.
	
	\textbf{Induction step.}
	Assume that for some $i < k-2$, any two group assignments with group representation $x$ and common prefix of length $i' \in \set{i+1, i+2, \ldots, k-2}$ are equally likely.
	Then we want to show that any two group assignments with group representation $x$ and common prefix of length $i$ are also equally likely.
	Let $y^{(s)}$ and $y^{(t)}$ be two different group assignments with group representation $x$ and common prefix of length $i$.
	Let $w = \paren{w_1, w_2, \ldots, w_i}$ represent this common prefix of length $i$, that is,
	\[
	w_1 := y^{(s)}_1 = y^{(t)}_1, w_2 := y^{(s)}_2 = y^{(t)}_2, \cdots, w_i := y^{(s)}_i = y^{(t)}_i.
	\]
	
	Observe that if $x_j'$ represents the number of ranks assigned to group $j$ in ranks $\paren{i+1, i+2, \ldots, k}$ in $y^{(s)}$, then the number of ranks assigned to group $j$ in ranks $\paren{i+1, i+2, \ldots, k}$ in $y^{(t)}$ is also $x_j'$ for all $\jinl$, since $y^{(s)}$ and $y^{(t)}$ have common prefix of length $i$, and both have group representation $x$.

	Since $w$ is of length exactly $i$ we also have that $y^{(s)}_{i+1}\neq y^{(t)}_{i+1}$.
	But the observation above give us that the group assigned to rank $i+1$ in $y^{(t)}$ appears in one of the ranks between $i+2$ and $k$ in $y^{(s)}$.
	Let $\cP$ be the set of all permutations of the elements in the multi-set 
	\[
	\set{y^{(s)}_{i+2},y^{(s)}_{i+3}, \ldots, y^{(s)}_{k}}\Big\backslash \set{y^{(t)}_{i+1}},
	\]
	that is, we remove one occurrence of the group assigned to rank $i+1$ in the group assignment $y^{(t)}$ from the multi-set $\set{y^{(s)}_{i+2},y^{(s)}_{i+3}, \ldots, y^{(s)}_{k}}$.
	We then have that each element of $\cP$ is a tuple of length $k-i-2$.
	We now construct two sets of group assignments $M^{(s)}$ and $M^{(t)}$ as follows,
	\begin{align*}
	M^{(s)} := \Bigg\{ \Big\{\underbrace{w_1, w_2, \ldots, w_i}_{\text{first}~i},\underbrace{y^{(s)}_{i+1}}_{i+1},\underbrace{y^{(t)}_{i+1}}_{i+2}, \underbrace{\hat{w}_1, \hat{w}_2, \ldots, \hat{w}_{k-i-2}}_{\text{last}~k-i-2}\Big\},\forall \hat{w} \in \cP\Bigg\},
    \end{align*}
    \begin{align*}
	M^{(t)} := \Bigg\{ \Big\{\underbrace{w_1, w_2, \ldots, w_i}_{\text{first}~i},\underbrace{y^{(t)}_{i+1}}_{i+1},\underbrace{y^{(s)}_{i+1}}_{i+2}, \underbrace{\hat{w}_1, \hat{w}_2, \ldots,  \hat{w}_{k-i-2}}_{\text{last}~k-i-2}\Big\}, \forall \hat{w} \in \cP\Bigg\}.
	\end{align*}
	For a fixed $\hat{w}\in \cP$ there is exactly one group assignment in  $M^{(s)}$ and one group assignment in $M^{(t)}$ such that their $i+1$st and $i+2$nd coordinates are interchanged, and their first $i$ and last $k-i-2$ coordinates are same.
	Therefore, $\abs{M^{(s)}} = \abs{M^{(t)}}$.
	
	We also have from the induction hypothesis that all the group assignments in $M^{(s)}$ are equally likely since they have a common prefix of length $i+2$.
	Similarly all the group assignments in $M^{(t)}$ are equally likely.
	For any group assignment in $M^{(s)}$ let $\delta^{(s)}$ be the probability of sampling it.
	Similarly, for any group assignment in $M^{(t)}$ let $\delta^{(t)}$ be the probability of sampling it.
	Then,
	\begin{multline}
	\Pr\sparen{Y_{i+1} = y^{(s)}_{i+1}, Y_{i+2} = y^{(t)}_{i+1} \mid Y_0, \paren{Y_1, \ldots, Y_i} = w, \bX=x} \\
	= \Pr\sparen{\text{sampling a group assignment from}~M^{(s)}} = \abs{M^{(s)}}\delta^{(s)},\label{eq:ms}
	\end{multline}
	\begin{multline}
	\Pr\sparen{Y_{i+1} = y^{(t)}_{i+1}, Y_{i+2} = y^{(s)}_{i+1} \mid Y_0, \paren{Y_1, \ldots, Y_i} = w,  \bX=x} \\= \Pr\sparen{\text{sampling a group assignment from}~M^{(t)}} = \abs{M^{(t)}}\delta^{(t)}.\label{eq:mt}
	\end{multline}
	Fix two group assignments  $y^{(s')}\in M^{(s)}$ and  $y^{(t')}\in M^{(t)}$.
	By the induction hypothesis $y^{(s)}$ and $y^{(s')}$ are equally likely since they have a common prefix of length $i+1$.
	Similarly $y^{(t)}$ and $y^{(t')}$ are also equally likely.
	Therefore, for $y^{(s)}$ and $y^{(t)}$ to be equally likely we need $y^{(s')}$ and $y^{(t')}$ to be equally likely.
	
	\paragraph{Comparing $y^{(s')}$ and $y^{(t')}$ instead of $y^{(s)}$ and $y^{(t)}$.}
	We know from above that $y^{(s')}$ and $y^{(t')}$ are sampled with probability $\delta^{(s)}$ and $\delta^{(t)}$ respectively.
	Therefore for any distribution satisfying \Cref{axm:inter_group} we have,
	\begin{multline*}
	\Pr\sparen{Y_{i+1} = y^{(s)}_{i+1}, Y_{i+2} = y^{(t)}_{i+1} \mid Y_0, Y_{1:i} = w, \bX=x} \\
	= \Pr\sparen{Y_{i+1} = y^{(t)}_{i+1}, Y_{i+2} = y^{(s)}_{i+1} \mid Y_0, Y_{1:i} = w,  \bX=x}\\
	\implies \abs{M^{(s)}}\delta^{(s)} = \abs{M^{(t)}}\delta^{(t)},~~\text{from Equations}~(\ref{eq:ms})~\text{and}~(\ref{eq:mt}) \\
	\implies \delta^{(s)} = \delta^{(t)},\qquad\qquad\because \abs{M^{(s)}}=\abs{M^{(t)}}.
	\end{multline*}
	Note that the converse is also easy to show, which means that \Cref{axm:inter_group} is satisfied if and only if $y^{(s')}$ and $y^{(t')}$ are equally likely.
	Therefore, \Cref{axm:inter_group} is satisfied if and only if $y^{(s)}$ and $y^{(t)}$ are equally likely.
	
	For a fixed group representation $x$, for any two group assignments with corresponding group representation $x$, there exists an $i \in \set{0,1,\ldots, k-2}$ such that they have a common prefix of length $i$.
	Therefore, any two group assignments, for a fixed group representation $x$, have to be equally likely.
	Therefore $\cD$ is the unique distribution that satisfies all three axioms.
\end{proof}

	We also have the following additional characteristic of the distribution in \Cref{thm:uniqueness}.
	It guarantees that every rank in a randomly sampled group assignment is assigned to group $j$ with probability at least $\frac{L_j}{k}$ and at most $\frac{U_j}{k}$.
	Hence, every rank gets a sufficient representation of each group.
	Note that no deterministic group-fair ranking can achieve this.
	
	Let $\cD_{\delta}$ be a distribution that differs from $\cD$ as follows: $\bX$ is sampled from a distribution $\delta$-close to a uniform distribution in Step 1 of $\cD$, in the total-variation distance, $\bY|x$ is sampled as in Step 2 of $\cD$. The items are also assigned as in Step 3 of $\cD$.
	Then it is easy to show that $\cD_{\delta}$ is $\delta$-close to $\cD$ in total-variation distance.
	We then prove the following theorem and its corollary.
	
	\begin{theorem}
		\label{thm:rep_at_i}
		For any $\delta > 0$ and group assignment $\bY$ sampled from $\cD_{\delta}$, for every group $\jinl$ and for every rank $i \in [k]$, $
		\frac{L_j}{k} \le \Pr_{\cD_{\delta}}\sparen{Y_i = j}\le\frac{U_j}{k}$.
	\end{theorem}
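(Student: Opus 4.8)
The plan is to condition on the group representation $\bX$ and to exploit the exchangeability built into Step 2 of the construction. Conditioned on $\bX = x$, the group assignment $\bY$ is drawn uniformly over all sequences in $[\ell]^k$ containing exactly $x_j$ copies of each group $j$ (this is Step 2 of $\cD_{\delta}$, identical to that of $\cD$, whose conditional uniformity is exactly what \Cref{thm:uniqueness} guarantees). Such a uniform distribution over the arrangements of a fixed multiset is exchangeable across the $k$ positions, so the law of $Y_i$ cannot depend on the rank $i$. Counting directly, the number of arrangements with $Y_i = j$ is $(k-1)!/\paren{(x_j-1)!\prod_{j'\neq j} x_{j'}!}$ while the total is $k!/\prod_{j'} x_{j'}!$, and their ratio is $x_j/k$ (this also correctly gives $0$ when $x_j = 0$). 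Hence the first step establishes
\[
\Pr_{\cD_{\delta}}\sparen{Y_i = j \mid \bX = x} = \frac{x_j}{k}\qquad\text{for every } i \in [k].
\]

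With the conditional marginals in hand, I would average over $\bX$ by the tower rule,
\[
\Pr_{\cD_{\delta}}\sparen{Y_i = j} = \E_{\bX}\sparen{\frac{X_j}{k}} = \frac{\E_{\bX}\sparen{X_j}}{k},
\]
which reduces the entire claim to bounding $\E_{\bX}\sparen{X_j}$. For this last step I would observe that, although $\bX$ is only $\delta$-close to uniform in Step 1, the approximate sampler never leaves the feasible region, so the support of $\bX$ is still contained in the group-fair representations $\set{x \in \N^{\ell} : \sum_{j} x_j = k,\ L_j \le x_j \le U_j}$. Thus $L_j \le X_j \le U_j$ holds almost surely; taking expectations gives $L_j \le \E_{\bX}\sparen{X_j} \le U_j$, and dividing by $k$ yields the two-sided bound.

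I expect the only delicate point to be the position-exchangeability in the first step — everything afterward is a one-line expectation argument. It is worth emphasizing that the quantitative value of $\delta$ plays no role: the conclusion holds for any distribution on $\bX$ whose support lies in the group-fair representations, precisely because the inequalities $L_j \le X_j \le U_j$ are pointwise on the support and therefore survive taking expectations. The same computation, restricted to a prefix of ranks rather than a single rank, is what I would expect to drive the prefix-representation corollary.
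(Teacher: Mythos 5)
Your proof is correct and follows essentially the same route as the paper's: condition on the group representation $\bX = x$, use that the conditional marginal satisfies $\Pr\sparen{Y_i = j \mid \bX = x} = x_j/k$, and then average over $\bX$, using that $L_j \le X_j \le U_j$ holds pointwise on the support of the ($\delta$-close but still feasible-supported) distribution. The only differences are cosmetic: you justify the $x_j/k$ step by an explicit multiset-counting/exchangeability argument where the paper simply asserts it, and you phrase the averaging via $\E\sparen{X_j}$ rather than the paper's explicit sum over $\cX$.
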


\begin{proof}
	Given an $\delta > 0$ and a distribution $\cD_{\delta}$ that is at total-variation distance of $\delta$ from $\cD$ defined in \Cref{thm:uniqueness}, when sampling group representation.
	Therefore,
	\begin{equation}
	\label{eq:tv_for_cX}
	\sup_{A \subseteq \cX}\abs{\Pr_{\cD}(A) - \Pr_{\cD_{\delta}}(A)} = \delta.    
	\end{equation}
	Now, fix a group $\jinl$ and a rank $i \in [k]$.
	Let $\cX$ be the set of all group-fair representations for given constraints, $L_j, U_j, \forall \jinl$.
	Then,
	\begin{align*}
	\Pr_{\cD_{\delta}}\sparen{Y_i = j} &= \sum_{x\in\cX}\Pr_{\cD_{\delta}}\sparen{X = x}\Pr_{\cD_{\delta}}\sparen{Y_i = j | X} \\&~~\text{(by the law of total probability)}\\
	&= \sum_{x\in\cX}\Pr_{\cD_{\delta}}\sparen{X = x}\frac{x_j}{k}\\
	&\ge \frac{L_j}{k}\sum_{x\in\cX}\Pr_{\cD_{\delta}}\sparen{X = x}\\
	&= \frac{L_j}{k}.
	\end{align*}
	Similarly we get $\Pr_{\cD_{\delta}}\sparen{Y_i = j} \le \frac{U_j}{k}$. 
\end{proof}

	\begin{corollary}
		\label{cor:rep_at_prefix}
		For any $\delta > 0 $, let $i, i' \in [k]$ be such that $i \le i'$ and let $Z_{i,i'}^j$ be a random variable representing the number of ranks assigned to group $j$ in ranks $i$ to $i'$,
		for a ranking sampled from $\cD_{\delta}$.
		Then, for every group $\jinl$ and for every rank $i \in [k]$, $
		\paren{\frac{i'-i+1}{k}}\cdot L_j \le \E_{\cD_{\delta}}\sparen{Z_{i,i'}^j}\le\paren{\frac{i'-i+1}{k}}\cdot U_j$.
	\end{corollary}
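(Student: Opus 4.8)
The plan is to write $Z_{i,i'}^j$ as a sum of indicator random variables over the ranks in question and then apply linearity of expectation, reducing the claim to the per-rank bound already established in \Cref{thm:rep_at_i}. Concretely, for a ranking sampled from $\cD_{\delta}$, set
\[
Z_{i,i'}^j = \sum_{m=i}^{i'} \I\sparen{Y_m = j},
\]
so that $Z_{i,i'}^j$ counts exactly the ranks among $\set{i, i+1, \ldots, i'}$ that are assigned to group $j$. There are $i'-i+1$ terms in this sum.

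By linearity of expectation,
\[
\E_{\cD_{\delta}}\sparen{Z_{i,i'}^j} = \sum_{m=i}^{i'} \E_{\cD_{\delta}}\sparen{\I\sparen{Y_m=j}} = \sum_{m=i}^{i'} \Pr_{\cD_{\delta}}\sparen{Y_m = j}.
\]
Next I would invoke \Cref{thm:rep_at_i}, which guarantees $\frac{L_j}{k} \le \Pr_{\cD_{\delta}}\sparen{Y_m = j} \le \frac{U_j}{k}$ for every rank $m \in [k]$ and every group $\jinl$. Applying this bound termwise to each of the $i'-i+1$ summands and summing yields
\[
\paren{\frac{i'-i+1}{k}} L_j \le \E_{\cD_{\delta}}\sparen{Z_{i,i'}^j} \le \paren{\frac{i'-i+1}{k}} U_j,
\]
which is exactly the claimed inequality.

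I do not expect any genuine obstacle here: the entire argument is a one-line decomposition into indicators followed by a direct invocation of \Cref{thm:rep_at_i}. The only point worth a moment's care is that linearity of expectation requires no independence assumption among the indicators $\I\sparen{Y_m=j}$ (which indeed are correlated, since the $Y_m$ share the same sampled group representation $\bX$), so the decomposition is valid regardless of the joint structure of $\bY$ under $\cD_{\delta}$. This is precisely what makes the corollary an immediate consequence of the per-rank statement rather than requiring any fresh analysis of the distribution.
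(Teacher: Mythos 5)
Your proof is correct and follows exactly the same route as the paper's: decompose $Z_{i,i'}^j$ into indicators $\I\sparen{Y_m = j}$, apply linearity of expectation, and bound each term via \Cref{thm:rep_at_i}. Your remark that linearity needs no independence among the (correlated) indicators is a nice touch of care, though the paper leaves it implicit.
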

	
\begin{proof}[Proof of \Cref{cor:rep_at_prefix}]
	Given an $\delta > 0$ and a distribution $\cD_{\delta}$ that is at total-variation distance of $\delta$ from $\cD$ defined in \Cref{thm:uniqueness}, when sampling group representation.
	Fix a group $\jinl$ and rank $i,i' \in [k]$ such that $i \le i'$.
	Let $\cX$ be the set of all group-fair representations for given constraints, $L_j, U_j, \forall \jinl$.
	\begin{align*}
	\E_{\cD_{\delta}}\sparen{Z_{i,i'}^j} &= \E_{\cD_{\delta}}\sparen{\sum_{\hat{i} = i}^{i'}\I\sparen{Y_{\hat{i}} = j}} \\ 
	&= \sum_{\hat{i} = i}^{i'}\E_{\cD_{\delta}}\sparen{\I\sparen{Y_{\hat{i}} = j}} &\text{by linearity of expectation}\\
	&=  \sum_{\hat{i} = i}^{i'}\Pr_{\cD_{\delta}}\sparen{Y_{\hat{i}} = j}\\
	&\ge \frac{i'-i+1}{k}\cdot L_j. &\text{from \Cref{thm:rep_at_i}} 
	\end{align*}
	Similarly $\E_{\cD_{\delta}}\sparen{Z_{i,i'}^j} \le \frac{i'-i+1}{k}\cdot U_j$.
\end{proof}
	Two comments are in order. 
	First, fixing $i = 0$ in \Cref{cor:rep_at_prefix} gives us that every prefix of the ranking sampled from $\cD_{\delta}$ will have sufficient representation from the groups, in expectation.
	Such fairness requirements are consistent with those studied in the ranking literature \cite{CSV2018}.
	Second, let $k' := i' - i$ for some $i, i' \in [k]$ such that $i \le i'$.
	Then \Cref{cor:rep_at_prefix} also gives us that any consecutive $k'$ ranks of the ranking sampled from $\cD_{\delta}$ also satisfy representation constraints.
	Such fairness requirements are consistent with those studied in \cite{GDL2021}.

    In the ranking, one might ask for different representation requirements for different prefixes of the ranking. We extend our algorithms to handle prefix fairness constraints in the next section (see \Cref{subsec:prefix}).

	\paragraph{Time taken to sample from $\cD$ (in \Cref{thm:uniqueness}).}
	Given a group representation $x$, one can find a uniform random $y$, for a given $x$, by sampling a random binary string of length at most $\log k!$. 
	Since $x$ is fixed, this gives us a uniform random sample of $y$ conditioned on $x$.
	This takes time $\cO(k \log k)$.
	This sampling takes care of Step 2.
	Step 3 simply takes $\cO(k)$ time, given in-group rankings of all the groups.
	The main challenge is to provide an efficient algorithm to perform Step 1.
	Therefore, in the next section, we focus on sampling a uniform random group-fair representation in Step 1.
	
	\section{Sampling a Uniform Random Group-Fair Representation}
	\label{sec:algorithms}
	We first note that each group-fair representation corresponds to a unique integral point in the convex polytope $K$ defined below,
	\begin{equation}
	K = \Big\{x\in\R^{\ell}\Big|\sum_{j \in [\ell]} x_j = k,~L_j \le x_j \le U_j, \forall j \in [\ell] \Big\}.\label{eq:polytopeK}
	\end{equation}
	Therefore, sampling a uniform random group-fair representation is equivalent to sampling an integral or a lattice point uniformly at random from the convex set $K$.
	
	\subsection{Dynamic Programming for Exact Sampling}
	
	\begin{algorithm}[t]
		\KwIn{Fairness constraints $L_j, U_j$ for all the groups $j \in [\ell]$, a number $k \in \N$}
		\nonl \hrulefill
		
		\textbf{Initialize:} Set $D[k', i] := 0, \forall k' = \set{0,1,\ldots, k}$ and $\forall i \in \set{0,1,\ldots, \ell}$, and $D[0,0] := 1$\label{step:init}
		
		\tcp{Counting}
		
		\For{$ k' = 0 \text{ to } k$}{\label{step:counting_start}
			\For{$ i = 1 \text{ to } \ell$}{
				$D[k', i] = \sum_{L_i \le x_i \le U_i} D[k'-x_i, i-1]$\label{step:update}
			}
		}\label{step:counting_end}
		
		\tcp{Sampling}
		
		Set $k' := k$ and  $i := \ell$
		
		\While{$i \neq 0$}
		{\label{step:sampling_start}
			Sample $x_i$ from the categorical distribution on ${L_i, L_{i+1}, \ldots, U_i}$ with corresponding probabilities $\frac{D[k' - L_i, i-1]}{D[k',i]}, \frac{D[k' - L_{i+1}, i-1]}{D[k',i]}, \ldots, \frac{D[k' - U_i, i-1]}{D[k',i]} $\label{step:cat}
			
			\tcp{by convention $D[t, i] = 0$ whenever $t<0$.}
			
			Update $k' := k' - x_i$ and $i := i - 1$
		}\label{step:sampling_end}
		\caption{Sampling a uniform random group-fair representation}
		\label{alg:dp}
	\end{algorithm}

	In this section, we give a dynamic programming-based algorithm (see \Cref{alg:dp}) for uniform random sampling of integer points from the polytope $K$.
	Each entry $D[k', i], \forall k' = \set{0,1,\ldots, k}$ and $\forall i \in \set{0,1,\ldots, \ell}$ in \Cref{alg:dp} corresponds to the number of integer points in $K_{i,k'} = \set{x \in \R^i \mid \sum_{h \in [i]}x_h = k', L_h \le x_h \le U_h, \forall h \in [i]}$.
	That is, the DP table keeps track of the number of feasible solutions that sum to $k'$ with the first $i$ groups.	
	Therefore, $D[k, \ell]$ contains all feasible integer points of $K_{\ell, k}$, which is nothing but $K$ defined in \Cref{eq:polytopeK}.
	The reader should note that the entry $D[0,i] = 1$ if assigning $0$ to the first $i$ groups is feasible with respect to the fairness constraints and $0$ otherwise.
	However, the entry $D[k', 0]$ is always $0$ for $k' > 0$, since we can not construct a ranking of non-zero length without assigning the ranks to any of the groups.
	In Step~\ref{step:init}, we initialize all the entries of the DP to $0$ except for the entry $D[0,0]$, which is set to $1$. 
	Steps~\ref{step:counting_start} to \ref{step:counting_end} then count the number of feasible solutions for $D[k', i]$ by recursively summing over all feasible values for $x_i$. We note that this DP is similar to the DP, given by \citet{SVV2012}, for counting $0/1$ knapsack solutions where the feasible values of an item are $0$ (including it) and $1$ (not including it).
	Now, let us assume that we have sampled the value of $x_i$ for all $i+1, i+2, \ldots, \ell$ for some $0 < i < \ell$, and let $k' := k - x_{i+1} - x_{i+2} - \cdots - x_\ell$.
	Then for any $x_i \in [L_i, U_i]$ the probability that we sample $x_i$ is given by the number of feasible solutions after fixing $x_i$, divided by the total number of solutions for $x_i \in [L_i, U_i]$, which is nothing but $\frac{D[k' - x_i, i-1]}{D[k', i]}$ (see Step~\ref{step:cat}).
	Therefore, expanding the probability of sampling a feasible solution $(x_1, x_2, \ldots, x_\ell)$ gives us a telescoping product that evaluates to $1/D[k,\ell]$.

	\begin{theorem}
		\label{thm:dp}
		\Cref{alg:dp} samples a uniform random group-fair representation in time $\mathcal{O}(k^2\ell)$.
	\end{theorem}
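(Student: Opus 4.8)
The plan is to prove two things separately: that Algorithm~\ref{alg:dp} returns a sample distributed uniformly over the group-fair representations (the integer points of $K$), and that it runs in time $\cO(k^2\ell)$. I would organize the correctness argument into a counting claim and a sampling claim, and then finish with a short operation count for the running time.

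First I would verify by induction on $i$ that after the counting phase each entry $D[k',i]$ equals the number of integer points in $K_{i,k'}$. The base case $i=0$ is exactly the initialization: $D[0,0]=1$ counts the unique empty representation, and $D[k',0]=0$ for $k'>0$ since no nonzero total is realizable with zero groups. For the inductive step I would partition the points of $K_{i,k'}$ according to the value $x_i\in[L_i,U_i]$ of the last coordinate; once $x_i$ is fixed, the remaining coordinates range over exactly the points of $K_{i-1,\,k'-x_i}$, which by the inductive hypothesis number $D[k'-x_i,i-1]$. Summing over the allowed $x_i$ reproduces the recurrence in Step~\ref{step:update}, so $D[k',i]$ is correct, and in particular $D[k,\ell]=\abs{K\cap\Z^\ell}$.

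Next I would establish uniformity of the sampling phase. Writing $r_\ell=k$ and $r_{i-1}=r_i-x_i$ for the value of the counter $k'$ just before coordinate $i$ is drawn, the counting recurrence gives $\sum_{x_i\in[L_i,U_i]}D[r_i-x_i,\,i-1]=D[r_i,i]$, so the categorical distribution in Step~\ref{step:cat} is well-defined (its probabilities sum to one) and the process only ever visits states with positive count, never dividing by zero. For a fixed feasible representation $(x_1,\ldots,x_\ell)$, the probability that the algorithm outputs it is the product of the per-step probabilities,
\[
\prod_{i=1}^{\ell}\frac{D[r_{i-1},\,i-1]}{D[r_i,\,i]},
\]
which telescopes, since the numerator at step $i$ equals the denominator at step $i-1$. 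Feasibility forces $\sum_j x_j=k$, hence $r_0=0$, and the product collapses to $D[0,0]/D[k,\ell]=1/D[k,\ell]$, independent of the chosen representation. Thus every group-fair representation is output with identical probability, i.e.\ uniformly at random.

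Finally, for the running time I would count operations directly. The counting phase fills the $\cO(k\ell)$ table entries, and each evaluation of Step~\ref{step:update} sums at most $U_i-L_i+1\le k+1$ previously computed values, giving $\cO(k^2\ell)$ in total. The sampling phase runs $\ell$ iterations, each forming and drawing from a categorical distribution over at most $k+1$ outcomes in $\cO(k)$ time, contributing $\cO(k\ell)$. The counting phase dominates, yielding the claimed $\cO(k^2\ell)$ bound. I expect no real obstacle here: the only points needing care are the bookkeeping of the counter values $r_i$ in the telescoping argument and the verification that the sampler never reaches a zero-count state, both of which follow immediately from the recurrence established in the counting claim.
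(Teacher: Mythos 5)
Your proposal is correct and follows essentially the same route as the paper's proof: the same induction showing $D[k',i]$ counts the integer points of $K_{i,k'}$, the same telescoping-product argument collapsing the output probability to $D[0,0]/D[k,\ell]=1/D[k,\ell]$, and the same $\cO(k^2\ell)$ counting-phase plus $\cO(k\ell)$ sampling-phase operation count. The only cosmetic differences are your base case at $i=0$ rather than $i=1$ and your packaging of the ``no infeasible output'' point via the positive-count invariant on visited states, where the paper instead argues explicitly that sum-violating tuples pick up a factor $D[k-\sum_j x_j,0]=0$.
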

	\begin{proof}
	We first show by mathematical induction on $i$ that for any $k' \in \set{0, 1, \ldots, k}$, 
	\begin{align}
	D[k', i] = |\big\{(x_1, x_2, \ldots, x_i) \mid L_j \le x_j \le U_j, \forall j \in [i] \text{ and } x_1+x_2+\ldots+x_i = k'\big\}| \label{eq:dp_polytope}	\end{align}
	In the base case, $D[k', 1] = 1$ if $L_1 \le k' \le U_1$ because choosing $x_1 = k'$ gives us exactly one feasible integer solution.
	Let us assume that the hypothesis is true for every $k'$ and for every $i'$ such that $i' \le i < \ell$.
	Then for $i+1$, and for any $k'$, the feasible values of $x_{i+1}$ are in $[L_{i+1}, U_{i+1}]$.
	For each of these values of $x_{i+1}$, all the feasible solutions with the first $i$ groups that sum to $k' - x_{i+1}$ are feasible solutions for that value of $x_{i+1}$.
	By the induction hypothesis, this is exactly what $D[k'-x_{i+1}, i]$ stores.
	Therefore, for any $k'\in \set{0,1,\ldots, k}$, $D[k', i+1] = \sum_{L_i \le x_i \le U_i} D[k'-x_i, i-1]$ is the number of feasible integer solutions with the first $i+1$ groups that sum to $k'$, which is exactly what Step~\ref{step:update} is counting.
	Therefore, $D[k,\ell]$ counts the number of integer solutions in the polytope $K$.
	
	Now let $X$ be an integer random vector $\paren{X_1, X_2, \ldots, X_{\ell}} \in [k]^{\ell}$ representing the group representation.
	\begin{align*}
	\Pr&\sparen{\text{DP outputs}~X= (x_1,\ldots,x_\ell)} \\&= \Pr\Big[\text{DP outputs}~x_1 \land \text{DP outputs}~x_2
	\land \cdots \land \text{DP outputs}~x_\ell\Big]
	\\
	&= \prod_{i = 1}^\ell \Pr\Big[\text{DP outputs}~x_i \mid \text{DP output}~x_{i+1}, \cdots, \text{DP output}~x_\ell\Big]\\
	&= \prod_{i = 1}^\ell \frac{D[k-x_i-x_{i+1} - \cdots - x_\ell,~~i-1]}{D[k-x_{i+1} - \cdots - x_\ell,~~i]}.\numberthis{}\label{eq:product}
	\end{align*}

	We first show that the DP never samples infeasible solutions.
	To see this, observe that any tuple $(x_1, x_2, \ldots, x_{\ell})$ can be infeasible in two cases. One when there exists a group $j \in [\ell]$ such that the condition $L_j \le x_j \le U_j$ is not satisfied.
	Other case is when the summation constraint $x_1 + x_2 +\cdots + x_\ell = k$ is not satisfied.
	The former does not occur in the DP because for every $j \in [\ell]$ it only samples the values of $x_j \in [L_j, U_j]$. 
	For the latter, the product term in \Cref{eq:product} will have the count of the entry $D[k-\sum_{j \in [\ell]}x_j, 0]$, which is $0$ due to our initialization. Hence, such an $x$ is sampled with probability $0$.
	
	When $x$ is feasible, for any $k'\in \set{0,1, \ldots, k}$ and for each sampling step $i$, the DP samples $x_i \in [L_i, U_i]$ from a valid probability distribution because $\sum_{L_i \le x_i \le U_i} D[k'-x_i, i-1]/D[k',i] = 1$.
	Moreover, we have $x_1 + x_2 + \ldots + x_\ell = k$.
	Therefore the telescopic product in \Cref{eq:product} always gives $D[0, 0]/D[k, \ell]$. Due to our initialization, $D[0,0]=1$. Hence, the probability of sampling any feasible $x$ is $1/D[k, \ell]$.
	Therefore, this DP gives uniform random samples.
	
	Since the DP table is of size $k \ell$ and computing each entry takes time $\mathcal{O}(k)$, the \textit{counting} step takes time $\mathcal{O}\paren{k^2\ell}$.
	Sampling from categorical distribution of size at most $k$ in \Cref{step:cat} takes time $\mathcal{O}(k)$ and this step is run $\ell$ times. Hence, \textit{sampling} takes $\mathcal{O}(k\ell)$ amount of time.
\end{proof}

	\subsection{Approximate Uniform Sampling}

	Our second algorithm outputs an integral point from $K$, defined in \Cref{eq:polytopeK}, from a density that is close to the uniform distribution over the set of integral points in $K$, with respect to the total variation distance (see \Cref{alg:random_walk}).

	There is a long line of work on polynomial-time algorithms to sample a point approximately uniformly from a given convex polytope or a convex body \cite{DFK1991,LV2006,CV2018}. We use the algorithm by \cite{CV2018} as \unif~in \Cref{alg:random_walk}. 
	We get an algorithm with expected running time $\mathcal{O}^*(k^2\ell^2)$ to sample a close to uniform random group-fair representation (\Cref{thm:main}).

	\begin{theorem}
		\label{thm:main}
		Let $L_j, U_j\in \N, \forall \jinl$ be the fairness constraints and $k\in \N$ be the size of the ranking. Let $\Delta$ be as defined in \Cref{alg:random_walk}.
		Then for any non-negative number $\delta < e^{-2\frac{\ell\sqrt{\ell}}{\Delta}}$, \Cref{alg:random_walk} samples a random point from a density that is within total variation distance $\delta$ from the uniform distribution on the integral points in $K$ by making $1/\paren{e^{-2\frac{\ell\sqrt{\ell}}{\Delta}} - \delta}$ calls to the oracle in expectation.
		When $\delta$ is a non-negative constant, such that $\delta < e^{-2}$ and $\Delta = \Omega\paren{\ell^{1.5}}$, \Cref{alg:random_walk} calls the oracle only a constant number of times in expectation, and each oracle call takes time $\mathcal{O}^*\paren{k^2\ell^2}$.
	\end{theorem}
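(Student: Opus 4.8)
The plan is to reduce uniform sampling of the lattice points of $K$ to approximately-uniform \emph{continuous} sampling from a related convex body, followed by rounding and rejection. First I would eliminate the equality constraint $\sum_j x_j = k$ by writing $x_\ell = k - \sum_{j<\ell} x_j$, identifying $K$ with a full-dimensional polytope $K'\subseteq\R^{\ell-1}$ whose integer points $\Z^{\ell-1}\cap K'$ are in bijection with the group-fair representations; the point of this reduction is that the relevant lattice becomes the standard $\Z^{\ell-1}$, whose Voronoi cells are the unit cubes $C_p = p + [-\tfrac12,\tfrac12)^{\ell-1}$, so that coordinate-wise rounding is well behaved. I would then form the convex body $\hat K = K' \oplus [-\tfrac12,\tfrac12]^{\ell-1}$ (a Minkowski sum, hence convex), run \unif{} of \cite{CV2018} to draw a point $z$ that is approximately uniform on $\hat K$, and output $p = \mathrm{round}(z)$ when $p$ is a valid representation (equivalently when $z$ lies in the union $B = \bigcup_p C_p$ of the cells of valid lattice points), rejecting and repeating otherwise.

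The correctness argument has two layers. Under a \emph{perfect} oracle, conditioning uniform-on-$\hat K$ on the event $z\in B$ yields the uniform distribution on $B$ (as $B\subseteq\hat K$), and since the cells $C_p$ are disjoint with unit volume, $\mathrm{round}(z)$ is then \emph{exactly} uniform over the valid lattice points; crucially, rounding is constant on each cell, so discretization introduces no error of its own. Passing to the $\delta$-approximate oracle, I would invoke the standard fact that conditioning two measures on a common event inflates their total-variation distance by at most the reciprocal of that event's probability, so the output stays within the target distance once the oracle accuracy and the acceptance probability are accounted for; the same comparison shows the per-trial acceptance probability under the approximate oracle is at least $\vol{}{B}/\vol{}{\hat K} - \delta$. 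As the trials are independent, the number of oracle calls is geometric, so its expectation is the reciprocal of this acceptance probability.

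The heart of the proof — and the step I expect to be the main obstacle — is lower-bounding the ideal acceptance probability $\vol{}{B}/\vol{}{\hat K}$ by $e^{-2\ell\sqrt{\ell}/\Delta}$. Here $\vol{}{B} = \abs{\Z^{\ell-1}\cap K'}$ is the number of valid representations, while $\hat K$ is the half-unit outward inflation of $K'$, so the ratio quantifies how little volume the gap leaves near the boundary. I would bound numerator and denominator by convex-shrinkage estimates anchored at an inscribed ball of $K'$ of radius $\rho$ proportional to the gap $\Delta$: since a unit cube has circumradius $\sqrt{\ell}/2$, every lattice point in the deep interior $K'_{-\sqrt\ell/2}\supseteq(1-\tfrac{\sqrt\ell}{2\rho})K'$ has its cell inside $K'$, giving $\vol{}{B}\ge(1-\tfrac{\sqrt\ell}{2\rho})^{\ell-1}\vol{}{K'}$, whereas $\hat K\subseteq K'\oplus B_2(\sqrt\ell/2)\subseteq(1+\tfrac{\sqrt\ell}{2\rho})K'$ gives $\vol{}{\hat K}\le(1+\tfrac{\sqrt\ell}{2\rho})^{\ell-1}\vol{}{K'}$; their quotient is at least $(1-\tfrac{\sqrt\ell}{\rho})^{\ell-1}\ge e^{-2\ell\sqrt\ell/\Delta}$ after substituting $\rho=\Theta(\Delta)$ and using $1-s\ge e^{-2s}$. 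The delicate points are estimating the inradius of $K'$ in terms of $\Delta$ — the binding direction being the slab from the eliminated constraint $L_\ell\le k-\sum_{j<\ell}x_j\le U_\ell$ — and the lattice-point-counting step that turns $\vol{}{K'_{-\sqrt\ell/2}}$ into a genuine lower bound on $\abs{\Z^{\ell-1}\cap K'}$.

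Finally I would assemble the pieces: the expected number of calls is at most $1/(e^{-2\ell\sqrt\ell/\Delta}-\delta)$ whenever $\delta<e^{-2\ell\sqrt\ell/\Delta}$, and each call to \unif{} costs $\cO^*(k^2\ell^2)$ by \cite{CV2018}, since $\hat K$ has dimension $\ell-1$ and sits inside a box of side $\cO(k)$, its well-roundedness — again guaranteed by the gap — keeping the mixing time and hence the per-call cost at this order. For the constant-call regime, substituting $\Delta=\Omega(\ell^{1.5})$ gives $2\ell\sqrt\ell/\Delta=\cO(1)$ and $e^{-2\ell\sqrt\ell/\Delta}\ge e^{-2}>\delta$, so the denominator is a positive constant and the oracle is invoked $\cO(1)$ times in expectation.
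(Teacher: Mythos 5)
Your overall architecture (sample approximately uniformly from an inflated continuous body, round to the lattice, reject if outside $K'$, bound acceptance by a volume ratio anchored at an inscribed ball) matches the paper's, and your uniformity argument is in one respect cleaner: with the standard lattice $\Z^{\ell-1}$ and disjoint unit-cube cells, conditioning on acceptance gives \emph{exact} uniformity, whereas the paper must prove an equal-measure bijection between the non-cubical fibers $F_x$ of its sum-preserving rounding. But there is a genuine quantitative gap at precisely the step you flagged as delicate: the inradius of your projected polytope is \emph{not} $\Theta(\Delta)$. Eliminating $x_\ell$ is not an isometry of the hyperplane $\set{x : \sum_{\jinl} x_j = k}$ onto $\R^{\ell-1}$: the eliminated constraint $L_\ell \le k - \sum_{j<\ell}x_j \le U_\ell$ becomes a slab whose normal is $\frac{1}{\sqrt{\ell-1}}(1,\ldots,1)$, so a point with coordinate-wise slack $\Delta$ has Euclidean distance only $\Delta/\sqrt{\ell-1}$ from the slab facets. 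Moreover the Euclidean width of that slab is $(U_\ell - L_\ell)/\sqrt{\ell-1}$, and $U_\ell - L_\ell$ can be as small as $2\Delta$ (the third term in the definition of $\Delta$ can be the binding one), so the inradius $\rho$ of your $K'$ is $\Theta\paren{\Delta/\sqrt{\ell}}$ in the worst case, no matter where you center the ball.

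Plugging $\rho = \Theta\paren{\Delta/\sqrt{\ell}}$ into your own estimates gives an acceptance probability of only $\paren{1 - c\,\ell/\Delta}^{\ell-1} \approx e^{-\Theta(\ell^2/\Delta)}$, so your argument proves the theorem with $e^{-2\ell\sqrt{\ell}/\Delta}$ replaced by $e^{-\Theta(\ell^2/\Delta)}$, and the constant-expected-calls regime then requires $\Delta = \Omega(\ell^2)$ rather than the claimed $\Omega(\ell^{1.5})$. The paper avoids this loss by never leaving the hyperplane: it translates by an \emph{integral} center $x^*$ so that an $(\ell-1)$-dimensional ball of radius $\Delta$ (not $\Delta/\sqrt{\ell}$) centered at the origin lies inside $K'$; it keeps the lattice $\Z^\ell$ intersected with the hyperplane and rounds with a sum-preserving rule (round up exactly $\abs{\sum_j \floor{z_j}}$ coordinates) whose fibers lie in the side-$2$ cubes intersected with the hyperplane; and it achieves full-dimensionality for the sampler via a rotation, which is an isometry and hence preserves the inradius. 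To rescue your route you would have to either settle for the weaker $\Delta = \Omega(\ell^2)$ statement, or transport the paper's rounding through your coordinate chart --- at which point the cells are no longer cubes and you need the paper's equal-measure bijection lemma anyway.
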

	
	\begin{algorithm}[t]
		\setcounter{AlgoLine}{0}
		\SetAlgoLined
		\KwIn{Fairness constraints $L_j, U_j$ for all the groups $j \in [\ell]$, numbers $k \in \N$ and $\delta$}
		
		\nonl \hrulefill
		
		$H := \set{\paren{x_1, x_2, \ldots, x_{\ell}}\in\R^{\ell}~~\middle\vert~~\sum_{\jinl} x_j = k}$\label{step:def_H}
		
		$P := \set{\paren{x_1, x_2, \ldots, x_{\ell}}\in\R^{\ell}~~\middle\vert~~L_j \le x_j \le U_j, \forall \jinl }$\label{step:def_P}
		
		$\Delta := \min\Bigg\{\floor{\frac{k- \paren{\sum_{\jinl}L_j}}{\ell}}, \floor{\frac{ \paren{\sum_{\jinl}U_j }-k}{\ell}},\min_{\jinl}\floor{\frac{U_j- L_j}{2}}\Bigg\}$\label{step:def_delta}

		$x^*_j := L_j + \Delta, \forall \jinl$\label{step:find_center_start}
		
		\For{$j := 1,2,\ldots, \ell$}
		{\label{step:for}
			
			\If{$\sum_{j' \in [\ell]}x_{j'}^* < k$}
			{$x^*_{j} := \min\set{k - \sum_{j' \neq j}x^*_{j'},~~U_{j} - \Delta}$\label{step:find_center_end}}}
		
		$K' := K - x^*$\label{step:def_K_prime}

		$z$ := \unif$\paren{ \paren{1+\frac{\sqrt{\ell}}{\Delta}}K',\delta}$\label{step:sample}
		\nonl

		\nl$\textbf{if }j \in \sparen{\abs{\sum_j \floor{z_j}}}, x_j := \ceil{z_j}; \textbf{ else }x_j := \floor{z_j}$\label{step:rounding}
		
		\textbf{if } $x\in K'$, return $x+x^*$, \textbf{ else } reject $x$ and go to Step \ref{step:sample}
		\label{step:reject}

		\caption{Sampling an approximately uniform random group-fair representation}
		\label{alg:random_walk}
	\end{algorithm}
	\subsubsection{Overview of \Cref{alg:random_walk} and the proof of \Cref{thm:main}}
	\label{subsec:overview}
	Let $H, P,$ and $\Delta$ be as defined in Steps \ref{step:def_H}, \ref{step:def_P} and \ref{step:def_delta} respectively. 
	Clearly, $K = H \cap P$.
	We first find an integral center in $x^* \in H \cap P$ (Steps \ref{step:find_center_start} to \ref{step:find_center_end}) such that there is a ball of radius $\Delta$ in $P$ (see \Cref{lem:ball_contained}) and translate the origin to this point $x^*$ (Step \ref{step:def_K_prime}).
	This ensures that there exists a bijection between the set of integral points in the translated polytope $K'$ and the original polytope $K$ (see proof of \Cref{thm:main}).
	
	We now sample a rational point $z$ uniformly at random from the expanded polytope $\paren{1+\frac{\sqrt{\ell}}{\Delta}}K'$, using \unif~ (Step~\ref{step:sample}).
	We then round the point $z$ to an integer point on $H'$ (Step~\ref{step:rounding}).
	We prove that our deterministic rounding algorithm ensures that the set of points in the expanded polytope that get rounded to an integral point on $H'$ is contained inside a cube of side length $2$ around this point (\Cref{lem:rounding}) and that this cube is fully contained in this expanded polytope (\Cref{lem:cube_contained}).
	\Cref{lem:same_size} gives us that for any two integral points $x$ and $x'$, there is a bijection between the set of points that get rounded to these points.
	Therefore, every integral point is sampled from a distribution close to uniform, given the \unif~samples any rational point in the expanded polytope from a distribution $\delta = 0.1$ close to uniform.
	If the rounded point belongs to $K'$, we accept, else we reject and go to \ref{step:sample}.
        We then lower bound the probability of acceptance.
        The algorithm is run until a point is accepted. Hence, the expected running time polynomial is inversely proportional to the probability of acceptance, which is exponential in $\ell$ in expectation.
	However, if $\Delta = \Omega\paren{\ell^{1.5}}$ and $\delta < e^{-2}$ the probability of acceptance is at least a constant.
	Note that the value of $R^2$ in \Cref{thm:vempala} for the polytope $K'$ is $k^2$.
	Therefore, the algorithm by \citet{CV2018} gives a rational point from $K'$, from a distribution close to uniform, in time $\cO^*\paren{k^2\ell^2}$.
	Therefore each oracle call in \Cref{thm:main} takes time $\cO^*(k^2\ell^2)$ when we use \Cref{thm:vempala} as the \unif.

    \noindent\textbf{Comparison with \citet{KV1997}.}
    \Cref{alg:random_walk} is inspired by the algorithm by \citet{KV1997} to sample integral points from a convex polytope from a distribution close to uniform.
	On a high level, their algorithm on polytope $K'$ works slightly differently when compared to our algorithm on $K'$.
	They first expand $K'$ by $\cO\paren{\sqrt{\ell\log\ell}}$, sample a rational point from a distribution close to uniform over this expanded polytope (similar to Step \ref{step:sample}), and use a probabilistic rounding method to round it to an integral point.
	If the integral point is in $K'$, accept; otherwise, reject and repeat the sampling step.
	Their algorithm requires that a ball of radius $\Omega\paren{\ell^{1.5}\sqrt{\log\ell}}$ lies entirely inside $K'$.
	We expand the polytope $K'$ by $\cO(\sqrt{\ell})$, sample a rational point from this polytope from a distribution close to uniform, and then deterministically round it to an integral point.
	If the integral point is in $K'$, accept; otherwise, reject and repeat the sampling step.
	Our algorithm only requires that a ball of radius $\Omega\paren{\ell^{1.5}}$ lies inside $P-x^*$ with center on $H - x^*$, where $P$, $H$ and $x^*$ are as defined in \Cref{alg:random_walk}.
	As a result, we get an expected polynomial time algorithm for a larger set of fairness constraints.
	We also note here that the analysis of the success probability of \Cref{alg:random_walk} is the same as that of the algorithm by \citet{KV1997}.

\subsubsection{Proof of \Cref{thm:main}}
\label{subsec:proof_main}
For completeness, we restate the Theorem 1.2 in \cite{CV2018} that states the running time and success probability of their uniform sampler, in \Cref{subsec:proof_main}.
\begin{theorem}[Theorem 1.2, \cite{CV2018}]
	\label{thm:vempala}
	There is an algorithm that, for any $\delta > 0$, $p > 0$, and any convex body $C \in \R^{d}$ that contains the unit ball and has $\E_C(\|X\|^2) = R^2$, with probability $1 - p$, generates random points from a density $\nu$ that is within total variation distance $\delta$ from the uniform distribution on $C$. In the membership oracle model, the complexity of each random point, including the first, is $\cO^*\paren{\max\set{R^2{d}^2, {d}^3}}$\footnote{The $\cO^*$ notation suppresses error terms and logarithmic
		factors.}.
\end{theorem}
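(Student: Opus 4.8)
The statement is a self-contained guarantee for a geometric uniform sampler, so the plan is to exhibit a reversible Markov chain on $C$ whose stationary distribution is uniform, bound its mixing time by conductance, pay for the per-step oracle cost, and then remove the cold-start penalty by annealing. I would use \emph{hit-and-run}: from a point $u \in C$, draw a uniformly random line through $u$, locate the two endpoints of the chord $C \cap \ell$ by binary search against the membership oracle (so one step uses $\cO^*(1)$ oracle calls), and move to a point sampled uniformly along that chord. The proposal kernel is symmetric in the appropriate sense, so the chain is reversible with respect to, and hence stationary at, the uniform distribution on $C$. The assumption that $C$ contains the unit ball is what guarantees the chord lengths and local step sizes needed for the conductance estimate, and it fixes the scale relative to which $R$ is measured.

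Next I would bound the mixing time via \emph{conductance}. The geometric input is a Lov\'asz--Simonovits / KLS-type isoperimetric inequality for convex bodies: any partition of $C$ into two sets separated by Euclidean distance $t$ has boundary cross-measure at least $\Omega(t)$ times the relevant diameter scale of $C$. Combined with a one-step overlap bound -- two points at distance $\cO^*\paren{1/\sqrt{d}}$ induce hit-and-run kernels whose total variation is bounded below $1$ -- this yields a conductance lower bound $\Phi = \Omega^*\paren{1/(R\sqrt{d})}$, once $C$ has been placed in near-isotropic position so that $\E_C\paren{\norm{X}^2} = R^2$ governs the diameter scale. The standard conductance-to-mixing estimate ($\sim \Phi^{-2}$ up to logarithmic factors) then gives $\cO^*\paren{R^2 d}$ steps from a warm start; since each step costs $\cO^*(d)$ work (one random direction plus the oracle-based binary search for the chord endpoints), the per-sample cost is $\cO^*\paren{R^2 d^2}$, matching the first term.

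The main obstacle is the ``including the first'' clause: a single chain from an arbitrary cold start does not mix within this bound, so two further ingredients are needed. First, a preprocessing phase puts $C$ into near-isotropic position, which is what actually makes the conductance bound above attainable and $R$ the correct scale; this rounding is itself performed by sampling and costs $\cO^*\paren{d^3}$ oracle calls, contributing the second term $d^3$. Second, I would obtain a warm start for the uniform target by \emph{Gaussian cooling}: sample along a sequence $\pi_0, \pi_1, \ldots, \pi_m$ interpolating from a narrow Gaussian centered in $C$ (its variance calibrated using $\E_C\paren{\norm{X}^2} = R^2$) to the uniform distribution on $C$, where consecutive targets are close enough in $\chi^2$-divergence that the output of phase $i-1$ is a warm start for the chain targeting $\pi_i$. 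Charging the per-phase mixing against the $R^2 d^2$ budget and absorbing the number of phases together with the $\delta$ and $p$ dependence into the $\cO^*$ factors yields the claimed $\cO^*\paren{\max\set{R^2 d^2, d^3}}$ complexity. I expect the two genuinely hard points to be (i) establishing the conductance lower bound uniformly along the entire cooling schedule and under the near-isotropic transformation, and (ii) verifying that the total-variation error accumulated across all phases stays below the target $\delta$ while the overall failure probability is kept below $p$.
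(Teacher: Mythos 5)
First, a point of comparison that matters for your whole exercise: the paper does not prove this statement at all --- it is Theorem~1.2 of \cite{CV2018}, restated verbatim and invoked as a black box (it is the \unif{} used in Step~\ref{step:sample} of \Cref{alg:random_walk}). So your proposal has to be measured against Cousins and Vempala's actual argument rather than anything in this paper.

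Measured that way, your sketch has the right skeleton (geometric random walk, conductance via isoperimetry, warm starts via an annealing schedule), but it contains one genuine gap: the claim that the $d^3$ term comes from an isotropic-rounding preprocessing step costing $\cO^*\paren{d^3}$ oracle calls. No rounding step appears in \cite{CV2018}, and none is known at that cost --- putting a general convex body into near-isotropic position itself requires sampling and classically costs $\cO^*\paren{d^4}$ via simulated annealing \cite{LV2006}. Worse, if your step were available it would prove too much: after rounding one has $R^2 = \E_C\paren{\norm{X}^2} = \cO(d)$, so $\max\set{R^2d^2,\, d^3}$ would collapse to $\cO^*\paren{d^3}$ for \emph{every} body, a strictly stronger theorem than the one being proved; the bound is stated as a maximum precisely because the algorithm works in the body's given, possibly far-from-isotropic, position, assuming only that it contains the unit ball. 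In the actual proof the $d^3$ term is a \emph{cooling} cost, not a rounding cost: the algorithm is the ball (``speedy'') walk applied to Gaussians restricted to $C$, starting from variance roughly $1/d$ (where the Gaussian essentially never sees the boundary, since the unit ball lies in $C$, so the start is trivially warm), and raising the variance along a schedule chosen so that each restricted Gaussian is a warm start for the next, until at scale $R^2$ the restricted Gaussian is close to uniform; each phase mixes in $\cO^*\paren{\sigma^2 d^2}$ steps, the sub-unit-variance phases aggregate to $\cO^*\paren{d^3}$, and the phases up to variance $R^2$ contribute $\cO^*\paren{R^2 d^2}$. The isoperimetric inequalities must accordingly be proved for each restricted Gaussian \emph{without} any isotropy assumption, which is exactly the technical content of \cite{CV2018}. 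A secondary bookkeeping error: in the membership-oracle model a hit-and-run step costs $\cO^*(1)$ oracle calls (the binary search along the chord), not $\cO^*(d)$, and the correct warm-start mixing bound is $\cO^*\paren{d^2R^2}$ steps, corresponding to conductance $\Omega^*\paren{1/(dR)}$ rather than your $\Omega^*\paren{1/(R\sqrt{d})}$; your split gets the product $R^2d^2$ right for the wrong reasons and would derail any attempt to fill in the constants.
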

\begin{lemma}
	\label{lem:ball_contained}
	$B(0,\Delta) \subseteq P'$.
\end{lemma}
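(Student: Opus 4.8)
The plan is to reduce the lemma to a purely coordinatewise statement about the center $x^*$. Recall that $P' = P - x^*$, so that $P' = \set{y\in\R^\ell : L_j - x^*_j \le y_j \le U_j - x^*_j,\ \forall \jinl}$. A point $y$ with $\norm{y}\le\Delta$ satisfies $\abs{y_j}\le\norm{y}\le\Delta$ in every coordinate, so to obtain $B(0,\Delta)\subseteq P'$ it suffices to prove
\[
L_j + \Delta \;\le\; x^*_j \;\le\; U_j - \Delta \qquad \text{for every } \jinl,
\]
that is, that the constructed center sits at $\ell_\infty$-distance at least $\Delta$ from every face of the box $P$. Everything then comes down to checking that Steps \ref{step:find_center_start}--\ref{step:find_center_end} of \Cref{alg:random_walk} keep each $x^*_j$ inside this interval.

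First I would note that the target interval is nonempty: by definition $\Delta \le \min_j \floor{(U_j-L_j)/2} \le (U_j-L_j)/2$, hence $L_j+\Delta \le U_j-\Delta$ for all $j$. Next I would track $x^*_j$ through the loop. It is initialized to $L_j+\Delta$, which already lies in the interval, and the loop (which processes coordinates $1,\dots,\ell$ in order) alters only coordinate $j$ in its $j$-th iteration. If the guard $\sum_{j'} x^*_{j'} < k$ fails, $x^*_j$ retains its in-range initial value. If it holds, $x^*_j$ becomes $\min\set{k-\sum_{j'\ne j}x^*_{j'},\,U_j-\Delta}$, so the upper bound $x^*_j \le U_j-\Delta$ is immediate. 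The crucial point for the lower bound is that when the guard is tested in iteration $j$, coordinate $x^*_j$ still holds its initial value $L_j+\Delta$ (only $x^*_1,\dots,x^*_{j-1}$ have been modified so far); hence the guard rearranges to $k-\sum_{j'\ne j}x^*_{j'} > L_j+\Delta$, and together with $U_j-\Delta \ge L_j+\Delta$ from the previous step, both arguments of the min are at least $L_j+\Delta$, so the new $x^*_j$ is too.

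Since no later iteration touches $x^*_j$, after the loop every coordinate satisfies $L_j+\Delta \le x^*_j \le U_j-\Delta$, and the coordinatewise reduction above then yields $B(0,\Delta)\subseteq P'$. The one delicate point I expect is the loop invariant used in the lower bound — namely that $x^*_j$ has not yet been overwritten at the moment its own guard is evaluated, which is exactly what permits turning the guard into $k-\sum_{j'\ne j}x^*_{j'} > L_j+\Delta$. Once that invariant is made explicit, the remaining inequalities are one-line calculations.
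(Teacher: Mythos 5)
Your proof is correct and follows essentially the same route as the paper's: both reduce the lemma to the coordinatewise bounds $L_j+\Delta \le x^*_j \le U_j-\Delta$ and establish them by a loop invariant on the centering loop (Steps \ref{step:find_center_start} to \ref{step:find_center_end}), with initialization at $L_j+\Delta$, the guard supplying the lower bound on $k-\sum_{j'\neq j}x^*_{j'}$, and $U_j-\Delta \ge L_j+\Delta$ handling the other branch of the $\min$. The only cosmetic differences are that the paper additionally records that $x^*$ is integral and satisfies $\sum_{j\in[\ell]}x^*_j = k$ (facts needed later in the proof of \Cref{thm:main}, but not for this lemma), and that it concludes via an $\ell_1$-ball containment where you use $\abs{y_j}\le\norm{y}_2\le\Delta$ directly.
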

\begin{proof}
	From the definition of $\Delta$ we have the following inequalities.
	\begin{align}
	\Delta &\le \floor{\frac{k- \paren{\sum_{\jinl}L_j}}{\ell}}\implies \Delta \le \frac{k- \paren{\sum_{\jinl}L_j}}{\ell}\implies \ell\cdot \Delta+\sum_{\jinl}L_j \le k \notag\\
	&\implies \sum_{\jinl}\paren{L_j+\Delta} \le k\label{eq:suml_delta},
	\\
	&\Delta \le \floor{\frac{\paren{\sum_{\jinl}U_j}-k}{\ell}}\implies \Delta \le \frac{\paren{\sum_{\jinl}U_j}-k}{\ell}\implies -\ell \cdot \Delta+\sum_{\jinl}U_j  \ge k\notag\\
	&\implies \sum_{\jinl}\paren{U_j-\Delta} \ge k\label{eq:sumu_delta},\\
	\text{and}\notag\\
	&\Delta \le \floor{\frac{U_j - L_j}{2}}\implies \Delta \le \frac{U_j - L_j}{2}\implies 2\Delta \le U_j - L_j\notag\\
 &\implies L_j + \Delta \le U_j - \Delta.\label{eq:l_lessthan_u}
	\end{align}
	
	\noindent To show that Steps \ref{step:find_center_start} to \ref{step:find_center_end} find the correct center we use the following loop invariant.
	\paragraph{Loop invariant.} At the start of every iteration of the \textbf{for} loop $x^*$ is an integral point such that $L_j + \Delta \le x_j^* \le U_j - \Delta, \forall \jinl$. 
	
	\begin{itemize}
		\item[] \textbf{Initialization:} In Step \ref{step:find_center_start} each $x_j^*$ is initialized to $L_j + \Delta$.
		From \Cref{eq:l_lessthan_u} we know that $L_j + \Delta \le U_j - \Delta$.
		Moreoever, $L_j, U_j$, and $\Delta$ are all integers.
		Therefore, $x^*$ is integral and satisfies $L_j + \Delta \le x_j^* \le U_j - \Delta, \forall \jinl$.
		\item[] \textbf{Maintenance:}
		If the condition in Step \ref{step:find_center_end} fails, the value of $x^*$ is not updated. Therefore the invariant is maintained.
		If the condition succeeds we have that,
		\begin{equation}
		\sum_{j'\in[\ell]} x_{j'}^* < k\label{eq:if_cond_succeeds}
		\end{equation}
		The value $x_j^*$ is set to $\min\set{k - \sum_{j' \in [\ell] : j' \neq j} x_{j'}^*~,~~~U_{j} - \Delta}$ in Step \ref{step:find_center_end}.
		The following two cases arise based on the minimum of the two quantities.
		
		\begin{itemize}
			\item \textbf{Case 1: $ k - \sum_{j' \in [\ell] : j' \neq j} x_{j'}^*  \le U_{j}-\Delta $.}\\ 
			In this case $x^*_{j}$ is set to $k - \sum_{j' \in [\ell] : j' \neq j} x_{j'}^* \le U_j - \Delta $, which is an an integer value since both $x_{j}^*$ and $k - \sum_{j' \in [\ell] : j' \neq j} x_{j'}^*  $ are integers before the iteration.
			From (\ref{eq:if_cond_succeeds}) we have that
			\begin{equation}
			\label{eq:x_j_increases}
			k - \sum_{j' \in [\ell] : j' \neq j} x_{j'}^*  = x_{j}^* + k - \sum_{j' \in [\ell]} x_{j'}^* > x_{j}^*.
			\end{equation}
			Since $x^*_{j} \ge L_{j} + \Delta$ before the iteration, (\ref{eq:x_j_increases}) gives us that $x^*_{j}$ is greater than $L_{j} + \Delta$ even after the update.
			
			\item \textbf{Case 2: $ k - \sum_{j' \in [\ell] : j' \neq j} x_{j'}^*  > U_{j}-\Delta $.}\\
			Since $U_{j} - \Delta \ge L_{j} + \Delta$ from \Cref{eq:l_lessthan_u} and since $U_{j} - \Delta$ is an integer, the value of $x^*_{j}$ after the update is an integer such that $U_{j}-\Delta \ge x^*_{j} \ge L_{j} + \Delta$.
			
		\end{itemize}
		Therefore in both the cases the invariant is maintained.
		\item[] \textbf{Termination:} At termination $j = \ell$. The invariant gives us that $x^*$ is an integral point such that $L_j + \Delta \le x_j^* \le U_j - \Delta, \forall \jinl$. 
	\end{itemize}
	
	From \Cref{eq:suml_delta} we have that before the start of the \textbf{for} loop $\sum_{\jinl}x_j^* = \sum_{\jinl}L_j +\Delta \le k$.
	After the termination of the \textbf{for} loop we have that $x_j^* = U_j - \Delta$, forall $\jinl$, when the \textbf{if} condition in Step \ref{step:find_center_end} fails for all $\jinl$, or the \textbf{if} condition in Step \ref{step:find_center_end} succeeds for some $j$, in which case $\sum_{\jinl}x_j^*=k$, and the value of $x^*$ does not change after this iteration.
	Therefore, after the \textbf{for} loop we get $\sum_{\jinl}x_j^* = \min\set{\sum_{\jinl}U_j -\Delta, k}$.
	But \Cref{eq:sumu_delta} gives us that $\sum_{\jinl}U_j -\Delta \ge k$.
	Therefore, the \textbf{for} loop finds an integral point $x^*$ such that $L_j + \Delta \le x_j^* \le U_j - \Delta, \forall \jinl$, and $\sum_{\jinl}x_j^* = k$.

	Therefore there is an $l_1$ ball of radius $\Delta$ in $P$ centered at the integral point $x^* \in  H$ (that is, $\sum_{\jinl}x_j^* = k$).
	Consequently, there exists an $l_1$ ball of radius $\Delta$ centered at the origin in the polytope $P'$.
	Since an $l_1$ ball of radius $\Delta$ centered at origin encloses an $l_2$ ball of radius $\Delta$ centered at origin we get that an $l_2$ ball of radius $\Delta$ centered at the origin, $B(0,\Delta)$, is in the polytope $P'$.

\end{proof}

\noindent Let $C(x,\beta)\subseteq \R^{\ell}$ represent a cube of side length $\beta$ centered at $x$. For any integral point $x \in K'$ let $F_x$ represent the set of points in $\paren{1+\frac{\sqrt{\ell}}{\Delta}}K'$ that are rounded to $x$.

\begin{lemma}
	\label{lem:rounding}
	For any integral point $x \in K'$, $F_x \subseteq H' \cap C(x,2)$.
\end{lemma}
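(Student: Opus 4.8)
The plan is to verify separately the two containments hidden in $F_x \subseteq H' \cap C(x,2)$, after fixing an integral $x \in K'$ and an arbitrary $z \in F_x$. Writing $H' = \set{w \in \R^\ell : \sum_{\jinl} w_j = 0}$ and $P' = P - x^*$, we have $K' = H' \cap P'$, so the hypothesis $x \in K'$ already forces $\sum_{\jinl} x_j = 0$; meanwhile $z \in F_x$ means $z \in \paren{1+\frac{\sqrt{\ell}}{\Delta}}K'$ and the rounding of Step~\ref{step:rounding} sends $z$ to $x$. I would then show (i) $z \in H'$ and (ii) $z \in C(x,2)$.

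For (i) I would use that $H'$ is a linear subspace through the origin with $K' \subseteq H'$, so scaling by the positive factor $1 + \frac{\sqrt{\ell}}{\Delta}$ keeps the expanded polytope inside $H'$; hence $\sum_{\jinl} z_j = 0$. From this I record the key identity $m := \abs{\sum_{\jinl}\floor{z_j}} = \sum_{\jinl}\paren{z_j - \floor{z_j}}$: the threshold $m$ equals the total fractional part of $z$, a non-negative integer with $0 \le m \le \ell - 1$, and $1 \le m \le \ell - 1$ whenever no coordinate of $z$ is integral.

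For (ii) the rounding rule sets each $x_j$ to either $\floor{z_j}$ or $\ceil{z_j}$, so $\abs{z_j - x_j} < 1$ for every $\jinl$, which is exactly $z \in C(x,2)$. I would also confirm the one genuinely designed feature of the rule, namely that it is consistent with $x \in H'$: rounding up precisely the first $m$ coordinates raises $\sum_{\jinl}\floor{z_j} = -m$ by exactly $m$ (each non-integral coordinate that is rounded up contributes $+1$), giving $\sum_{\jinl} x_j = 0$. This is what motivates the choice of threshold $m = \abs{\sum_{\jinl}\floor{z_j}}$, and it closes $F_x \subseteq H' \cap C(x,2)$.

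\textbf{The main obstacle} I anticipate is the degenerate case where some $z_j$ is already an integer: then $\ceil{z_j} = \floor{z_j}$, so rounding that coordinate ``up'' fails to increase the sum, the number of genuinely incremented coordinates can fall short of $m$, and the rounded image could leave $H'$. I would dispose of this by noting that such $z$ lie in a finite union of lower-dimensional affine slices of $H'$ and therefore form a measure-zero set, irrelevant to the volume and bijection arguments of the subsequent lemmas; for every other $z$ the count of rounded-up coordinates matches the fractional-part deficit exactly. The remaining work is routine bookkeeping with floors and ceilings.
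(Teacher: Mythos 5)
Your core argument is correct and is essentially the paper's own proof: membership in $H'$ follows because $H'$ is a linear subspace containing $K'$, hence $F_x \subseteq \paren{1+\frac{\sqrt{\ell}}{\Delta}}K' \subseteq H'$, and membership in $C(x,2)$ follows because Step~\ref{step:rounding} only ever replaces $z_j$ by $\floor{z_j}$ or $\ceil{z_j}$, so $\abs{z_j - x_j} \le 1$ for every $\jinl$. One correction, though: your ``main obstacle'' is not an obstacle to \emph{this} lemma, and the measure-zero argument you propose is not a valid way to close it --- a set containment admits no measure-zero exceptions, so if degenerate points genuinely violated it, the lemma would simply be false. What saves you is that they cannot violate it: if some integral coordinate $z_j \in \Z$ occurs among the first $m$ positions, the rounded point has coordinate sum $-\abs{\set{j \in [m] : z_j \in \Z}} < 0$, so it lies outside $H' \supseteq K'$, and hence such a $z$ belongs to no $F_x$ with $x \in K'$ integral --- it is vacuously outside the scope of the claim; if instead the integral coordinates of $z$ all sit beyond position $m$, the rounding does land on $H'$ and your items (i) and (ii) apply verbatim. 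So you can delete the measure-zero paragraph entirely; (i) and (ii) already cover every $z \in F_x$, degenerate or not. Your observation is still a sharp one: it shows that the paper's own assertion that Step~\ref{step:rounding} \emph{always} returns a point with $\sum_{\jinl} x_j = 0$ fails on the degenerate set, and it is in \Cref{lem:corollary} and the acceptance-probability bound --- not here --- that your measure-zero remark is the right repair.
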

\begin{proof}
	Let $z$ be the point sampled in Step \ref{step:sample}.
	Since $z \in \paren{1+\frac{\sqrt{\ell}}{\Delta}}K'$ we have that $\sum_{\jinl}z_j = 0$.
	Therefore,
	\[
	\sum_{\jinl} \floor{z_j} \le 0 \qquad \text{and} \qquad \sum_{\jinl} \ceil{z_j} \ge 0.
	\]
	Then,
	\begin{multline*}
	m = \abs{\sum_{\jinl} \floor{z_j}}
	= \abs{\sum_{\jinl} \floor{z_j} - \sum_{\jinl} z_j}=\abs{\sum_{\jinl} \paren{\floor{z_j} - z_j}}
	\le \sum_{\jinl} \abs{\floor{z_j} - z_j}
	\le \ell,
	\end{multline*}
	where the second equality is because $ \sum_{\jinl} z_j = 0$.
	Hence, starting from $x_j = \floor{z_j}, \forall \jinl$, the algorithm has to round at most $\ell$ coordinates to $x_j = \ceil{z_j}$.
	Since $j \in [\ell]$ this is always possible.
	Therefore, the rounding in Step \ref{step:rounding} always finds an integral point $x$ that satisfies the following,
	\begin{equation}
	\sum_{\jinl} x_j = 0\qquad \text{and} \qquad \paren{\forall \jinl, x_j = \floor{z_j}~~\text{or}~~x_j = \ceil{z_j}}.\label{eq:rounded_x}
	\end{equation}
	Therefore, the set of points $z \in \paren{1+\frac{\sqrt{\ell}}{\Delta}}K'$ that are rounded to the integral point $x\in K'$ satisfying (\ref{eq:rounded_x}) is a strict subset of
	\[
	\set{z :\paren{\forall \jinl, x_j = \floor{z_j} \lor x_j = \ceil{z_j}} \land \sum_{\jinl} z_j = 0 },
	\]
	which is contained in $H' \cap C(x,2)$ since $\abs{z_j - \floor{z_j}} \le 1$ and  $\abs{ \ceil{z_j} - z_j} \le 1, \forall \jinl$.
\end{proof}

\begin{lemma} For any $x \in K'$, $ H' \cap C(x,2) \subseteq \paren{1+\frac{\sqrt{\ell}}{\Delta}}K'$.
	\label{lem:cube_contained}
\end{lemma}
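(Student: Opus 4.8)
The plan is to show that every point $w \in H' \cap C(x,2)$ lies in the scaled polytope $\lambda K'$, where $\lambda := 1 + \frac{\sqrt{\ell}}{\Delta}$, by exhibiting the pre-image $w/\lambda$ as a convex combination of two points already known to lie in $K'$, and then invoking convexity of $K'$.

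First I would record two preliminary facts. Since $K' = H' \cap P'$ and \Cref{lem:ball_contained} gives $B(0,\Delta) \subseteq P'$, intersecting with the hyperplane $H'$ yields $B(0,\Delta) \cap H' \subseteq K'$; in particular $K'$ contains a full $\ell_2$-ball of radius $\Delta$ (within $H'$) centered at the origin, and the origin itself lies in $K'$. Second, because $w \in C(x,2)$ means $\norm{w - x}_\infty \le 1$, I get $\norm{w - x}_2 \le \sqrt{\ell}$, and since both $w$ and $x$ lie on the hyperplane $H'$, the displacement $e := w - x$ lies in the linear subspace $H'$.

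The core step is to verify that $u := w/\lambda \in K'$. Since $H'$ is a linear subspace and $w \in H'$, we automatically have $u \in H'$, so it remains to place $u$ in $P'$; I would do this by writing $u$ as a convex combination of $x \in K'$ and a suitably scaled copy of $e$. Setting $v := \frac{\Delta}{\sqrt{\ell}}\, e$, a short computation shows $\frac{1}{\lambda} x + \bigl(1 - \frac{1}{\lambda}\bigr) v = \frac{1}{\lambda}(x + e) = \frac{w}{\lambda} = u$, because $1 - \frac{1}{\lambda} = \frac{\sqrt{\ell}}{\Delta\lambda}$ exactly cancels the factor $\frac{\Delta}{\sqrt{\ell}}$ in $v$. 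The weight $\frac{1}{\lambda}$ lies in $(0,1)$ since $\lambda > 1$, and $v \in H'$ with $\norm{v}_2 = \frac{\Delta}{\sqrt{\ell}}\norm{e}_2 \le \Delta$, so $v \in B(0,\Delta) \cap H' \subseteq K'$. Convexity of $K'$ then gives $u \in K'$, and hence $w = \lambda u \in \lambda K'$, which is the desired inclusion.

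The step I expect to require the most care is choosing the right scaling for $v$: the factor $\Delta/\sqrt{\ell}$ is forced precisely so that the worst-case displacement $\norm{e}_2 = \sqrt{\ell}$ lands $v$ exactly on the boundary of the inscribed ball $B(0,\Delta)$, which is also why the expansion factor $\sqrt{\ell}/\Delta$ in $\lambda$ is the correct choice rather than something looser. Everything else is routine, but I would double-check the degenerate case $w = x$, where $v = 0$ and $u = x/\lambda$ is a convex combination of $x$ and the origin, both of which are in $K'$, so that the argument does not silently assume $e \neq 0$.
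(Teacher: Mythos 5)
Your proof is correct and takes essentially the same route as the paper's: both arguments rest on \Cref{lem:ball_contained}, the bound $\norm{w-x}_2 \le \sqrt{\ell}$ for points of $C(x,2)$, and convexity to absorb a displacement of length at most $\sqrt{\ell}$ into the $\frac{\sqrt{\ell}}{\Delta}$-scaled copy of the inscribed-ball polytope. The only cosmetic difference is that you stay inside the subspace $H'$ throughout and write out the convex combination $\frac{w}{\lambda} = \frac{1}{\lambda}x + \paren{1-\frac{1}{\lambda}}v$ explicitly, whereas the paper works in $P'$ via the Minkowski-sum identity $P' + \frac{\sqrt{\ell}}{\Delta}P' = \paren{1+\frac{\sqrt{\ell}}{\Delta}}P'$ (whose proof is exactly your convex-combination step) and intersects with $H'$ only at the end using $\alpha H' = H'$.
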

\begin{proof}
	Fix a point $x \in P'$.
	Then for any $x' \in C(x,2)$, $\|x'-x\|_2 \le \sqrt{\ell}$.
	\Cref{lem:ball_contained} gives us that the translated polytope $P'$ contains a ball of radius $\Delta$ centered at the origin.
	Then the polytope $\frac{\sqrt{\ell}}{\Delta}P'$ contains a ball of radius $\sqrt{\ell}$ centered at the origin, which implies that the polytope $\frac{\sqrt{\ell}}{\Delta}P'$ contains every vector of length at most $\sqrt{\ell}$.
	Therefore, $x'-x\in \frac{\sqrt{\ell}}{\Delta}P'$.
	Now since $x\in P'$ we get that $x'\in\paren{1+\frac{\sqrt{\ell}}{\Delta}}P'$.
	Therefore, $C(x,2) \subseteq \paren{1+\frac{\sqrt{\ell}}{\Delta}}P'$.
	Consequently, $H' \cap C(x,2) \subseteq H' \cap \paren{1+\frac{\sqrt{\ell}}{\Delta}}P' = \paren{1+\frac{\sqrt{\ell}}{\Delta}}(H' \cap P')$ since $\alpha H' = H'$ for any scalar $\alpha \neq 0$.
	Hence, $H' \cap C(x,2) \subseteq \paren{1+\frac{\sqrt{\ell}}{\Delta}}K'$
\end{proof}

\begin{lemma}
	\label{lem:corollary}
	For any point $z \in \frac{1}{\paren{1+\frac{\sqrt{\ell}}{\Delta}}}K'$ the integral point it is rounded to belongs to the polytope $K'$.
\end{lemma}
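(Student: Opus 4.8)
The plan is to prove the claim coordinate by coordinate, exploiting that the expansion factor $\alpha := 1 + \frac{\sqrt{\ell}}{\Delta}$ is strictly larger than $1$ together with the integrality of the box bounds. First I would unpack the hypothesis: $z \in \frac{1}{\alpha}K'$ means $\alpha z \in K' \subseteq P'$, and since $P' = P - x^*$ this yields, for every $\jinl$, the two-sided bound
\[
\frac{L_j - x^*_j}{\alpha} \le z_j \le \frac{U_j - x^*_j}{\alpha}.
\]
Let $x$ be the integral point that $z$ is rounded to in Step~\ref{step:rounding}. By the analysis inside \Cref{lem:rounding}, the rounding sets each $x_j$ to either $\floor{z_j}$ or $\ceil{z_j}$ and guarantees $\sum_{\jinl} x_j = 0$, so $x \in H'$ already; it therefore remains only to show $x \in P'$, i.e.\ that $L_j - x^*_j \le x_j \le U_j - x^*_j$ for every $\jinl$.

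For the upper bound I would use $x_j \le \ceil{z_j} \le \ceil{\frac{U_j - x^*_j}{\alpha}}$. The loop invariant established in the proof of \Cref{lem:ball_contained} gives $U_j - x^*_j \ge \Delta \ge 1$, so $U_j - x^*_j$ is a positive integer; since $\alpha > 1$ the quantity $\frac{U_j - x^*_j}{\alpha}$ is strictly smaller than the integer $U_j - x^*_j$, whence its ceiling is at most $U_j - x^*_j$, giving $x_j \le U_j - x^*_j$. Symmetrically, $x_j \ge \floor{z_j} \ge \floor{\frac{L_j - x^*_j}{\alpha}}$, and because $L_j - x^*_j \le -\Delta \le -1$ is a nonpositive integer while $\alpha > 1$, we have $\frac{L_j - x^*_j}{\alpha} \ge L_j - x^*_j$, so its floor is at least $L_j - x^*_j$, giving $x_j \ge L_j - x^*_j$. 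Collecting both bounds over all $\jinl$ shows $x \in P'$, and with $x \in H'$ this gives $x \in H' \cap P' = K'$, as desired.

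The step I expect to be the crux — where the construction is genuinely used — is the ceiling/floor manipulation: I must ensure that shrinking a coordinate by the factor $1/\alpha$ and then rounding cannot push it back outside the original box. This is precisely where integrality of the bounds $U_j - x^*_j$, $L_j - x^*_j$ and their separation from $0$ by $\Delta$ (from \Cref{lem:ball_contained}) are indispensable: if a bound could equal $0$ (e.g.\ $\Delta = 0$) or were non-integral, the ceiling could climb up to the bound and the argument would break. An equivalent route is the contrapositive — if a rounded coordinate violated, say, its upper bound, then $z_j > U_j - x^*_j > 0$, so scaling by $\alpha > 1$ only worsens the violation and places $\alpha z$ outside $P'$, contradicting $\alpha z \in K'$ — but I find the direct coordinate-wise version cleaner to write.
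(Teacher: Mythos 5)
Your proof is correct, but it follows a genuinely different route from the paper's. Writing $\alpha = 1+\frac{\sqrt{\ell}}{\Delta}$ as you do, the paper argues geometrically: it invokes \Cref{lem:rounding} to place the rounded point in $H' \cap C(z,2)$, and then claims, ``by arguing similarly'' to \Cref{lem:cube_contained}, that $H' \cap C(z,2) \subseteq K'$ for every $z \in \frac{1}{\alpha}K'$ --- i.e., it treats the rounded point as an arbitrary point of the cube slice and never uses integrality. You instead exploit that $P'$ is an axis-parallel box whose bounds $L_j - x_j^*$ and $U_j - x_j^*$ are integers at distance at least $\Delta$ from the origin, so that shrinking by $1/\alpha$ and then taking floors/ceilings cannot cross a bound. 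This difference matters: the paper's intermediate cube claim does not literally follow from the $\ell_2$-ball Minkowski argument of \Cref{lem:cube_contained}, since that argument only yields $H'\cap C(z,2) \subseteq \paren{\frac{1}{\alpha}+\frac{\sqrt{\ell}}{\Delta}}K'$, and $\frac{1}{\alpha}+\frac{\sqrt{\ell}}{\Delta} = 1 + \frac{\ell}{\Delta\paren{\Delta+\sqrt{\ell}}} > 1$; moreover the cube claim can outright fail for small $\Delta$ (take $\ell = 2$, $L_j - x_j^* = -2$, $U_j - x_j^* = 2$, so $\Delta = 2$, and $z = \paren{\Delta/\alpha, -\Delta/\alpha}$: the point $z + (1,-1)$ lies in $H' \cap C(z,2)$ but outside $K'$). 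The lemma itself survives precisely because the rounded point is not an arbitrary point of that cube but a coordinate-wise floor/ceiling of $z$, which is exactly what your integrality argument captures. So your proof buys full rigor for every $\Delta \ge 1$, while the paper's buys brevity by reusing its earlier geometric lemmas, at the cost of a step that, as written, only holds once $\Delta$ exceeds a small constant (certainly in the regime $\Delta = \Omega\paren{\ell^{1.5}}$ of \Cref{thm:main}) and even then needs a coordinate-wise rather than $\ell_2$-ball justification.
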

\begin{proof}
	From \Cref{lem:rounding} we know that for any integral point $x \in K'$, $F_x \subseteq H' \cap C(x,2)$.
	Due to convexity of $K'$, $\frac{1}{\paren{1+\frac{\sqrt{\ell}}{\Delta}}}K'$ is contained entirely inside $K'$.
	Therefore, \Cref{lem:rounding} is true for all the points in $\frac{1}{\paren{1+\frac{\sqrt{\ell}}{\Delta}}}K'$.
	By arguing similarly as in the proof of \Cref{lem:cube_contained}, we can show that for any any $x \in \frac{1}{\paren{1+\frac{\sqrt{\ell}}{\Delta}}}K'$, $ H' \cap C(x,2) \subseteq K'$.
	This proves the lemma.
\end{proof}

Let $\mu$ be a uniform probability measure on the convex rational polytope $\paren{1+\frac{\sqrt{\ell}}{\Delta}}K'$.
\begin{lemma}
	\label{lem:same_size}
	Fix any two distinct integral points $x,x' \in K'$, $\mu\paren{F_x} = \mu\paren{F_{x'}}$.
\end{lemma}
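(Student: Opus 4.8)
The plan is to exhibit an explicit measure-preserving bijection between $F_x$ and $F_{x'}$, namely the integer translation $\tau(z) := z + (x'-x)$, and then to invoke the two containment lemmas already proved so that $\tau$ maps $F_x$ exactly onto $F_{x'}$. First I would record two facts about the displacement vector $v := x'-x$. Since $x,x'\in K'\subseteq H'$, we have $\sum_{\jinl} v_j = \sum_{\jinl} x'_j - \sum_{\jinl} x_j = 0$, so $v$ lies in the linear subspace underlying $H'$; consequently $\tau$ maps the affine hyperplane $H'$ to itself and preserves the $(\ell-1)$-dimensional Lebesgue measure on $H'$. Moreover, because both $x$ and $x'$ are integral, $v$ is an integer vector.

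The key step is to show that the rounding rule of Step \ref{step:rounding} is covariant under $\tau$: a point $z$ is rounded to $x$ if and only if $z+v$ is rounded to $x'$. Here I would use that adding an integer vector leaves all fractional parts unchanged, so $\floor{z_j+v_j} = \floor{z_j} + v_j$ and $\ceil{z_j+v_j}=\ceil{z_j} + v_j$ for every $\jinl$; and since $\sum_{\jinl} v_j = 0$, the count $m=\abs{\sum_{\jinl}\floor{z_j}}$ that determines which coordinates get rounded up is identical for $z$ and for $z+v$. Hence the same set of coordinates is rounded up in both cases, and coordinatewise the integral point to which $z+v$ is rounded equals $x+v = x'$. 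Applying the same argument to $\tau^{-1}$ gives the converse implication.

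Finally I would combine this covariance with the containment lemmas to upgrade it to a set equality inside the expanded polytope. By \Cref{lem:rounding}, $F_x\subseteq H'\cap C(x,2)$, so $\tau(F_x)\subseteq H'\cap C(x',2)$ because $\tau$ fixes $H'$ and sends the cube $C(x,2)$ to $C(x',2)$; then \Cref{lem:cube_contained} gives $H'\cap C(x',2)\subseteq \paren{1+\frac{\sqrt{\ell}}{\Delta}}K'$, so every point of $\tau(F_x)$ lies in the expanded polytope and rounds to $x'$, i.e.\ $\tau(F_x)\subseteq F_{x'}$. Running the same reasoning with $\tau^{-1}$ yields $F_{x'}\subseteq \tau(F_x)$, whence $\tau(F_x)=F_{x'}$. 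Since $\tau$ preserves Lebesgue measure on $H'$ and both fibers lie inside the polytope on which $\mu$ is the normalized Lebesgue measure, we conclude $\mu(F_x)=\mu(\tau(F_x))=\mu(F_{x'})$.

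I expect the main obstacle to be the bookkeeping in the covariance step: one must confirm that the specific deterministic tie-breaking rule (rounding up exactly the first $m$ coordinates) commutes with the integer shift, which relies precisely on $\sum_{\jinl} v_j = 0$ keeping $m$ invariant. Once $\tau$ is established as a bijection between the two fibers that respects both the rounding rule and membership in the expanded polytope, the measure-theoretic conclusion is routine.
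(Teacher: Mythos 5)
Your proposal is correct and takes essentially the same route as the paper's proof: translation by the integer vector $c = x'-x$, invariance of the count $m$ because $\sum_{\jinl} c_j = 0$, and the fact that floors and ceilings commute with integer shifts, yielding a translation bijection between $F_x$ and $F_{x'}$. If anything, you are slightly more careful than the paper, which does not explicitly verify (via \Cref{lem:rounding} and \Cref{lem:cube_contained}) that the translated point remains inside the expanded polytope $\paren{1+\frac{\sqrt{\ell}}{\Delta}}K'$, nor spell out that the bijection is measure-preserving because it is a translation along $H'$ --- both points that you supply.
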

\begin{proof}
	Given two distinct integral points $x,x' \in K'$,
	let $c = x'-x$. 
	Clearly $c$ is an integral point and $\sum_{\jinl} c_j = \sum_{\jinl} x'_j - \sum_{\jinl} x_j = 0-0 = 0$.
	Let $z\in F_x$ and $z' = z+c$.
	Then 
	\begin{multline*}
	\abs{\sum_{\jinl}\floor{z_j'}} = \abs{\sum_{\jinl}\floor{z_j}+c_j}= \abs{\sum_{\jinl}\floor{z_j}+\sum_{\jinl}c_j}
	=\abs{\sum_{\jinl}\floor{z_j}} = m.
	\end{multline*}
	Therefore, for both $z$ and $z'$ the first $m$ coordinates are rounded up, and the remaining are rounded down, in Step \ref{step:rounding}.
	Since $\floor{z_j'} = \floor{z_j}+c_j$ and $\ceil{z_j'} = \ceil{z_j}+c_j$, the point $z'$ is rounded to is nothing but $x'$.
	Therefore, for every point $z\in F_x$ there is a unique point $z' \in F_{x'}$ such that they are rounded to $x$ and $x'$ respectively. 
	This gives us a bijection between the sets $F_x$ and $F_{x'}$.
	Therefore, $\mu\paren{F_x} = \mu\paren{F_{x'}}$.
\end{proof}

\begin{proof}[Proof of \Cref{thm:main}]
	Let $K'' = \paren{1+\frac{\sqrt{\ell}}{\Delta}}K'$.
	
	Let $\nu$ be the distribution from which the \unif~samples a point from $K''$.
	That is for a given $\delta > 0$,
	\begin{equation}
	\sup_{A \subseteq K''}\abs{\nu(A) - \mu(A)} \le \delta.\label{eq:tv}
	\end{equation}
	\paragraph{Close to uniform sample.}
	From \Cref{lem:rounding,lem:cube_contained} we know that for any point $x\in K'$, $F_x \subseteq K''$.
	Therefore, from (\ref{eq:tv}) we have that
	\begin{align*}
	\abs{\nu(F_x) - \mu(F_x)} \le \delta. 
	\end{align*}
	
	Let $\nu'$ be the density from which \Cref{alg:random_walk} samples an integral point from $K$.
	We know that $\forall x, ~x \in K'\iff x+x^*\in  K$. 
	Since $x^*$ is an integral point, for any integral point $x$ we also have that $x \in K'\iff x + x^* \in K$.
	This gives us a bijection between the integral points in $K'$ and $K$.
	
	Therefore for any integral point $x \in K'$, $\nu'(x) = \nu(F_x)$.
	Let $\mu'(x) := \mu(F_x)$ for any integral point $x$ in $K'$.
	
	For any two integral points $x, x'\in K'$, \Cref{lem:same_size} gives us that $\mu\paren{F_x} = \mu\paren{F_{x'}}$.
	Moreover, since $F_x$ and $F_{x'}$ are both fully contained in $K''$, we get that $\mu'(F_x) = \mu(F_x) = \mu(F_{x'}) = \mu'(F_{x'})$.
	Therefore, $\mu'$ is a uniform measure over all the integral points in $K'$.
	
	Moreover, for any subset of integral points in $K$, say $I$, we have that 
	\[
	\nu'(I)=\nu\paren{\cup_{x \in I}F_x}.
	\]
	From \Cref{eq:tv} we have that
	\[
	\abs{\nu\paren{\cup_{x \in I}F_x}-\mu\paren{\cup_{x \in I}F_x}} \le \delta.
	\]
	Consequently,
	\[
	\abs{\nu'(I) - \mu'(I)} \le \delta.
	\]
	Therefore $\nu'$ over the integral points in $K'$ is at a total variation distance of at most $\delta$ from the uniform probability measure $\mu'$ over the integral points in $K'$.
	
	\paragraph{Probability of acceptance.}\Cref{alg:random_walk} samples points from $\paren{1+\frac{\sqrt{\ell}}{\Delta}}K'$ in each iteration.
	Due to \Cref{lem:corollary} we know for sure that whenever the algorithm samples a point from $\frac{1}{\paren{1+\frac{\sqrt{\ell}}{\Delta}}}K'$, it will be rounded to an integral point in $K'$.
	Therefore, the probability of sampling an integral point in $K'$ is
	\begin{align*}
	&\ge\nu\paren{ \frac{1}{\paren{1+\frac{\sqrt{\ell}}{\Delta}}}K'}\\
	&\ge \mu\paren{ \frac{1}{\paren{1+\frac{\sqrt{\ell}}{\Delta}}}K'}-\delta 
 &\text{from}~\Cref{eq:tv}\\
	&=\frac{\vol{\ell-1}{ \frac{1}{\paren{1+\frac{\sqrt{\ell}}{\Delta}}}K'}}{\vol{\ell-1}{K''}}-\delta 
 &\because \mu~\text{is a uniform distribution over}~K''\\
	&=\frac{\vol{\ell-1}{ \frac{1}{\paren{1+\frac{\sqrt{\ell}}{\Delta}}}K'}}{\vol{\ell-1}{ \paren{1+\frac{\sqrt{\ell}}{\Delta}}K'}}-\delta  &\text{by the definition of }~K''\\
	&= \paren{1+\frac{\sqrt{\ell}}{\Delta}}^{-2(\ell-1)}-\delta &\text{{\sf Vol}}_{\ell-1}~\text{is volume in}~\ell-1~\text{dimensions}\\
	&\ge e^{-\frac{\sqrt{\ell}}{\Delta}2(\ell-1)} - \delta &\text{using}~(1+x) \le e^x, \forall x
	\end{align*}
	Therefore the expected running time of the algorithm before it outputs an acceptable point is inversely proportional to the probability of acceptance, that is, $1/\paren{e^{-2\frac{\ell\sqrt{\ell}}{\Delta}} - \delta}$.
	When $\delta < e^{-2}$ and is a non-negative constant, and if $\Delta = \Omega\paren{\ell^{1.5}}$, this probability is at least a constant.
	Hence, repeating the whole process a polynomial number of times in expectation guarantees we sample an integral point from $K'$.
\end{proof}
\begin{remark}
The polytope $\paren{1+\frac{\sqrt{\ell}}{\Delta}}K'$ is an $\ell-1$ dimensional polytope given to us by the $\cH$ description in $\ell$ dimensions.
The random walk-based algorithms used as \unif~in Step \ref{step:sample} require the polytope they sample from to be full-dimensional.
Below we describe a rotation operation such that the rotated polytope, that is, the polytope formed after applying the rotation on $\paren{1+\frac{\sqrt{\ell}}{\Delta}}K'$, is full-dimensional in $\ell-1$ dimensions.
This is a well-known transformation used as a pre-processing step to make a polytope full-dimensional.
Let $u_1, u_2, \ldots, u_{\ell}$ be orthonormal basis of $\R^{\ell}$ such that $u_{\ell} := (1,1,\ldots,1)^T$.
We now construct a matrix $R$ such that $Ru_j = e_j, \forall \jinl$, where $e_1, e_2, \ldots, e_{\ell}$ are the standard basis vectors in $\ell$ dimensions.
Fix $\jinl$. 
We know that $e_j = \sum_{j'\in [\ell]} \alpha^{(j)}_{j'} u_{j'}$, where $\forall j' \in [\ell], \alpha^{(j)}_{j'} = e_{j}^Tu_{j'}$.
Thus, we get that $Re_j = \sum_{j'\in [\ell]} \alpha^{(j)}_{j'} Ru_{j'} = \sum_{j'\in [\ell]} \alpha^{(j)}_{j'} e_{j'}$, which implies that the vector $\paren{\alpha_1^{(j)}, \alpha_2^{(j)}, \ldots, \alpha_{\ell}^{(j)}}^T$ forms the $j$th column of $R$.
It is also easy to verify that $R$ is orthogonal.
Therefore, the rotation matrix $R$ can be computed efficiently.
This rotation maps the hyperplane $\sum_{\jinl}x_j = 0$ into the $\ell-1$ dimensional space spanned by $e_1, e_2, \ldots, e_{\ell-1}$.
Therefore, the rotated polytope is an $\ell-1$ dimensional polytope in $\ell-1$ dimensions.
For any point $(x_1, x_2, \ldots, x_{\ell-1}) \in \R^{\ell-1}$ we check the membership of $R^{-1}(x_1, x_2, \ldots, x_{\ell-1},0)$ in $\paren{1+\frac{\sqrt{\ell}}{\Delta}}K'$.
This gives us the membership oracle for the rotated polytope. 
We can then sample a rational point from this rotated polytope in Step \ref{step:sample}, apply $R^{-1}$ on the point sampled to get a point in $\paren{1+\frac{\sqrt{\ell}}{\Delta}}K'$, and proceed with our algorithm.  
\end{remark}
    \subsection{Exact Uniform Sampling for small $\ell$}
\label{subsec:alg2}
In this section, we give another exact sampling algorithm to sample a uniform random group-fair representation for small $\ell$.
In \Cref{eq:polytopeK} the convex polytope $K$ is described using an $\cH$ description defined as follows,
\begin{definition}[\textbf{$\cH$-description of a polytope}]
A representation of the polytope
as the set of solutions of finitely many linear inequalities.
\end{definition}
We can also have a representation of the polytope as described by its vertices, defined as follows,
\begin{definition}[\textbf{$\cV$-description of a polytope}]
The representation of the polytope by the set of its vertices.
\end{definition}

\cite{Barvinok} gave an algorithm to count exactly, the number of integral points in $K$, as re-stated below.

\begin{theorem}[Theorem 7.3.3 in \cite{Barvinok}]
\label{thm:barvinok}
Let us fix the dimension ${\ell}$. Then there exists a polynomial time algorithm that, for any given rational $\cV$-polytope $P \subset \R^{\ell}$, computes the number $|P \cap \Z^{\ell}|$. The complexity of the algorithm in terms of the dimension ${\ell}$ is ${\ell}^{\cO({\ell})}$.
\end{theorem}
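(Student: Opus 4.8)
The plan is to prove this via Barvinok's theory of short rational generating functions, which encodes the lattice points of a polytope inside a compactly representable rational function. For the given rational $\cV$-polytope $P$, define the multivariate generating function $f(P;\vec{x}) := \sum_{m \in P \cap \Z^{\ell}} \vec{x}^{m}$, where $\vec{x}^{m} = x_1^{m_1}\cdots x_{\ell}^{m_{\ell}}$. Since $P$ is bounded this is an honest Laurent polynomial, and $|P \cap \Z^{\ell}| = f(P;\mathbf{1})$. The whole strategy is to compute a polynomial-size closed form for $f(P;\vec{x})$ as a signed sum of simple rational functions and then extract its value at $\vec{x} = \mathbf{1}$.

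First I would reduce from the bounded polytope to its vertex cones using Brion's theorem: $f(P;\vec{x}) = \sum_{v} f(\mathrm{Cone}(P,v);\vec{x})$, where the sum runs over the vertices $v$ of $P$ and $\mathrm{Cone}(P,v)$ is the tangent (supporting) cone at $v$. Given the $\cV$-description, the tangent cones and their triangulations into simplicial cones can be computed in polynomial time for fixed $\ell$, so this step reduces lattice-point counting in $P$ to summing the generating functions of finitely many simplicial cones, each of which admits a rational closed form.

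The heart of the argument---and the step I expect to be the main obstacle---is Barvinok's signed decomposition of a simplicial cone into unimodular cones. A unimodular cone with apex $v$ and primitive generators $u_1,\ldots,u_{\ell}$ forming a lattice basis has the explicit generating function $\vec{x}^{v}/\prod_{i=1}^{\ell}(1 - \vec{x}^{u_i})$. For a simplicial cone of index (lattice determinant) $D > 1$, one uses a short lattice vector---guaranteed by Minkowski's theorem and computed via the LLL algorithm---to rewrite it as a signed combination of at most $\ell$ simplicial cones, each of index at most $D^{(\ell-1)/\ell}$, up to lower-dimensional cones handled separately. Iterating, the index shrinks so rapidly that the recursion terminates after $\cO(\ell \log\log D)$ rounds; since each round branches into at most $\ell$ cones, for fixed $\ell$ the total number of unimodular cones produced is polynomial in the bit-size of the input, and the dimension contributes exactly the $\ell^{\cO(\ell)}$ factor claimed. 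Proving this polynomial bound rigorously is the delicate part, since it depends on both the efficiency of the short-vector computation and the doubly-logarithmic decay of the index.

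Finally, each unimodular cone contributes a term $\pm\,\vec{x}^{v_i}/\prod_{j}(1 - \vec{x}^{u_{ij}})$, none of which can be evaluated directly at $\vec{x} = \mathbf{1}$ because of the poles, even though the full sum is regular there. To read off $f(P;\mathbf{1})$ I would substitute $\vec{x} = \exp(t\,\boldsymbol{\xi})$ for a generic direction $\boldsymbol{\xi}$ chosen so that $\langle \boldsymbol{\xi}, u_{ij}\rangle \neq 0$ for all generators, expand each summand as a Laurent series in $t$, and take the constant term of the total sum; the polar parts cancel across the sum and the constant term equals the number of lattice points. This last step is a routine residue computation once the short expression is assembled, and combining all the pieces yields the claimed polynomial-time algorithm for fixed $\ell$ with $\ell^{\cO(\ell)}$ dependence on the dimension.
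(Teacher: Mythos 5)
This statement is not proved in the paper at all: it is quoted verbatim as Theorem~7.3.3 of Barvinok's book and used purely as a black-box counting oracle inside Theorem~\ref{thm:alg1}, so the only ``proof'' the paper points to is Barvinok's own. Your outline faithfully reconstructs exactly that argument---the lattice-point generating function, Brion's decomposition into vertex cones, the LLL-based signed decomposition into unimodular cones with the doubly-logarithmic index decay giving the $\ell^{\cO(\ell)}$ dependence, and the evaluation at $\vec{x}=\mathbf{1}$ via the substitution $\vec{x}=\exp(t\boldsymbol{\xi})$---so it is the same approach as the cited source, with the only sketched-over points (handling the lower-dimensional cones, e.g.\ by dualizing before decomposing, and the loss from LLL's approximation factor) being exactly where the standard treatment also requires care.
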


We also have the algorithm by \cite{Pak2000OnSI} gives us an exact uniform random sampler for the integral points in $K$.

\begin{theorem}[Theorem 1 in \cite{Pak2000OnSI}]
\label{thm:pak}
Let $P \subset \R^{\ell}$ be a rational polytope, and let $B = P \cap \Z^{\ell}$. Assume
an oracle can compute $|B|$ for any $P$ as above. Then there exists a polynomial-time algorithm for sampling uniformly from $B$, which calls this oracle $\cO\paren{{\ell}^2L^2}$ times where $L$ is the bit complexity of the input.
\end{theorem}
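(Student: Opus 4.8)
The plan is to prove the theorem by \emph{self-reducibility}: reduce the task of sampling a uniform lattice point to a sequence of one-dimensional marginal samplings, each carried out using the counting oracle. Write $B = P \cap \Z^{\ell}$ and let $N := \abs{B}$ be obtained by a single oracle call. I would sample the coordinates $x_1, x_2, \ldots, x_{\ell}$ one at a time. Conditioned on having fixed $x_1 = v_1, \ldots, x_{i-1} = v_{i-1}$, the correct conditional law of $x_i$ under the uniform distribution on $B$ assigns to a value $v$ a probability proportional to $\abs{\set{x \in B : x_1 = v_1, \ldots, x_{i-1} = v_{i-1}, x_i = v}}$. Each such slice is the intersection of $P$ with a rational affine subspace, hence is again a rational polytope on which the oracle may be invoked. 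Writing $M_i := \abs{\set{x \in B : x_1 = v_1, \ldots, x_i = v_i}}$ with $M_0 := N$, the step that fixes $x_i = v_i$ succeeds with probability $M_i / M_{i-1}$, so the probability of producing a fixed feasible point $(v_1, \ldots, v_{\ell})$ telescopes to $\prod_{i=1}^{\ell} M_i / M_{i-1} = M_{\ell}/M_0 = 1/N$, exactly as in the analysis of \Cref{alg:dp}. This shows the output is uniform, provided each marginal can be sampled exactly.

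The central difficulty, and the reason a naive approach fails, is that the number of admissible integer values of a single coordinate can be exponentially large: by Cramer's rule the vertices of $P$ have coordinates of bit complexity $\cO(\ell L)$, so the integer range $[a_i, b_i]$ of $x_i$ may satisfy $b_i - a_i = 2^{\Omega(\ell L)}$, and enumerating all values of $x_i$ while querying the slice count for each is infeasible. The fix is \emph{bisection}. To sample $x_i$ from its discrete marginal, I would maintain a candidate integer interval, initially $[a_i, b_i]$, and repeatedly split it at the midpoint $m = \floor{(a+b)/2}$. Using the oracle, count the lattice points of the current slice lying in $\set{x_i \le m}$ and in $\set{x_i > m}$ (each a rational polytope obtained by appending one halfspace), then descend into the lower half with probability equal to the ratio of its count to the total and into the upper half otherwise. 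Because these counts are \emph{exact} cumulative sums of the marginal weights, the coin flips realize the target marginal exactly; after $\cO(\ell L)$ halving steps the interval collapses to a single integer $v_i$.

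The main obstacle to address rigorously is twofold. First, one must verify that bisection reproduces the discrete marginal exactly rather than approximately; this is immediate once one observes that at every step the probability of descending into a sub-interval equals the true conditional probability that $x_i$ lies in that sub-interval, so the law of the final singleton is the correct marginal by the chain rule. Second, one must control the bit complexity of the polytopes fed to the oracle across the whole recursion: each fixed-coordinate equality and each midpoint halfspace is rational of bit complexity $\cO(\ell L)$, and at most $\cO(\ell)$ such constraints are ever simultaneously present, so every oracle query is on a rational polytope of size polynomial in $\ell$ and $L$. Tallying the work, each of the $\ell$ coordinates requires $\cO(\ell L)$ bisection steps with a constant number of oracle calls per step; combined with the accounting for the bit-complexity blow-up of the intermediate polytopes, this yields the claimed $\cO(\ell^2 L^2)$ oracle calls, and the overall algorithm is polynomial since each oracle call is assumed to be, completing the reduction.
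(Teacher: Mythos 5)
First, note that the paper never proves \Cref{thm:pak}: it is quoted verbatim from \cite{Pak2000OnSI}, and the closest the paper comes to a proof is the sketch of Pak's algorithm embedded in the proof of \Cref{thm:alg1}. Your argument is correct and rests on the same principle as that sketch --- self-reducibility plus an exact counting oracle yields \emph{exact} uniform sampling, with a halving scheme to cope with coordinate ranges of length $2^{\Theta(\ell L)}$ --- but your decomposition is genuinely different. Pak's algorithm, as the paper describes it, repeatedly chooses a \emph{count-balanced} cut $x_1 = c$ (each strict side of the hyperplane holds at most half of the remaining lattice points) and recurses three ways, into the two halves or into the slice $H \cap B$, with probabilities proportional to their counts; each step either halves the number of points or drops the dimension, giving $\cO(\ell L)$ recursion steps since $\abs{B} \le 2^{\cO(\ell L)}$, and the extra factor of $L$ in the stated $\cO(\ell^2 L^2)$ bound pays for locating each balanced cut by search with the oracle. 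You instead fix the coordinates in order and realize each discrete marginal by bisecting the coordinate's integer range at its \emph{geometric midpoint}, descending with probability equal to the exact ratio of slice counts. This avoids median-finding entirely: the midpoint is free to compute, the interval halves deterministically, and you get $\ell \cdot \cO(\ell L) = \cO(\ell^2 L)$ oracle calls, which is within (indeed better than) the claimed bound. Your closing remark that bit-complexity blow-up brings the count up ``to'' $\cO(\ell^2 L^2)$ is the one imprecision --- no such inflation occurs, since every queried polytope is $P$ intersected with axis-aligned equalities and one box constraint, all of bit size $\cO(\ell L)$, so the theorem's bound simply holds a fortiori. Both arguments establish uniformity by the same chain-rule/telescoping computation you cite from \Cref{alg:dp}, and your intermediate polytopes have exactly the axis-aligned form the paper exploits in \Cref{thm:alg1} when it substitutes Barvinok's counter for the oracle, so your variant would serve the paper's downstream purpose equally well.
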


Using the counting algorithm given by \Cref{thm:barvinok} as the counting oracle in \Cref{thm:pak} gives us our second algorithm that samples a uniform random group representation exactly.
\begin{theorem}
\label{thm:alg1}
For given fairness parameters $L_j, U_j \in \N$ and an integer $k > 0$, there is an algorithm that samples an exact uniform random integral point in $K$ and runs in time $\ell^{\cO(\ell)}\cO\paren{\log^2k}$.
\end{theorem}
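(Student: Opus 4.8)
The plan is to obtain the sampler by plugging the exact counting routine of \Cref{thm:barvinok} into the counting-to-sampling reduction of \Cref{thm:pak}, both applied to the polytope $K$ of \Cref{eq:polytopeK}. First I would record that $K$ is a bounded rational polytope: every defining constraint $L_j\le x_j\le U_j$ and the equality $\sum_{\jinl}x_j=k$ has integer coefficients with right-hand sides at most $k$, so $K$ is a rational $\cH$-polytope whose numbers have bit length $\cO(\log k)$. The one representational gap is that \Cref{thm:barvinok} is stated for a $\cV$-polytope, whereas $K$ is presented by its $\cH$-description. I would close this gap by vertex enumeration: since the dimension $\ell$ is fixed and $K$ has only $\cO(\ell)$ facets, each vertex is the unique solution of the equality together with $\ell-1$ tight box constraints, so running over all $\ell^{\cO(\ell)}$ choices of tight constraints, solving each linear system, and discarding the infeasible solutions yields the full vertex list. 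By the Upper Bound Theorem the number of vertices is $\ell^{\cO(\ell)}$, so the $\cV$-description is produced in time $\ell^{\cO(\ell)}\,\mathrm{poly}(\log k)$ and has size within budget.

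With the $\cV$-description in hand, \Cref{thm:barvinok} supplies the exact count $|P\cap\Z^{\ell}|$ that the oracle of \Cref{thm:pak} demands. The point that needs verification is that \Cref{thm:pak} queries this oracle not only on $K$ but on the polytopes obtained while conditioning coordinate ranges during sampling; each of these is $K$ further intersected with axis-parallel slabs $x_i\in[a_i,b_i]$, $0\le a_i\le b_i\le k$, hence again a bounded rational polytope on the same $\cO(\ell)$ families of facets and with the same $\cO(\log k)$ bit length. I would therefore re-run the vertex enumeration above for each queried polytope, so every oracle call still costs $\ell^{\cO(\ell)}\,\mathrm{poly}(\log k)$ and returns an exact count. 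Correctness is then immediate: \Cref{thm:pak} outputs an exact uniform sample from $P\cap\Z^{\ell}$ whenever its oracle is exact, and here $P\cap\Z^{\ell}=K\cap\Z^{\ell}$ is exactly the set of group-fair representations.

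For the running time I would combine the two bounds. \Cref{thm:pak} makes $\cO(\ell^2L^2)$ oracle calls, where $L$ is the input bit complexity; with $L=\cO(\ell\log k)$ this is $\ell^{\cO(1)}\cdot\cO(\log^2 k)$ calls, and multiplying by the per-call cost $\ell^{\cO(\ell)}\,\mathrm{poly}(\log k)$ and absorbing all polynomial-in-$\ell$ factors into $\ell^{\cO(\ell)}$ gives the claimed $\ell^{\cO(\ell)}\,\cO(\log^2 k)$. I expect the main obstacle to be precisely this bookkeeping together with the representation bridge: one must keep the dimension dependence entirely inside $\ell^{\cO(\ell)}$ so that only a clean $\log^2 k$ survives, which forces the $L^2$ in \Cref{thm:pak} to contribute its $\log k$ factors through the per-coordinate bit length while its $\ell$-factors are shunted into the dimension term, and one must confirm that the repeated $\cH$-to-$\cV$ conversions inside the oracle calls add only $\mathrm{poly}(\log k)$ overhead rather than extra powers of $\log k$.
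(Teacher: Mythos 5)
Your proposal is correct and follows essentially the same route as the paper's proof: plugging Barvinok's exact counter (\Cref{thm:barvinok}) into Pak's counting-to-sampling reduction (\Cref{thm:pak}), observing that every polytope queried during the recursion is again a sum-equality-plus-box polytope so its $\cV$-description can be recovered from the $\cH$-description in $\ell^{\cO(\ell)}$ (the paper bounds the vertex count by $\binom{2d+2}{d}=2^{\cO(d)}$, you via tight-constraint enumeration), and then absorbing all polynomial-in-$\ell$ factors from the $\cO(\ell^2L^2)$ oracle calls into $\ell^{\cO(\ell)}$ to leave the $\cO(\log^2 k)$ term.
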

\begin{proof}
The proof essentially follows from the proof of Theorem 1 in \cite{Pak2000OnSI}.
They assume access to an oracle that counts the number of integral points in any convex polytope that their algorithm constructs.
We show that Barvinok's algorithm can be used as an oracle for all the polytopes that are constructed in the algorithm to sample a uniform random integral point from our convex rational polytope $K$.

The algorithm in \Cref{thm:pak} intersects the polytope by an axis-aligned hyperplane and recurses on one of the smaller polytopes (to be specified below).
In the deepest level of recursion where the polytope in that level contains only one integral point, the algorithm terminates the halving process and outputs that point.
The proof of their theorem shows that this gives us a uniform random integral point from the polytope we started with.

Let us consider the dimension $1$ w.l.o.g.
The algorithm finds a value $c$ such that $L_1 < c < U_1$, $|H_+ \cap B|/|B| \le 1/2$, $|H_- \cap B|/|B| \le 1/2$, where $H_+$ and $H_-$ are two halves of the space separated by the hyperplane $H$ defined by $x_1 = c$.
That is, $H_+$ is the halfspace $x_1 \ge c$ and $H_-$ is the halfspace $x_1 \le c$.
Therefore, there are three possible polytopes for the algorithm to recurse on, $H_+ \cap B, H_- \cap B$, and $H \cap B$.
Here $|H \cap B|/|B|$ can be $\ge 1/2$.
Let 
\[
f_{+} = \frac{|H_+ \cap B|}{|B|},~f_{-} = \frac{|H_- \cap B|}{|B|},~\text{and}~f = \frac{|H \cap B|}{|B|}.
\]
Then the algorithm recurses on the polytope $H_+ \cap B$ with probability $f_+$, on $H_- \cap B$ with probability $f_-$, and on $H \cap B$ with probability $f$.

Observe that $K$ is also defined by the axis-aligned hyperplanes, $x_1=L_1$ and $x_1 = U_1$, amongst others.
Therefore, $x_1 \ge L_1$ will become a redundant constraint if the algorithm recurses on $H_+\cap B$.
Else $x_1 \le U_1$ will become a redundant constraint if the algorithm recurses on $H_- \cap B$.
In both these cases, the number of integral points reduces by more than $1/2$.
If the algorithm recurses on $H\cap B$, it fixes the value of $x_1$ to $c$, and the dimension of the problem reduces by $1$.
Since the number of integral points is $\exp\paren{dL}$, the number of halving steps performed by the algorithm is at most $\cO\paren{dL}$.

Observe that in all levels of recursion, the polytopes constructed are of $d$ dimensions ($1 \le d \le \ell$)
and are of the following form,
\begin{align*}
    \Big\{\paren{x_1, x_2, \ldots, x_{d}}\in\R^{d}~~\Big|~~\sum_{j \in [d]} x_j = k'~~\text{and}~~c'_j \le x_j \le c''_j, \forall j \in [\ell] \Big\},\label{eq:polytope}
\end{align*}
where $k', c', c''$ are some constants.
This gives us the $\cH$-description of each of the polytopes the algorithm constructs in each level of recursion.
The vertices of such a polytope are formed by the intersection of $d$ hyperplanes.
Therefore, there could be at most ${2d+2\choose d} = 2^{\cO\paren{{d}}}$ number of vertices for such a polytope in $d$ dimensions, which gives us that the $\cV$-description can be computed from the $\cH$-description in time $2^{\cO\paren{{d}}}$.
Therefore, for all these intermediate polytopes, we can use the counting algorithm given by \Cref{thm:barvinok} whose run time depends on $d^{(\cO(d)}$.
Using $ d \le \ell$, we get that the counting algorithm given by \Cref{thm:barvinok} takes time $\ell^{(\cO(\ell)}$ for all the polytopes constructed by the algorithm.
Further, the algorithm makes at most $\cO\paren{dL}$ calls to this counting algorithm in each recursion step.

Since the input to the algorithm consists of a number $k > 0$ and fairness parameters $0 \le L_j \le U_j \le k, \forall \jinl$, where all these parameters are integers, the bit complexity of the input is $\log k$.

Therefore, the total running time of our algorithm is $\cO\paren{d^2L^2}\ell^{\cO(\ell)} = \cO\paren{\ell^2\log^2 k}\ell^{\cO(\ell)} = \cO\paren{\log^2 k}\ell^{\cO(\ell)}$.
\end{proof}

    \subsection{Prefix fairness constraints}
    \label{subsec:prefix}
    Prefix fairness constraints are represented by the numbers $L_{ij}$ and $U_{ij}$, for all $j \in [\ell]$ and $i \in M$, where $M \subseteq [k]$, which give lower and upper bounds on the representation of group $j$ in the top $i$ ranks, i.e., the top $i$ prefix of the top $k$ ranking. 
    When $M = \set{k}$, this gives us the setup we started with.
    This model has been first studied by \cite{CSV2018,YS2017}, who give a deterministic algorithm to get a utility-maximizing ranking while satisfying prefix fairness constraints. 
    To overcome the limitations of a deterministic ranking, we propose to use our algorithm\footnote{Both \Cref{alg:dp} and \Cref{alg:random_walk} can be used for the heuristic. We use \Cref{alg:random_walk} as it is faster in practice.} as a heuristic to output randomized rankings under prefix fairness constraints. It inductively finds a group-fair assignment in \textit{blocks} between two ranks (ranks $i+1$ to $i'$ such that $i,i' \in M$ and $i'' \not\in M, \forall i'' \in [i+1, i'-1]$), as follows:
    \begin{enumerate}[leftmargin=*]
        \item Let us assume we have a random sample of the top $i$ ranking for some $i \in M$. Let $w_{ij}$ be the counts of groups $j \in [\ell]$ in this top-$i$ ranking. \item Let $i' \in M$ be the smallest rank larger than $i$ in the set $M$. Use \Cref{alg:random_walk} to find an integer solution $x^{(i')}_j$ in $K = \{x\in\R^{\ell}|\sum_{j \in [\ell]} x_{j} = i'-i+1,~\max\{0,L_{i'j}-w_{ij}\} \le x_j \le \min\{i'-i+1,U_{i'j} - w_{ij}\}, \forall j \in [\ell] \}$ to get a group-fair representation for ranks $i+1$ to $i'$. 
        \item Find a uniform random permutation of $x^{(i')}_j$ similar to Step 2 in \Cref{thm:uniqueness} to get a group-fair assignment for ranks $i+1$ to $i'$, and go to Step 1 with $i = i'$. 
    \end{enumerate}
    We call this algorithm `Prefix Random Walk' in our experiments.

    In the next section, we validate the theoretical guarantees and efficiency of our algorithms on real-world data sets.
	
	\section{Experimental Results}
	\label{sec:experiments}
	In this section, we validate our algorithms on various real-world datasets.
	We implement \Cref{alg:random_walk} (called `Random walk' in plots) using the tool called \textit{PolytopeSampler}\footnote{\href{https://github.com/ConstrainedSampler/PolytopeSamplerMatlab}{github.com/ConstrainedSampler/PolytopeSamplerMatlab} (License: GNU GPL v3.0)} to sample a point from a distribution close to uniform, on the convex rational polytope $K$.
	This tool implements a constrained Riemannian Hamiltonian Monte Carlo for sampling from high dimensional distributions on polytopes \cite{kook2022sampling}.
	
	\begin{figure*}[t]
		\centering
		\begin{subfigure}[b]{\linewidth}
			\centering
			\includegraphics[scale=0.12]{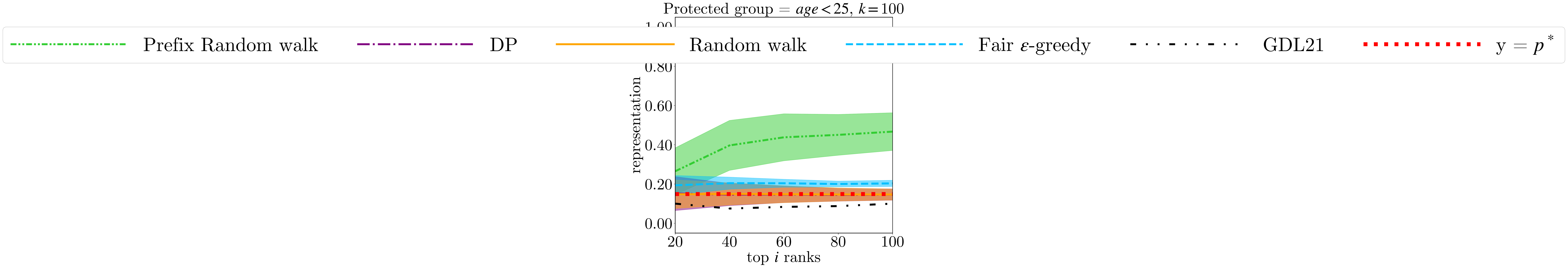} 
		\end{subfigure}
		
		\begin{subfigure}[b]{0.25\linewidth}
			\centering
			\includegraphics[scale=0.13]{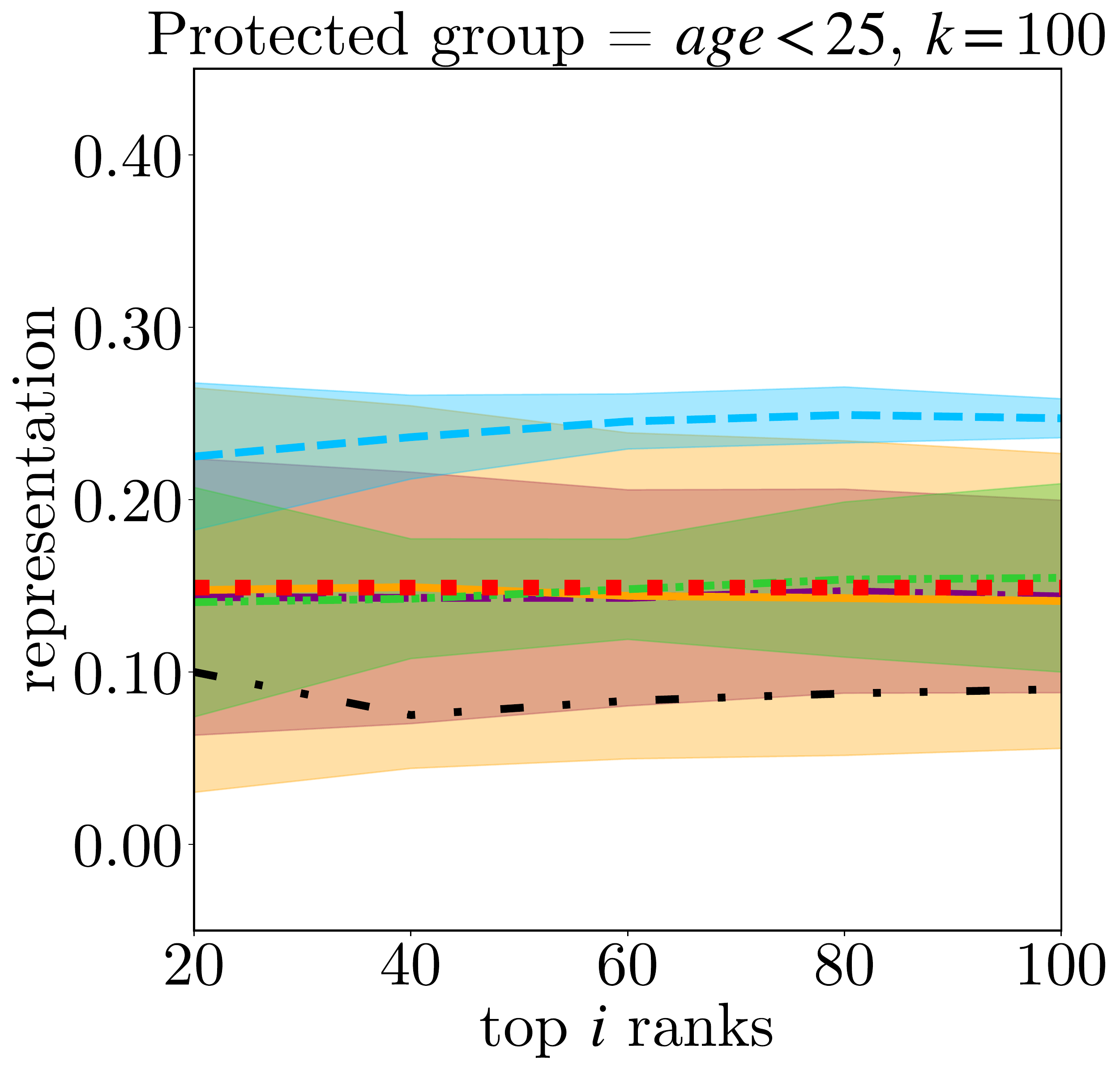} 
			\label{fig:german_25_rep}
		\end{subfigure}
		\begin{subfigure}[b]{0.48\linewidth}
			\centering
			\includegraphics[scale=0.13]{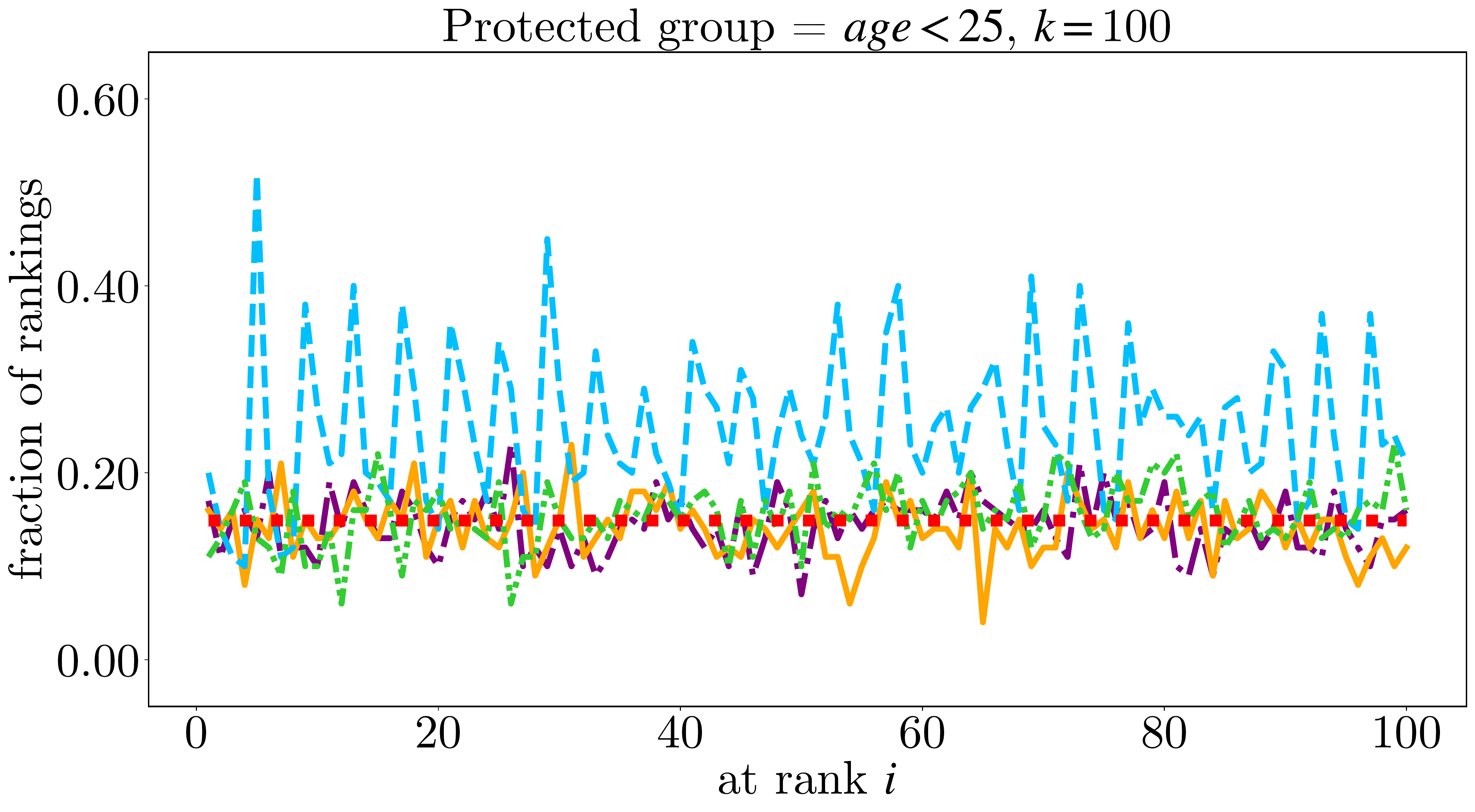} 
			\label{fig:german_35_rep}
		\end{subfigure}
		\begin{subfigure}[b]{0.25\linewidth}
			\centering
			\includegraphics[scale=0.125]{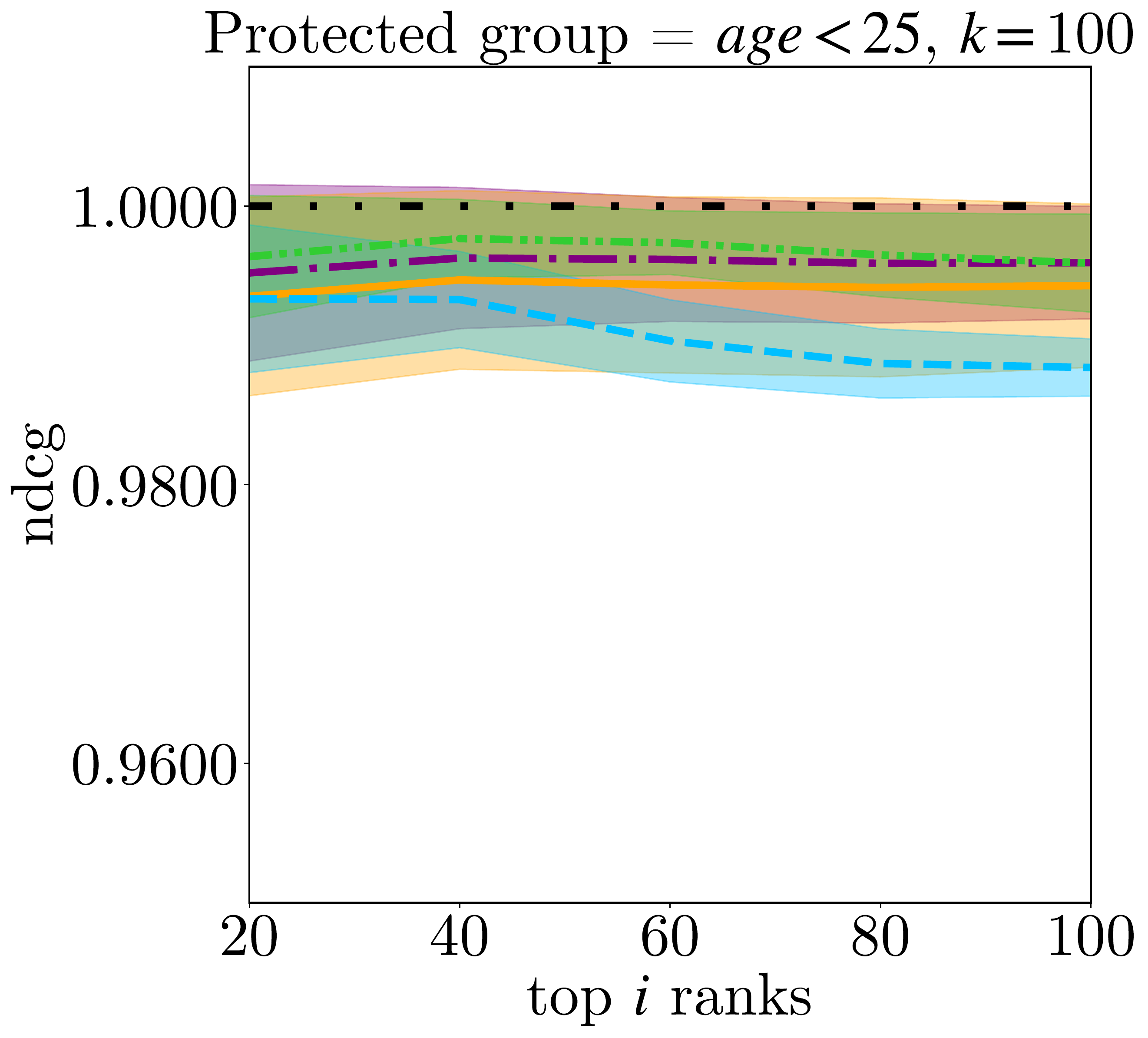} 
			\label{fig:german_35_rep}
		\end{subfigure}

		\caption{Results on the German Credit Risk dataset with \textit{age} $< 25$ as the protected group. For Fair $\epsilon$-greedy we use $\epsilon =0.3$ (see \Cref{fig:german_binary_eps015,fig:german_binary_eps05} for other values of $\epsilon$). In the plots, the means of the DP and the random walk algorithms are almost coinciding.}
		\label{fig:german_binary}
	\end{figure*}
	
	\begin{figure*}[t]
		\centering
		\begin{subfigure}[b]{\linewidth}
			\centering
			\includegraphics[scale=0.12]{legend.pdf} 
		\end{subfigure}
		
		\begin{subfigure}[b]{0.25\linewidth}
			\centering
			\includegraphics[scale=0.13]{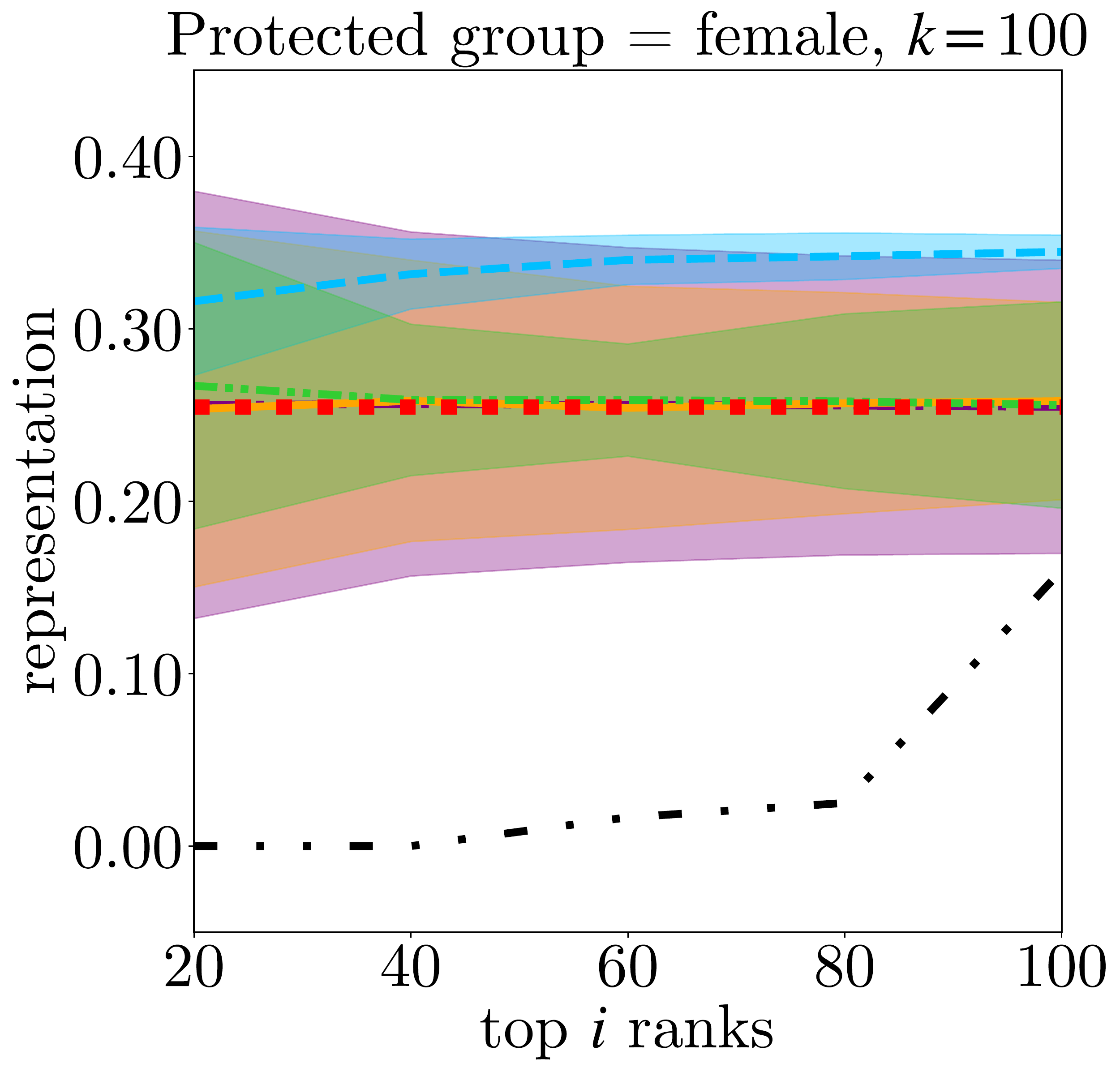} 
			\label{fig:german_25_rep}
		\end{subfigure}
		\begin{subfigure}[b]{0.48\linewidth}
			\centering
			\includegraphics[scale=0.13]{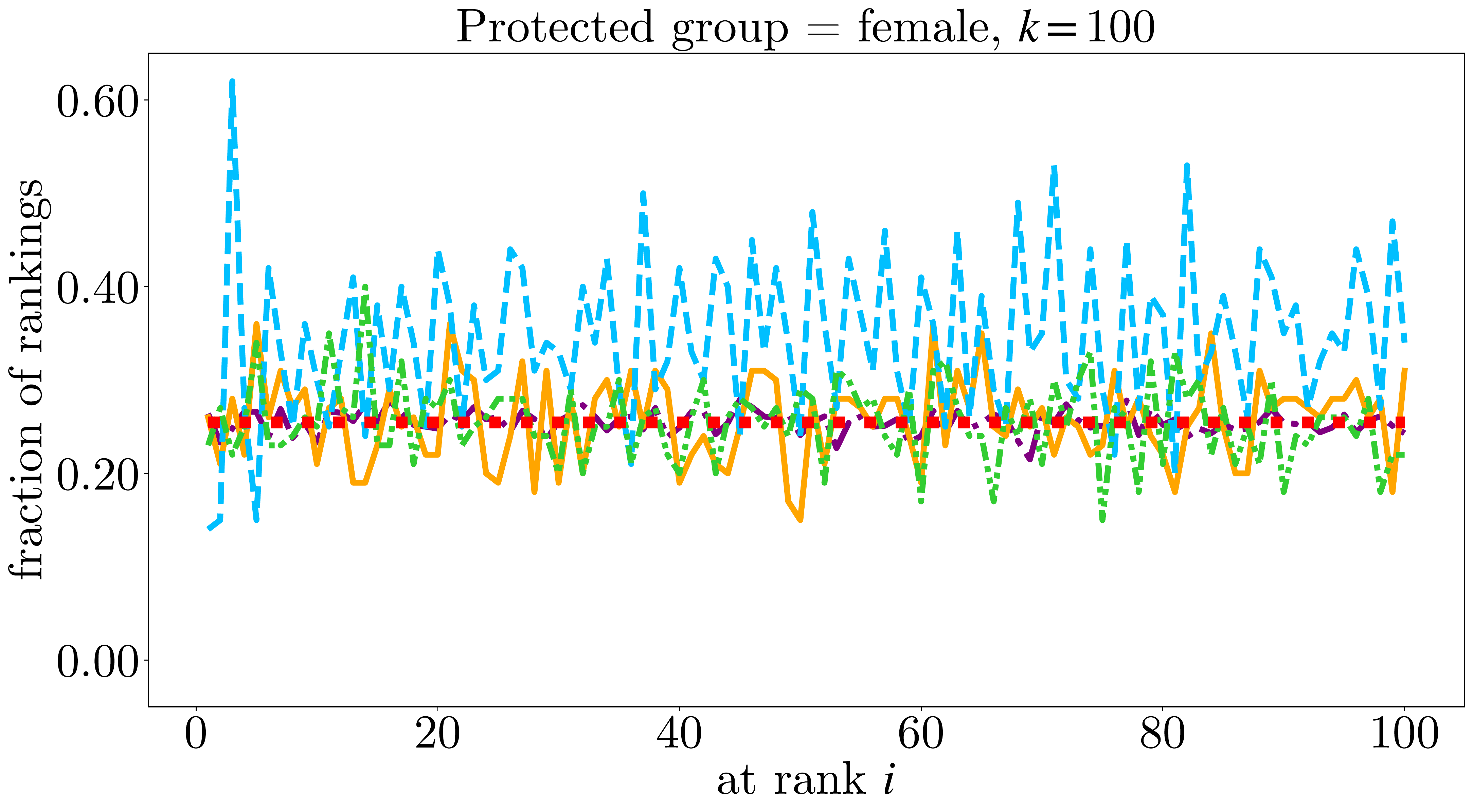} 
			\label{fig:german_35_rep}
		\end{subfigure}
		\begin{subfigure}[b]{0.25\linewidth}
			\centering
			\includegraphics[scale=0.13]{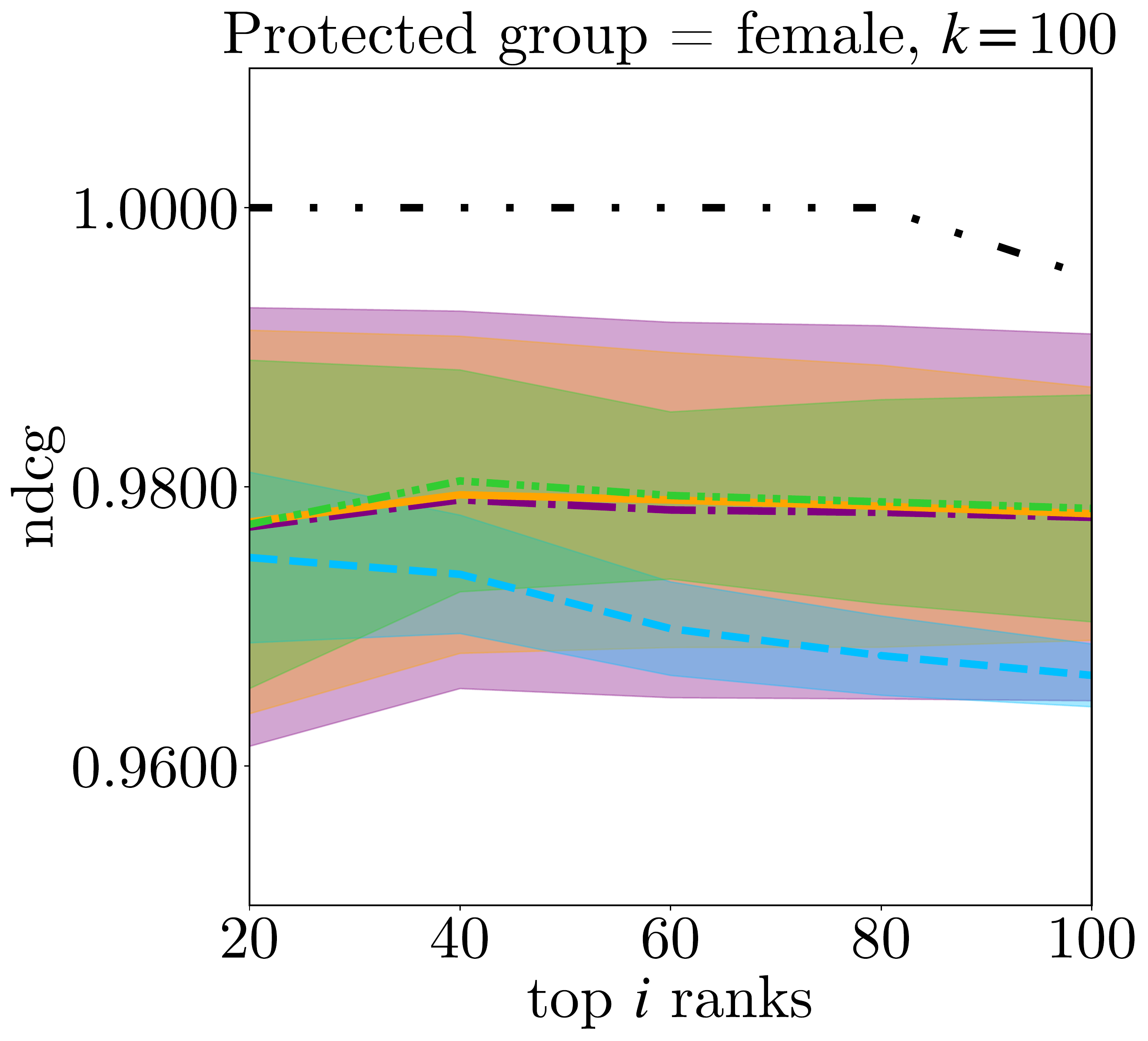} 
			\label{fig:german_35_rep}
		\end{subfigure}
		
		\caption{Results on the JEE 2009 dataset with \textit{gender} as the protected group. For Fair $\epsilon$-greedy we use $\epsilon =0.3$ (see \Cref{fig:jee_gender_eps015,fig:jee_gender_eps05} for other values of $\epsilon$). In the plots, the means of the DP and the random walk algorithms are almost coinciding.}
		\label{fig:jee_gender}
	\end{figure*}
	\paragraph{Datasets.}
	We evaluate our results on the German Credit Risk dataset\footnote{taken from \href{https://github.com/sruthigorantla/Underranking_and_group_fairness/tree/master/data/GermanCredit}{Gorantla et al. repository}} comprising credit risk scoring of $1000$ adult German residents \cite{Dua2019}, along with their demographic information (e.g., gender, age, etc.).
	We use the Schufa scores of these individuals to get the in-group rankings.
	We evaluate our algorithm on the grouping based on $age < 25$ (see \Cref{fig:german_binary}) similar to \cite{ZBCHMB2017,GDL2021,Castillo2019}, who observed that Schufa scores are biased against the adults of $age < 25$. Their representation in the top $100$ ranks is $10\%$ even though their true representation in the whole dataset is $15\%$.
	
	We also evaluate our algorithm on the IIT-JEE 2009 dataset, also used in \citet{CMV2020}.
	The dataset\footnote{taken from  \href{https://github.com/AnayMehrotra/Ranking-with-Implicit-Bias}{Celis et al. 2020b repository}} consists of the student test scores of the joint entrance examination (JEE) conducted for undergrad admissions at the Indian Institutes of Technology (IITs).
    Information about the students includes gender details (25\% women and 75\% men)\footnote{Only binary gender information was annotated in the dataset.}. 
	Students' test scores give score-based in-group rankings.
	We evaluate our algorithm with \textit{female} as the protected group, as they are consistently underrepresented ($0.04\%$ in top $100$ \cite{CMV2020}), in a score-based ranking on the entire dataset, despite $25\%$ female representation in the dataset. 
	
	\paragraph{Baselines.} $(i)$ We compare our experimental results with \textit{fair $\epsilon$-greedy} \cite{GS2020}, which is a greedy algorithm with $\epsilon$ as a parameter (explained in detail in \Cref{subsec:observations}).
	To the best of our knowledge, this algorithm is the closest state-of-the-art baseline to our setting, as it does not rely on comparing the scores of two candidates from different groups. 
	$(ii)$ We also compare our results with a recent deterministic re-ranking algorithm (GDL21) given by \citet{GDL2021}, which achieves the best balance of both group fairness and underranking of individual items compared to their original ranks in top-$k$.
	
	\paragraph{Plots.} We plot our results for the protected groups in each dataset (see \Cref{fig:german_binary,fig:jee_gender}).
	We use the representation constraints $L_j = \ceil{(p_j^*-\eta)k}$ and $U_j = \floor{(p_j^*+\eta)k}$ for group $j$ where $p_j^*$ is the total fraction of items from group $j$ in the dataset and $\eta = 0.1$. For ``prefix random walk" we put constraints at every $b$ ranks, i.e., $L_{ij} = \ceil{\paren{p_j^*-\frac{\eta b}{\max\{b,k-i\}}}k}$ and $U_{ij} = \floor{\paren{p_j^*+\frac{\eta b}{\max\{b,k-i\}}}k}$ with $i \in \set{b, 2b, \ldots}$. In the experiments, we use $k = 100$ and $b = 50$.
    With these, the representation constraints are stronger in the top $50$ ranks than in the top $100$ ranks.
    The ``representation" (on the y-axis) plot shows the fraction of ranks assigned to the protected group in the top $i$ ranks (on the x-axis).
	For randomized algorithms, we sample $1000$ rankings and output the mean and the standard deviation.
	The dashed red line is the true representation of the protected group in the dataset, which we call $p^*$, dropping the subscript.
	The ``fraction of rankings" (on the y-axis) plot for randomized ranking algorithms represents the fraction of $1000$ rankings that assign rank $i$ (on the x-axis) to the protected group.
	For completeness, we plot the results for the ranking utility metric, normalized discounted cumulative gain, defined as 
	$
	\text{nDCG}@i = \paren{\sum_{i'\in[i]} \frac{(2^{\hat{s}_{i'}}-1)}{\log_2(i'+1)}}\big/\paren{\sum_{i'\in[i]} \frac{(2^{{s}_{i'}}-1)}{\log_2(i'+1)}},
	$
	where $\hat{s}_{i'}$ and $s_{i'}$ are the scores of the items assigned to rank $i'$ in the group-fair ranking and the score-based ranking, respectively.

	\subsection{Observations} 
	\label{subsec:observations}
        The rankings sampled by our algorithms have the following property: for any rank $i$, rank $i$ is assigned to the protected group in a sufficient fraction of rankings (see plots with ``fraction of rankings'' on the y-axis).
	This experimentally validates our \Cref{thm:rep_at_i}.
	Moreover, this fraction is stable across the ranks.
	Hence the line looks almost flat.
	Whereas fair $\epsilon$-greedy fluctuates a lot, which can be explained as follows. For each rank $k' = 1$ to $k$, with $\epsilon$ probability, it assigns a group uniformly at random, and with $1-\epsilon$ probability, it assigns group $G_1 :=$ (age$\ge 25$) if the number of ranks assigned to $G_1$ is less than $(\frac{L_1 k'}{k})$ in the top $k'$ ranks, and to $G_2 :=$ (age $<25$) otherwise.
	Consider \Cref{fig:german_binary} top row where $L_1 = 80, L_2 = 10$, and $k=100$, and the plot on the right shows the fraction of rankings (y-axis) assigning rank $i$ (x-axis) to $G_1$.
	Note that if $\epsilon = 0$ (no randomization), this algorithm gives a deterministic ranking where the first four ranks are assigned to $G_2$ and the fifth to $G_1$, and this pattern repeats after every five ranks. Hence, there would be a peak in the plot at ranks $k'=5,10,15,20,\ldots$.
	Now, when $\epsilon=0.3$, fair-$\epsilon$-greedy introduces  randomness in group assignment at each rank and, as a result, smoothens out the peaks as $k'$ increases, which is exactly what is observed.
	Therefore, the first four ranks will have very low representation, even in expectation.
	Similarly the ranks $6$ to $9$.
	Clearly, fair-$\epsilon$-greedy does not satisfy fairness for any $k' < k$ consecutive ranks.
	But our algorithm satisfies this property, as is also confirmed by \Cref{cor:rep_at_prefix}.
	
	Our algorithms satisfy representation constraints for the protected group in the top $k'$ ranks for $k' = 20,40,60,80,100$, in expectation (see plots with ``representation'' on the y-axis).
	Fair $\epsilon$-greedy overcompensates for representing the protected group.
	The deterministic algorithm GDL21 achieves very high nDCG but very low representation for smaller values of $k'$, although all run with similar representation constraints.
	This is because the deterministic algorithm uses comparisons based on the scores, hence putting most of the protected group items in higher ranks (towards $k$).
	With a larger value of $\epsilon$, fair $\epsilon$-greedy gets a much higher ``representation'' of the protected group than necessary (see \Cref{fig:german_binary_eps05,fig:jee_gender_eps05}), whereas, with a smaller value of $\epsilon$, it fluctuates a lot in the ``fraction of rankings'' (see \Cref{fig:german_binary_eps015,fig:jee_gender_eps015}).
    Our ``Prefix Random Walk" algorithm is run with stronger fairness requirements than ``Random Walk" and ``DP" in the top $50$ ranks, which can be observed by its smaller deviation from the line $y = p^*$ in the left-most plots. 
	
 In conclusion, our experimental results validate that our algorithms in fact provide sufficient opportunities for all the groups in each rank without losing out much on utility. Moreover, our algorithms, especially the random walk, run very fast even for a large number of groups (\Cref{fig:running_time}). 
	We also run experiments on the JEE 2009 dataset with birth category defining $5$ groups (see \Cref{fig:jee_category}).
	The experiments were run on a Quad-Core Intel Core i5 processor consisting of 4 cores, with a clock speed of 2.3 GHz and DRAM of 8GB. 
    Implementation of our algorithms and the baselines has been made available for reproducibility\footnote{ \href{https://github.com/sruthigorantla/SamplingExPostGroupFairRankings}{github.com/sruthigorantla/SamplingExPostGroupFairRankings}.}.

	\begin{figure}[t]
		\centering
		\begin{subfigure}[b]{0.48\linewidth}
			\centering
			\includegraphics[scale=0.20]{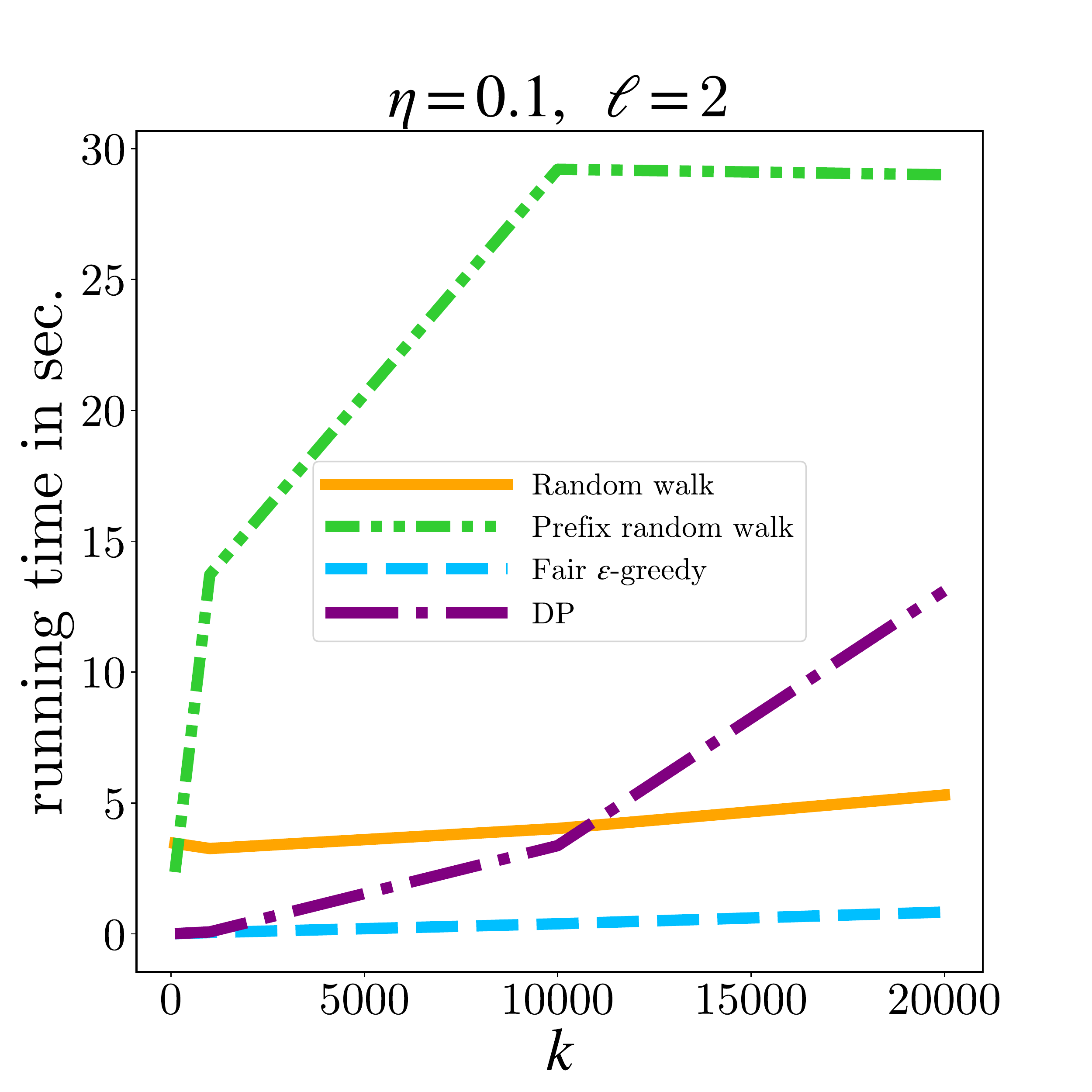} 
			\label{fig:time_k2}
		\end{subfigure}
		\begin{subfigure}[b]{0.48\linewidth}
			\centering
			\includegraphics[scale=0.20]{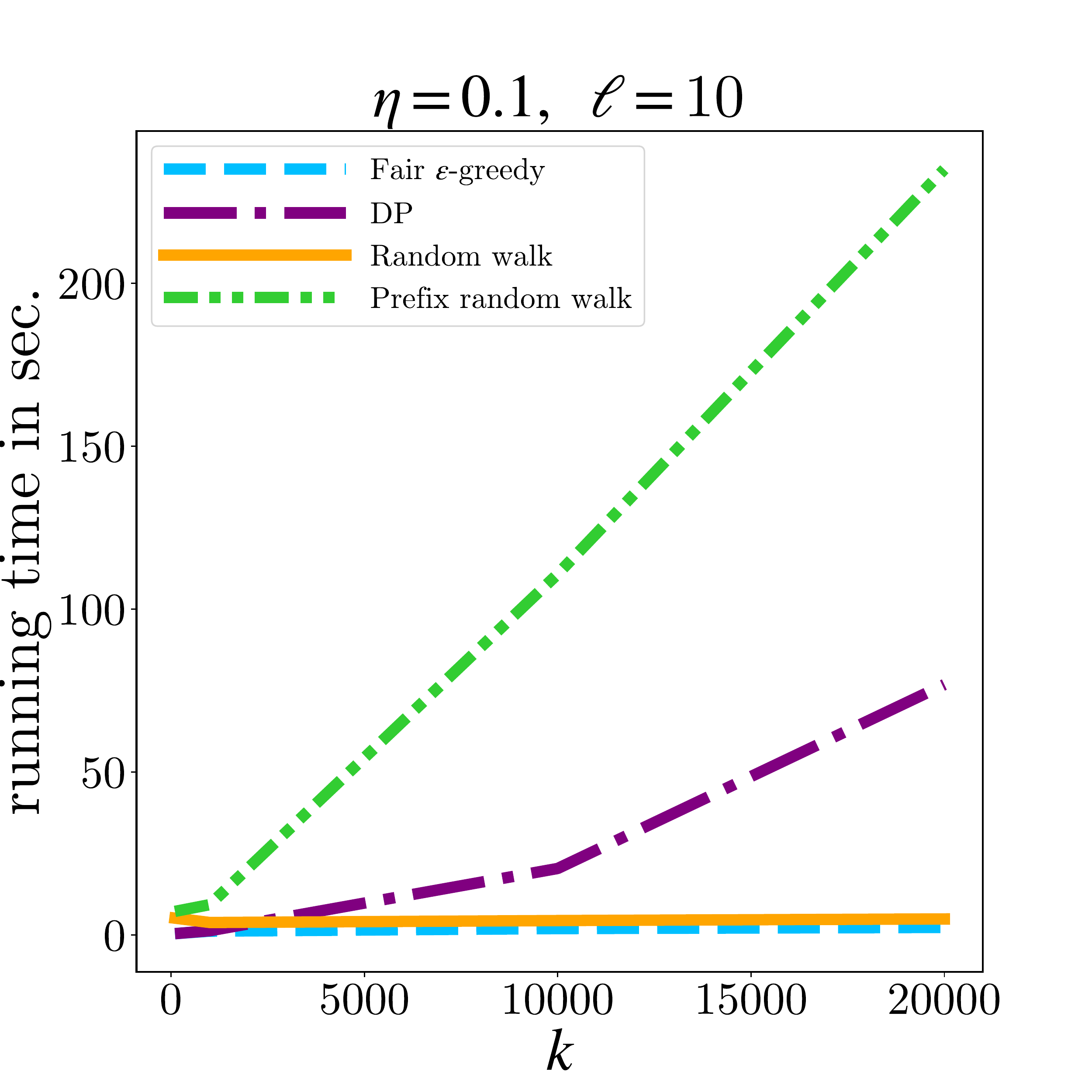} 
			\label{fig:time_k10}
		\end{subfigure}
		
		\caption{Average running time in seconds of the algorithms, over $5$ runs, to sample a ranking. For prefix random walk, we add prefix constraints with $b = 50,200,400,400$ for $k = 100, 1000,10000,20000$ respectively.}
		\label{fig:running_time}
	\end{figure}

	\section{Conclusion}
	\label{sec:conclusion}
	We take an axiomatic approach to define randomized group-fair rankings and show that it leads to a unique distribution over all feasible rankings that satisfy lower and upper bounds on the group-wise representation in the top ranks. We propose practical and efficient algorithms to exactly and approximately sample a random group-fair ranking from this distribution. Our approach requires merging a given set of ranked lists, one for each group, and can help circumvent implicit bias or incomplete comparison data across groups.
	
	The natural open problem is to extend our method to work even for noisy, uncertain inputs about rankings within each group. Even though our heuristic algorithm does output ex-post group-fair rankings under prefix constraints, it is important to investigate the possibility of  polynomial-time algorithms to sample from the distribution that satisfies natural extensions of our axioms for prefix group-fair rankings.

\section*{Ethical Statement}
A limitation of our work as a post-processing method is that it cannot fix all sources of bias, e.g., bias in data collection and labeling. Randomized rankings can be risky and opaque in high-risk, one-time ranking applications. Our guarantees for group fairness may not necessarily reflect the right fairness metrics for all downstream applications for reasons including biased, noisy, incomplete data and legal or ethical considerations in quantifying the eventual adverse impact on individuals and groups.

\section*{Acknowledgements}
SG was supported by a Google PhD Fellowship. The proof of Theorem 4.1 was suggested to us by Santosh Vempala. The authors would like to thank him for allowing us to include it in our paper. AL was supported in part by the SERB Award ECR/2017/003296 and a Pratiksha Trust Young Investigator Award. AL is also grateful to Microsoft Research for supporting this collaboration.

\bibliographystyle{named}
\bibliography{references}

\begin{thebibliography}{}

\bibitem[\protect\citeauthoryear{AG}{2022}]{schufa}
Schufa~Holding AG.
\newblock Schufa.
\newblock 2022.

\bibitem[\protect\citeauthoryear{Barvinok}{2017}]{Barvinok}
Barvinok.
\newblock Handbook of discrete and computational geometry (3rd ed.).
\newblock 2017.

\bibitem[\protect\citeauthoryear{Baswana \bgroup \em et al.\egroup
  }{2019}]{BCCKP2019}
Surender Baswana, Partha~Pratim Chakrabarti, Sharat Chandran, Yashodhan
  Kanoria, and Utkarsh Patange.
\newblock Centralized admissions for engineering colleges in india.
\newblock {\em Interfaces}, 49(5):338--354, 2019.

\bibitem[\protect\citeauthoryear{Beutel \bgroup \em et al.\egroup
  }{2019}]{BCDQ2019}
Alex Beutel, Jilin Chen, Tulsee Doshi, Hai Qian, Li~Wei, Yi~Wu, Lukasz Heldt,
  Zhe Zhao, Lichan Hong, Ed~H. Chi, and Cristos Goodrow.
\newblock Fairness in recommendation ranking through pairwise comparisons.
\newblock In {\em Proceedings of the 25th ACM SIGKDD International Conference
  on Knowledge Discovery and Data Mining}, KDD '19, page 2212–2220, New York,
  NY, USA, 2019. Association for Computing Machinery.

\bibitem[\protect\citeauthoryear{Biega \bgroup \em et al.\egroup
  }{2018}]{BGW2018}
Asia~J. Biega, Krishna~P. Gummadi, and Gerhard Weikum.
\newblock Equity of attention: Amortizing individual fairness in rankings.
\newblock In {\em The 41st International ACM SIGIR Conference on Research and
  Development in Information Retrieval}, SIGIR '18, page 405–414, New York,
  NY, USA, 2018. Association for Computing Machinery.

\bibitem[\protect\citeauthoryear{Bobko and Roth}{2004}]{fourfifths}
P.~Bobko and P.L. Roth.
\newblock The four-fifths rule for assessing adverse impact: An arithmetic,
  intuitive, and logical analysis of the rule and implications for future
  research and practice.
\newblock {\em Research in Personnel and Human Resources Management}, 23, 2004.

\bibitem[\protect\citeauthoryear{Bogen and Rieke}{2018}]{upturn}
Miranda Bogen and Aaron Rieke.
\newblock Help wanted: An examination of hiring algorithms, equity, and bias.
\newblock 2018.

\bibitem[\protect\citeauthoryear{Castillo}{2019}]{Castillo2019}
Carlos Castillo.
\newblock Fairness and transparency in ranking.
\newblock {\em SIGIR Forum}, 52(2):64–71, jan 2019.

\bibitem[\protect\citeauthoryear{Celis \bgroup \em et al.\egroup
  }{2018a}]{CKSDKV2018fair}
L.~Elisa Celis, Vijay Keswani, Damian Straszak, Amit Deshpande, Tarun Kathuria,
  and Nisheeth~K. Vishnoi.
\newblock Fair and diverse dpp-based data summarization.
\newblock In {\em Proceedings of the 35th International Conference on Machine
  Learning, {ICML} 2018, Stockholmsm{\"{a}}ssan, Stockholm, Sweden, July 10-15,
  2018}, volume~80 of {\em Proceedings of Machine Learning Research}, pages
  715--724. {PMLR}, 2018.

\bibitem[\protect\citeauthoryear{Celis \bgroup \em et al.\egroup
  }{2018b}]{CSV2018}
L.~Elisa Celis, Damian Straszak, and Nisheeth~K. Vishnoi.
\newblock Ranking with fairness constraints.
\newblock In {\em ICALP}, 2018.

\bibitem[\protect\citeauthoryear{Celis \bgroup \em et al.\egroup
  }{2019}]{CKSV2019controlling}
L.~Elisa Celis, Sayash Kapoor, Farnood Salehi, and Nisheeth Vishnoi.
\newblock Controlling polarization in personalization: An algorithmic
  framework.
\newblock In {\em Proceedings of the Conference on Fairness, Accountability,
  and Transparency}, FAT* '19, page 160–169. Association for Computing
  Machinery, 2019.

\bibitem[\protect\citeauthoryear{Celis \bgroup \em et al.\egroup
  }{2020a}]{CKV2020}
L.~Elisa Celis, Vijay Keswani, and Nisheeth~K. Vishnoi.
\newblock Data preprocessing to mitigate bias: A maximum entropy based
  approach.
\newblock In {\em ICML}, 2020.

\bibitem[\protect\citeauthoryear{Celis \bgroup \em et al.\egroup
  }{2020b}]{CMV2020}
L.~Elisa Celis, Anay Mehrotra, and Nisheeth~K. Vishnoi.
\newblock Interventions for ranking in the presence of implicit bias.
\newblock In {\em Proceedings of the 2020 Conference on Fairness,
  Accountability, and Transparency}, page 369–380. Association for Computing
  Machinery, 2020.

\bibitem[\protect\citeauthoryear{Chierichetti \bgroup \em et al.\egroup
  }{2017}]{CKLV2017}
Flavio Chierichetti, Ravi Kumar, Silvio Lattanzi, and Sergei Vassilvitskii.
\newblock Fair clustering through fairlets.
\newblock In I.~Guyon, U.~V. Luxburg, S.~Bengio, H.~Wallach, R.~Fergus,
  S.~Vishwanathan, and R.~Garnett, editors, {\em Advances in Neural Information
  Processing Systems}, volume~30. Curran Associates, Inc., 2017.

\bibitem[\protect\citeauthoryear{Collins}{2007}]{rooney}
Brian~W. Collins.
\newblock Tackling unconscious bias in hiring practices: The plight of the
  rooney rule.
\newblock {\em New York University Law Review}, 82, 2007.

\bibitem[\protect\citeauthoryear{Cousins and Vempala}{2018}]{CV2018}
Benjamin~R. Cousins and Santosh~S. Vempala.
\newblock Gaussian cooling and $\mathcal{O}^*(n^3)$ algorithms for volume and
  gaussian volume.
\newblock {\em SIAM J. Comput.}, 47:1237--1273, 2018.

\bibitem[\protect\citeauthoryear{Dastin}{2018}]{amazon}
Jeffrey Dastin.
\newblock Amazon scraps secret ai recruiting tool that showed bias against
  women.
\newblock 2018.

\bibitem[\protect\citeauthoryear{Diaz \bgroup \em et al.\egroup
  }{2020}]{DME+2020}
Fernando Diaz, Bhaskar Mitra, Michael~D. Ekstrand, Asia~J. Biega, and Ben
  Carterette.
\newblock {\em Evaluating Stochastic Rankings with Expected Exposure}, page
  275–284.
\newblock 2020.

\bibitem[\protect\citeauthoryear{Dua and Graff}{2017}]{Dua2019}
Dheeru Dua and Casey Graff.
\newblock {UCI} machine learning repository, 2017.

\bibitem[\protect\citeauthoryear{Dyer \bgroup \em et al.\egroup
  }{1991}]{DFK1991}
Martin Dyer, Alan Frieze, and Ravi Kannan.
\newblock A random polynomial-time algorithm for approximating the volume of
  convex bodies.
\newblock {\em J. ACM}, 38(1):1–17, jan 1991.

\bibitem[\protect\citeauthoryear{Gao and Shah}{2020}]{GS2020}
Ruoyuan Gao and Chirag Shah.
\newblock Toward creating a fairer ranking in search engine results.
\newblock {\em Information Processing and Management}, 57(1):102138, 2020.

\bibitem[\protect\citeauthoryear{Geyik \bgroup \em et al.\egroup
  }{2019}]{GAK2019}
Sahin~Cem Geyik, Stuart Ambler, and Krishnaram Kenthapadi.
\newblock Fairness-aware ranking in search and recommendation systems with
  application to linkedin talent search.
\newblock In {\em Proceedings of the 25th ACM SIGKDD International Conference
  on Knowledge Discovery and Data Mining}, KDD '19, page 2221–2231, New York,
  NY, USA, 2019. Association for Computing Machinery.

\bibitem[\protect\citeauthoryear{Gorantla \bgroup \em et al.\egroup
  }{2021}]{GDL2021}
Sruthi Gorantla, Amit Deshpande, and Anand Louis.
\newblock On the problem of underranking in group-fair ranking.
\newblock In Marina Meila and Tong Zhang, editors, {\em Proceedings of the 38th
  International Conference on Machine Learning}, volume 139 of {\em Proceedings
  of Machine Learning Research}, pages 3777--3787. PMLR, 18--24 Jul 2021.

\bibitem[\protect\citeauthoryear{Goto \bgroup \em et al.\egroup
  }{2016}]{GOTO201640}
Masahiro Goto, Atsushi Iwasaki, Yujiro Kawasaki, Ryoji Kurata, Yosuke Yasuda,
  and Makoto Yokoo.
\newblock Strategyproof matching with regional minimum and maximum quotas.
\newblock {\em Artificial Intelligence}, 235:40--57, 2016.

\bibitem[\protect\citeauthoryear{Hassani}{2021}]{bias3}
Bertrand~K. Hassani.
\newblock Societal bias reinforcement through machine learning: a credit
  scoring perspective.
\newblock {\em AI and Ethics}, 1:239--247, 2021.

\bibitem[\protect\citeauthoryear{Heuss \bgroup \em et al.\egroup
  }{2022}]{MSdR2022}
Maria Heuss, Fatemeh Sarvi, and Maarten de~Rijke.
\newblock Fairness of exposure in light of incomplete exposure estimation.
\newblock SIGIR '22, page 759–769, 2022.

\bibitem[\protect\citeauthoryear{Kannan and Vempala}{1997}]{KV1997}
Ravi Kannan and Santosh Vempala.
\newblock Sampling lattice points.
\newblock In {\em Proceedings of the Twenty-Ninth Annual ACM Symposium on
  Theory of Computing}, STOC '97, page 696–700, New York, NY, USA, 1997.
  Association for Computing Machinery.

\bibitem[\protect\citeauthoryear{Kletti \bgroup \em et al.\egroup
  }{2022}]{expohedron}
Till Kletti, Jean-Michel Renders, and Patrick Loiseau.
\newblock Introducing the expohedron for efficient pareto-optimal
  fairness-utility amortizations in repeated rankings.
\newblock WSDM '22. Association for Computing Machinery, 2022.

\bibitem[\protect\citeauthoryear{Kook \bgroup \em et al.\egroup
  }{2022}]{kook2022sampling}
Yunbum Kook, Yin~Tat Lee, Ruoqi Shen, and Santosh~S. Vempala.
\newblock Sampling with riemannian hamiltonian monte carlo in a constrained
  space, 2022.

\bibitem[\protect\citeauthoryear{Kuhlman \bgroup \em et al.\egroup
  }{2019}]{KVR2019}
Caitlin Kuhlman, MaryAnn VanValkenburg, and Elke Rundensteiner.
\newblock Fare: Diagnostics for fair ranking using pairwise error metrics.
\newblock In {\em The World Wide Web Conference}, WWW '19, page 2936–2942,
  New York, NY, USA, 2019. Association for Computing Machinery.

\bibitem[\protect\citeauthoryear{Lov\'{a}sz and Vempala}{2006}]{LV2006}
L\'{a}szl\'{o} Lov\'{a}sz and Santosh Vempala.
\newblock Hit-and-run from a corner.
\newblock {\em SIAM J. Comput.}, 35(4):985–1005, apr 2006.

\bibitem[\protect\citeauthoryear{Memarrast \bgroup \em et al.\egroup
  }{2021}]{robustltr}
Omid Memarrast, Ashkan Rezaei, Rizal Fathony, and Brian~D. Ziebart.
\newblock Fairness for robust learning to rank.
\newblock {\em CoRR}, abs/2112.06288, 2021.

\bibitem[\protect\citeauthoryear{Narasimhan \bgroup \em et al.\egroup
  }{2020}]{NCGW2020}
Harikrishna Narasimhan, Andy Cotter, Maya Gupta, and Serena~Lutong Wang.
\newblock Pairwise fairness for ranking and regression.
\newblock In {\em 33rd AAAI Conference on Artificial Intelligence}, 2020.

\bibitem[\protect\citeauthoryear{Okonofua and Eberhardt}{2015}]{bias2}
Jason~A. Okonofua and Jennifer~L. Eberhardt.
\newblock Two strikes: Race and the disciplining of young students.
\newblock {\em Psychological Science}, 26(5):617--624, 2015.

\bibitem[\protect\citeauthoryear{Pak}{2000}]{Pak2000OnSI}
Igor Pak.
\newblock On sampling integer points in polyhedra.
\newblock {\em Foundations of Computational Mathematics}, 2000.

\bibitem[\protect\citeauthoryear{Singh and Joachims}{2018}]{SJ2018}
Ashudeep Singh and Thorsten Joachims.
\newblock Fairness of exposure in rankings.
\newblock In {\em Proceedings of the 24th ACM SIGKDD International Conference
  on Knowledge Discovery and Data Mining}, page 2219–2228, 2018.

\bibitem[\protect\citeauthoryear{Singh \bgroup \em et al.\egroup
  }{2021}]{SKJ2021}
Ashudeep Singh, David Kempe, and Thorsten Joachims.
\newblock Fairness in ranking under uncertainty, 2021.

\bibitem[\protect\citeauthoryear{Stoyanovich \bgroup \em et al.\egroup
  }{2018}]{SYV2018}
Julia Stoyanovich, Ke~Yang, and HV~Jagadish.
\newblock Online set selection with fairness and diversity constraints.
\newblock Advances in Database Technology - EDBT, pages 241--252, 2018.

\bibitem[\protect\citeauthoryear{Uhlmann and Cohen}{2005}]{bias1}
Eric~Luis Uhlmann and Geoffrey~L. Cohen.
\newblock Constructed criteria: Redefining merit to justify discrimination.
\newblock {\em Psychological Science}, 16(6):474--480, 2005.

\bibitem[\protect\citeauthoryear{\v{S}tefankovi\v{c} \bgroup \em et al.\egroup
  }{2012}]{SVV2012}
Daniel \v{S}tefankovi\v{c}, Santosh Vempala, and Eric Vigoda.
\newblock A deterministic polynomial-time approximation scheme for counting
  knapsack solutions.
\newblock {\em SIAM Journal on Computing}, 41(2):356--366, 2012.

\bibitem[\protect\citeauthoryear{Wu \bgroup \em et al.\egroup }{2018}]{WZW2018}
Yongkai Wu, Lu~Zhang, and Xintao Wu.
\newblock On discrimination discovery and removal in ranked data using causal
  graph.
\newblock In {\em Proceedings of the 24th ACM SIGKDD International Conference
  on Knowledge Discovery and Data Mining}, KDD '18, page 2536–2544, New York,
  NY, USA, 2018. Association for Computing Machinery.

\bibitem[\protect\citeauthoryear{Yang and Stoyanovich}{2017}]{YS2017}
Ke~Yang and Julia Stoyanovich.
\newblock Measuring fairness in ranked outputs.
\newblock In {\em Proceedings of the 29th International Conference on
  Scientific and Statistical Database Management}, SSDBM '17, New York, NY,
  USA, 2017. Association for Computing Machinery.

\bibitem[\protect\citeauthoryear{Zehlike \bgroup \em et al.\egroup
  }{2017}]{ZBCHMB2017}
Meike Zehlike, Francesco Bonchi, Carlos Castillo, Sara Hajian, Mohamed Megahed,
  and Ricardo Baeza-Yates.
\newblock Fa*ir: A fair top-k ranking algorithm.
\newblock In {\em Proceedings of the 2017 ACM on Conference on Information and
  Knowledge Management}, CIKM '17, page 1569–1578, New York, NY, USA, 2017.
  Association for Computing Machinery.

\bibitem[\protect\citeauthoryear{Zehlike \bgroup \em et al.\egroup
  }{2022a}]{ZEHLIKE2022102707}
Meike Zehlike, Tom Sühr, Ricardo Baeza-Yates, Francesco Bonchi, Carlos
  Castillo, and Sara Hajian.
\newblock Fair top-k ranking with multiple protected groups.
\newblock {\em Information Processing and Management}, 59(1):102707, 2022.

\bibitem[\protect\citeauthoryear{Zehlike \bgroup \em et al.\egroup
  }{2022b}]{Zehlike_part1}
Meike Zehlike, Ke~Yang, and Julia Stoyanovich.
\newblock Fairness in ranking, part i: Score-based ranking.
\newblock {\em ACM Comput. Surv.}, apr 2022.
\newblock Just Accepted.

\end{thebibliography}
\appendix

\section{Additional Experiments}

\begin{figure*}
	\centering
	\begin{subfigure}[b]{\linewidth}
		\centering
		\includegraphics[scale=0.125]{legend.pdf} 
	\end{subfigure}
	
	\begin{subfigure}[b]{0.33\linewidth}
		\centering
		\includegraphics[scale=0.17]{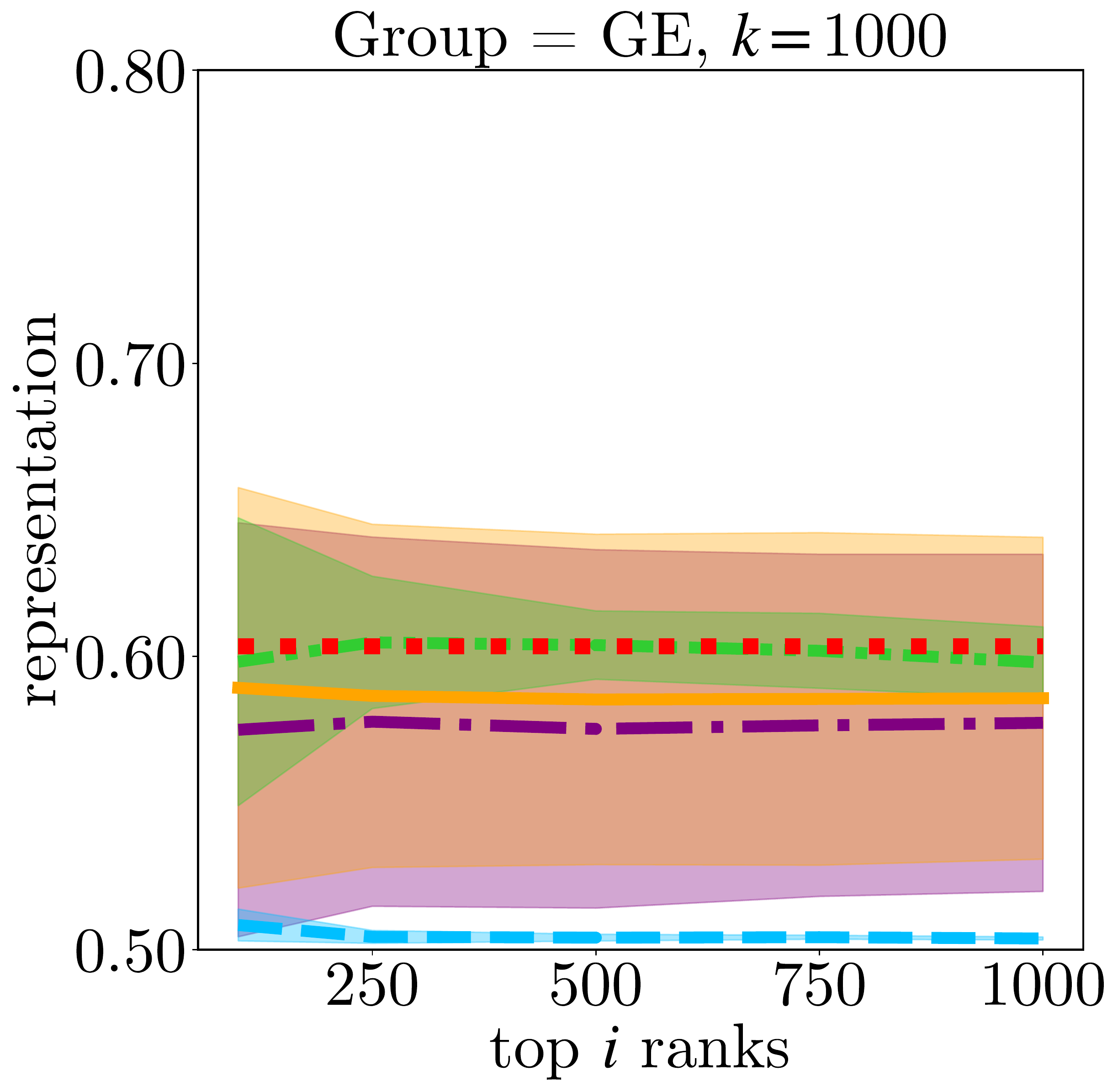} 
		\label{fig:german_25_rep}
	\end{subfigure}
	\begin{subfigure}[b]{0.66\linewidth}
		\centering
		\includegraphics[scale=0.17]{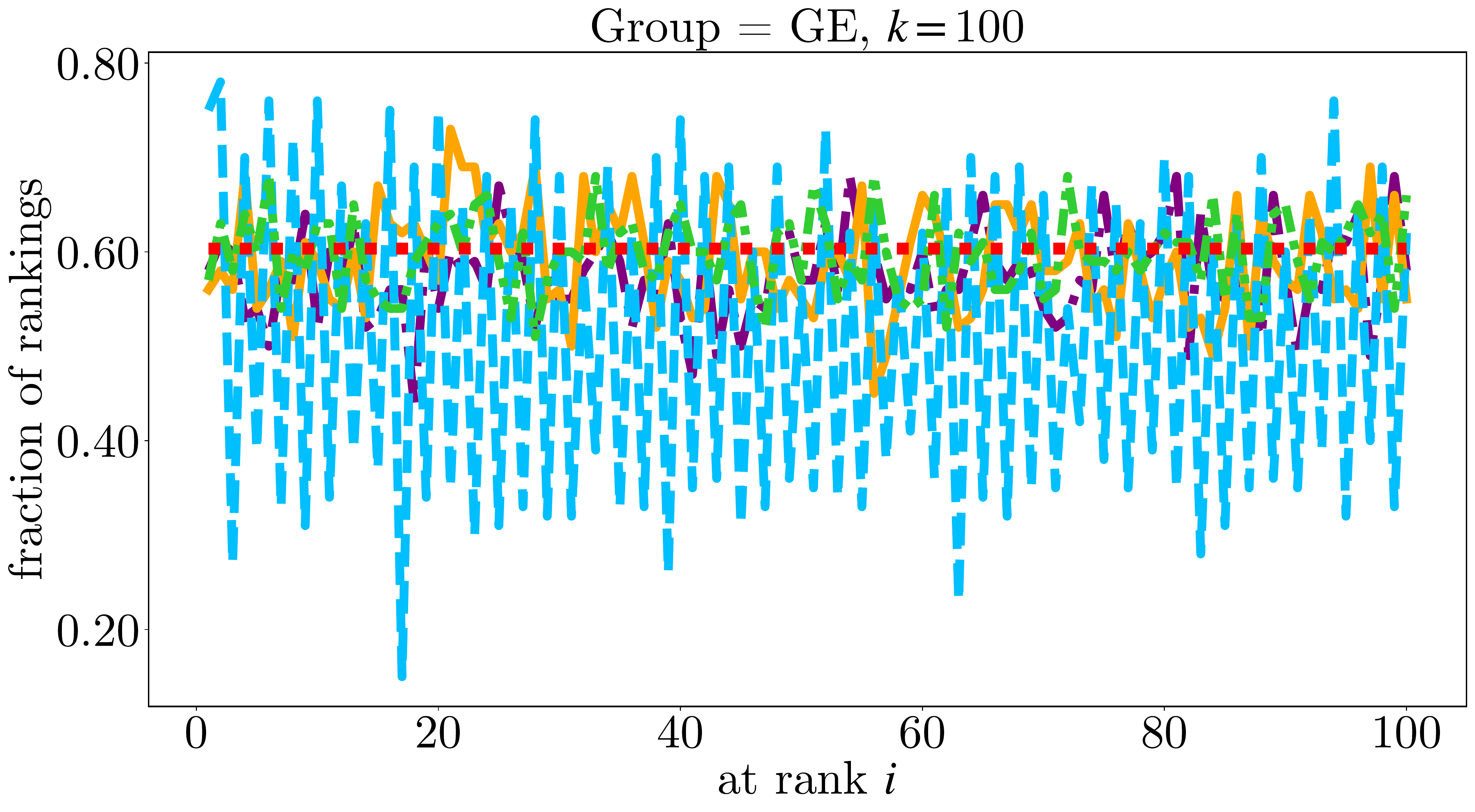} 
		\label{fig:german_35_rep}
	\end{subfigure}
	
	\begin{subfigure}[b]{0.33\linewidth}
		\centering
		\includegraphics[scale=0.17]{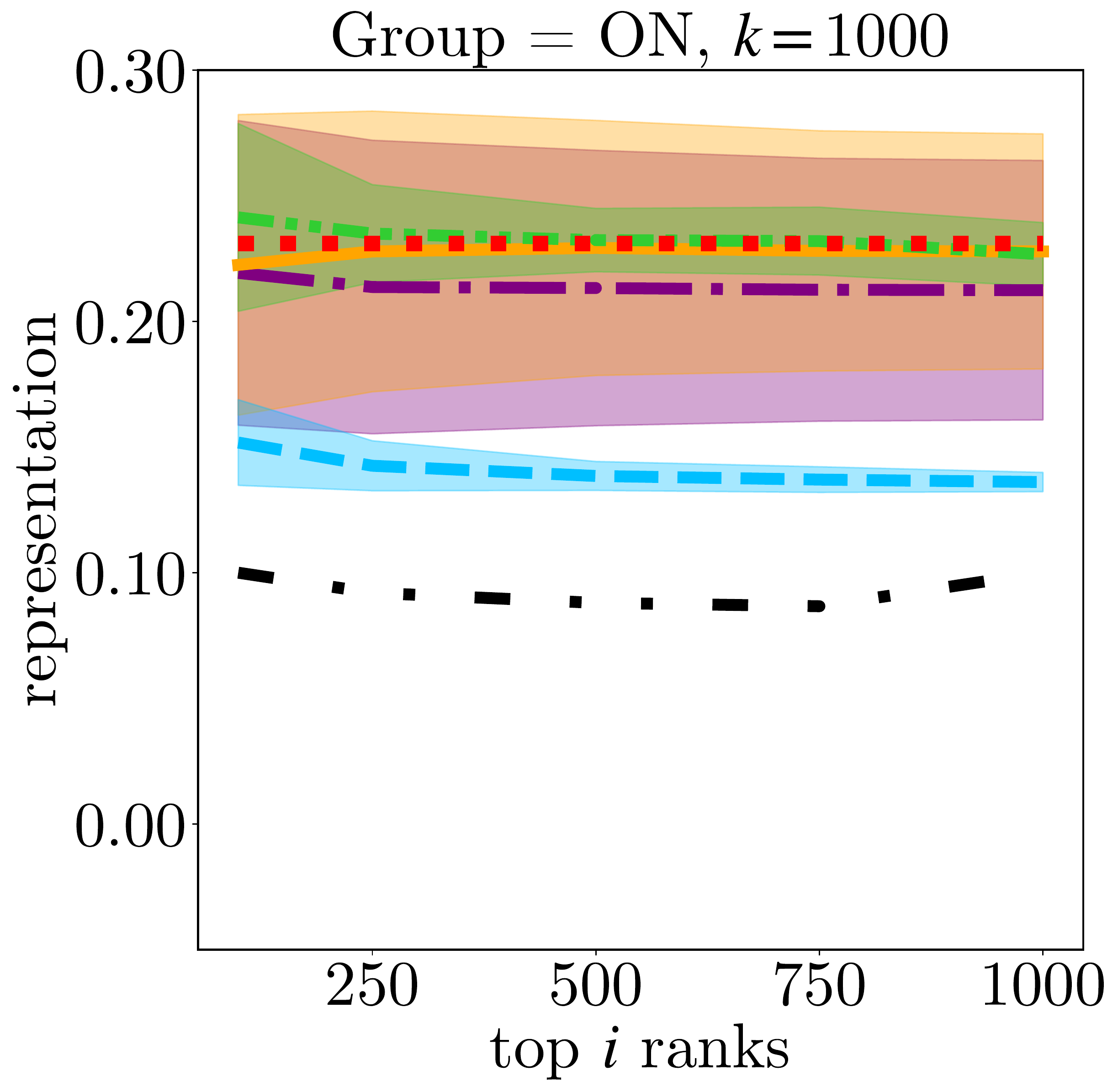} 
		\label{fig:german_25_rep}
	\end{subfigure}
	\begin{subfigure}[b]{0.66\linewidth}
		\centering
		\includegraphics[scale=0.17]{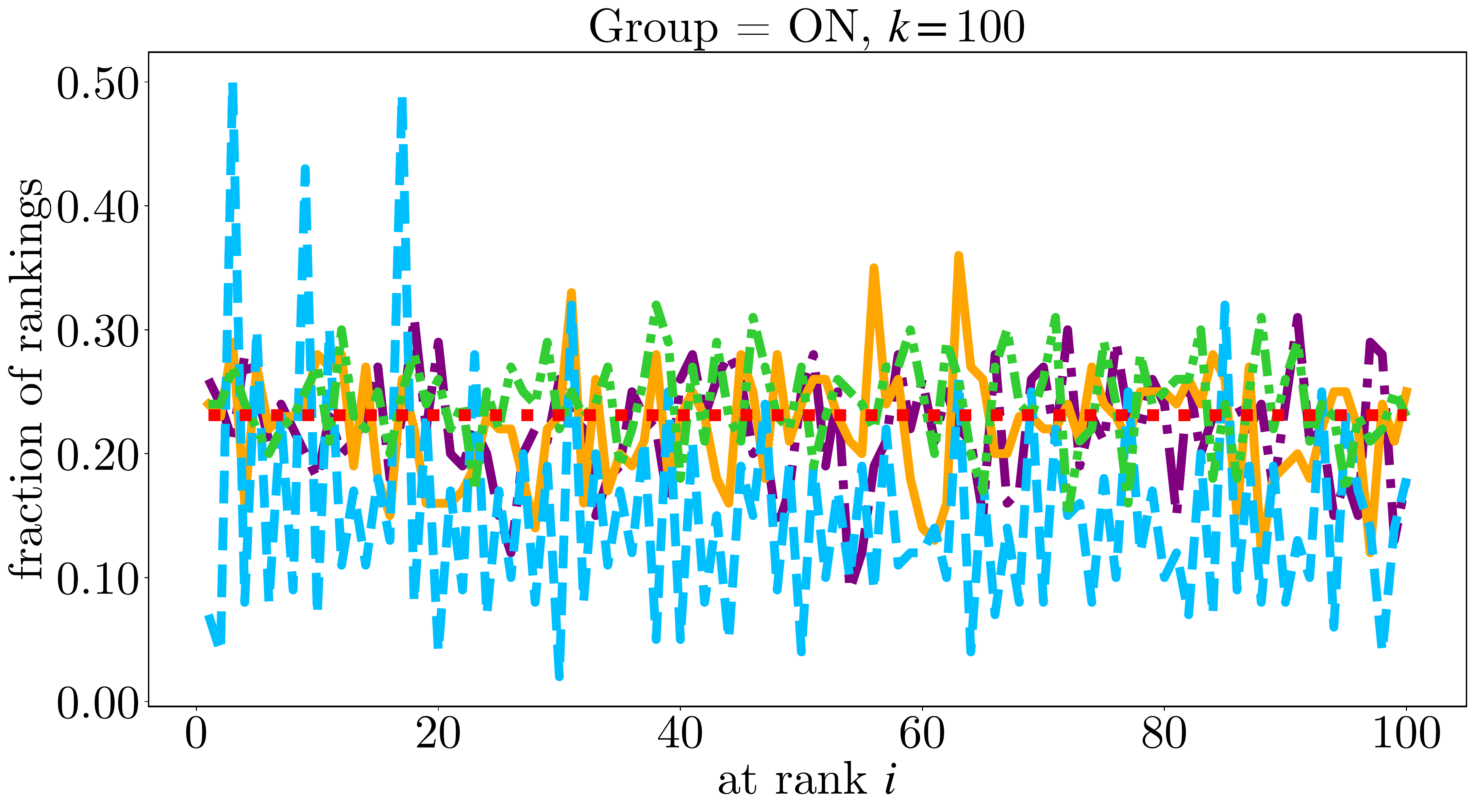} 
		\label{fig:german_35_rep}
	\end{subfigure}
	
	\begin{subfigure}[b]{0.33\linewidth}
		\centering
		\includegraphics[scale=0.17]{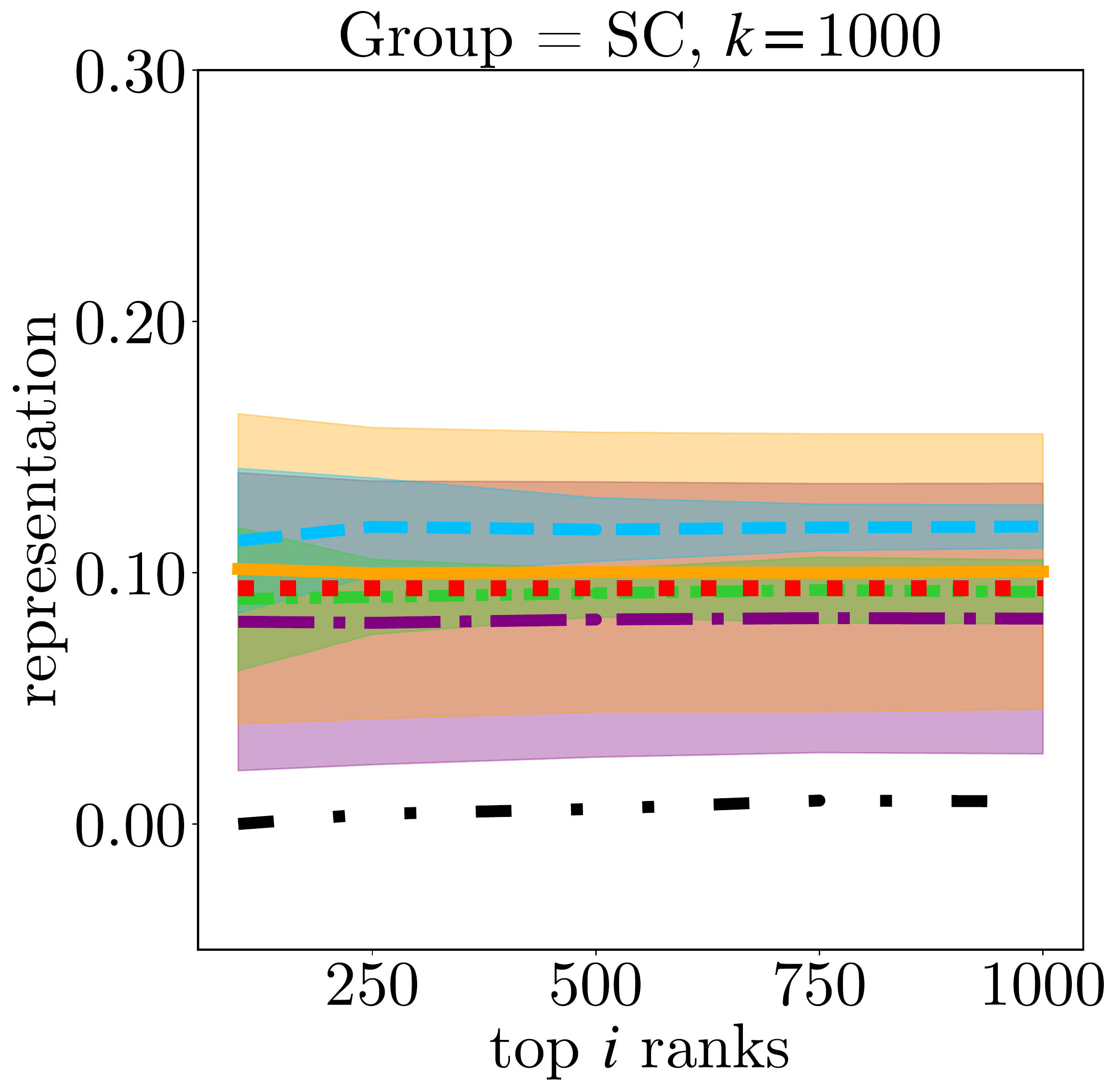} 
		\label{fig:german_25_rep}
	\end{subfigure}
	\begin{subfigure}[b]{0.66\linewidth}
		\centering
		\includegraphics[scale=0.17]{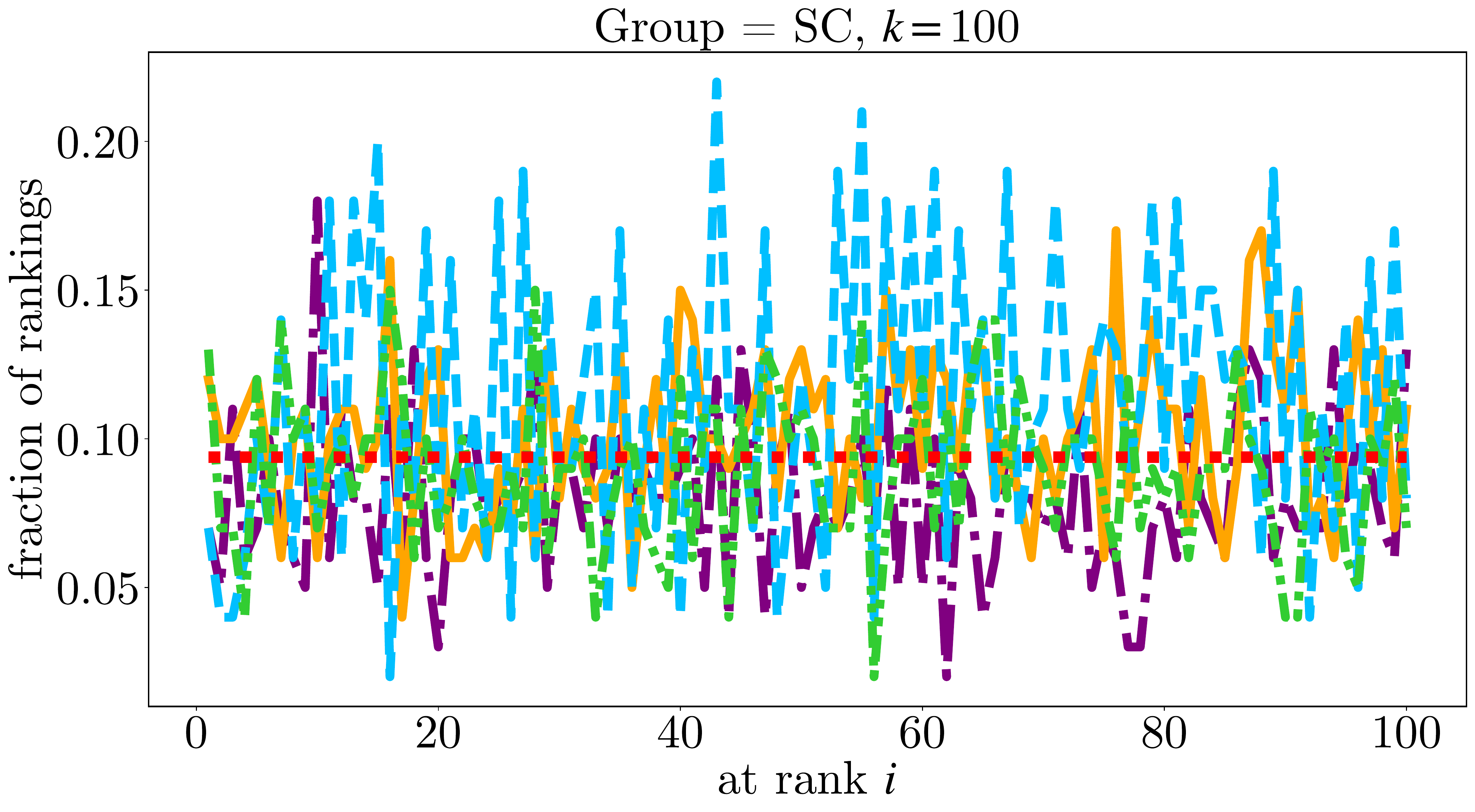} 
		\label{fig:german_35_rep}
	\end{subfigure}
	
	\begin{subfigure}[b]{0.33\linewidth}
		\centering
		\includegraphics[scale=0.17]{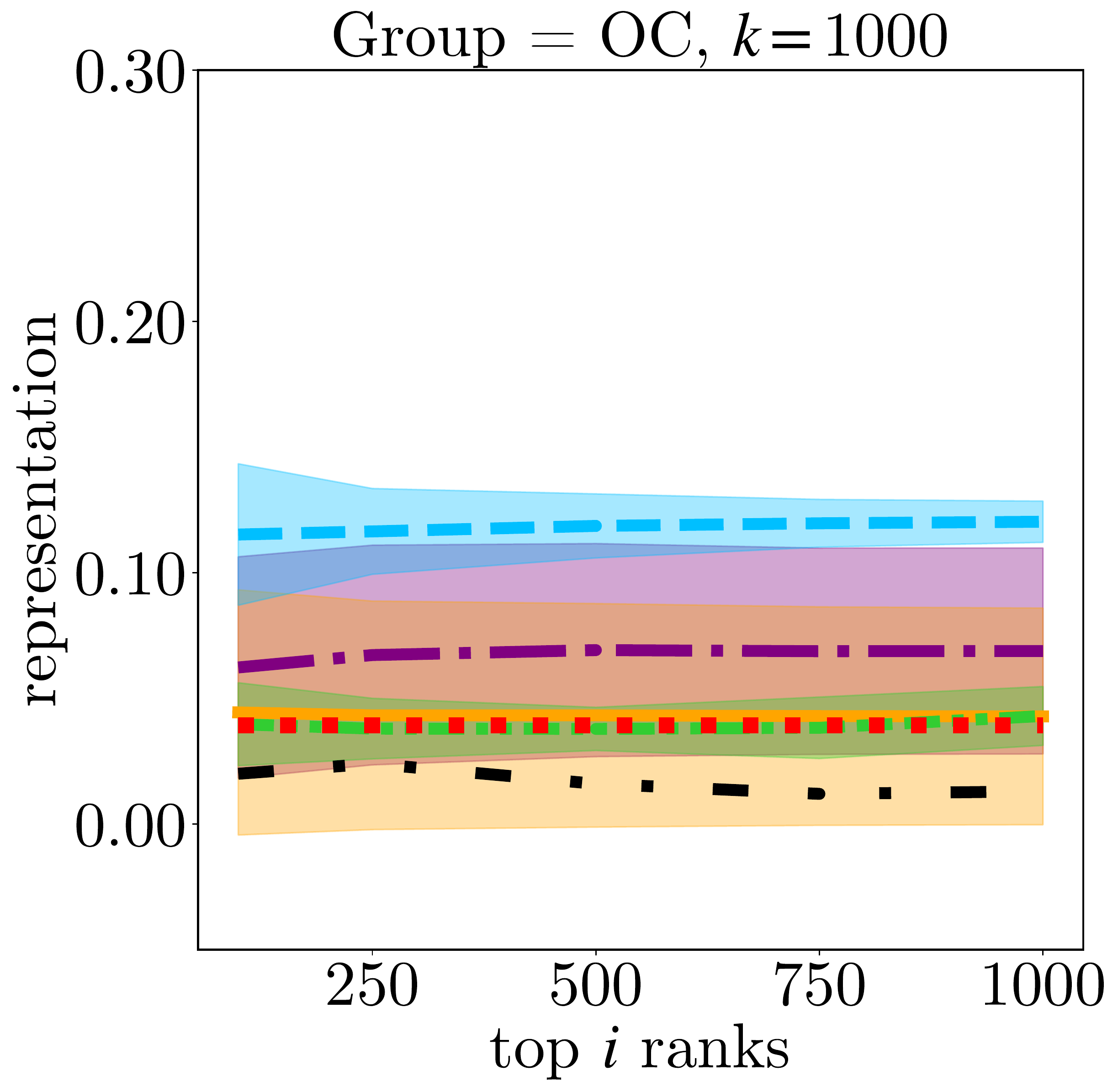} 
		\label{fig:german_25_rep}
	\end{subfigure}
	\begin{subfigure}[b]{0.66\linewidth}
		\centering
		\includegraphics[scale=0.17]{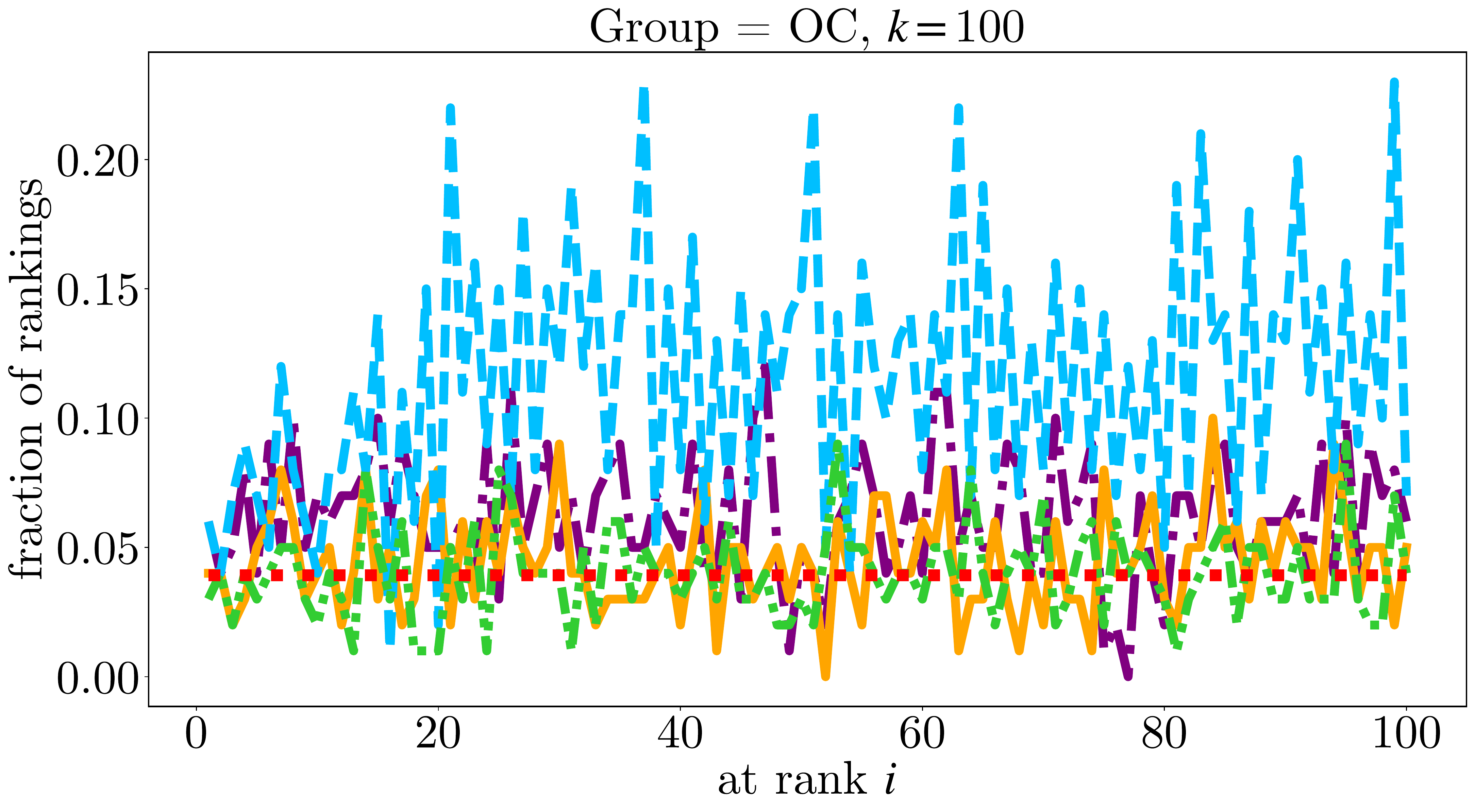} 
		\label{fig:german_35_rep}
	\end{subfigure}

	\begin{subfigure}[b]{0.25\linewidth}
		\centering
		\includegraphics[scale=0.13]{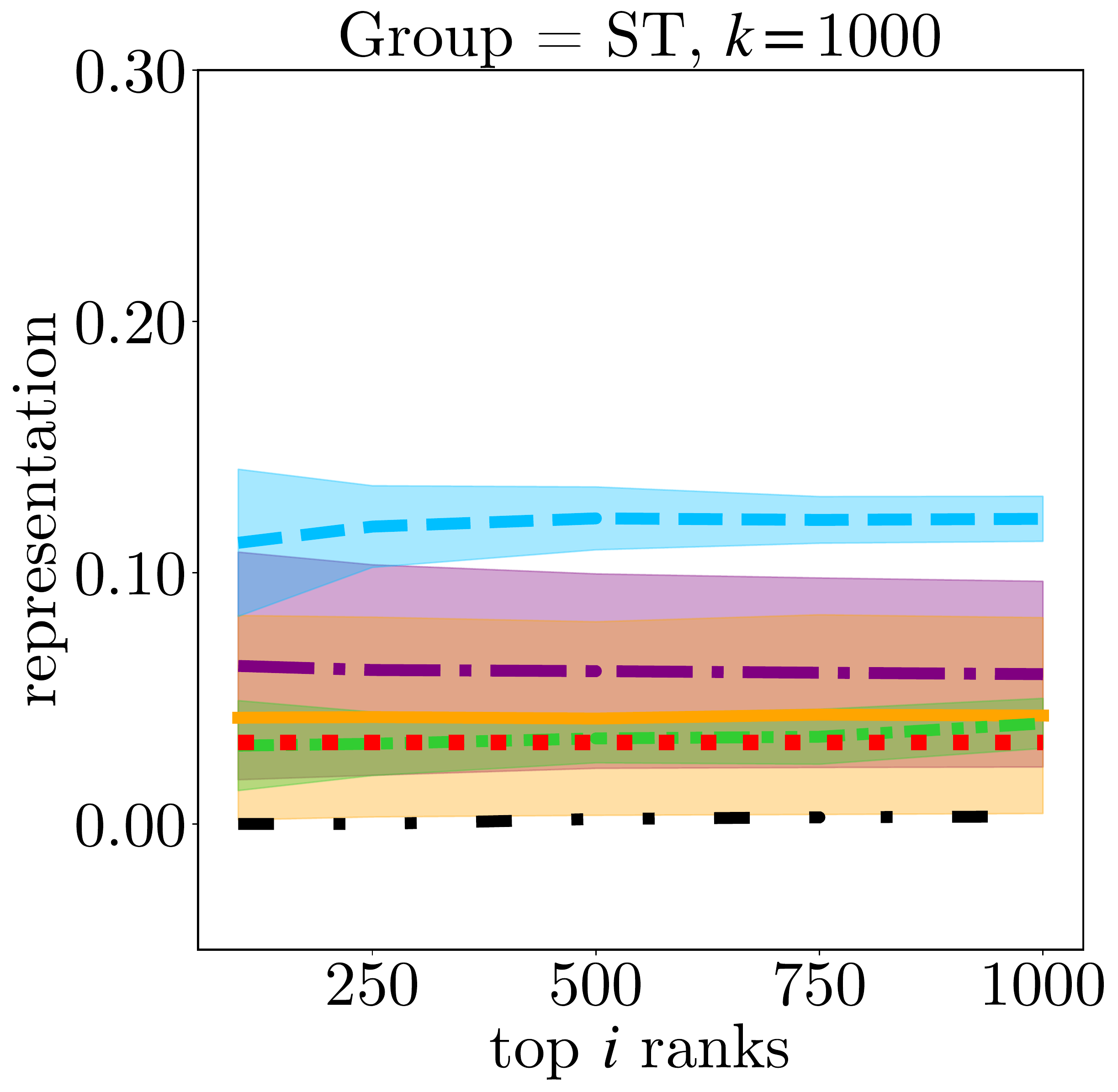} 
		\label{fig:german_25_rep}
	\end{subfigure}
	\begin{subfigure}[b]{0.48\linewidth}
		\centering
		\includegraphics[scale=0.13]{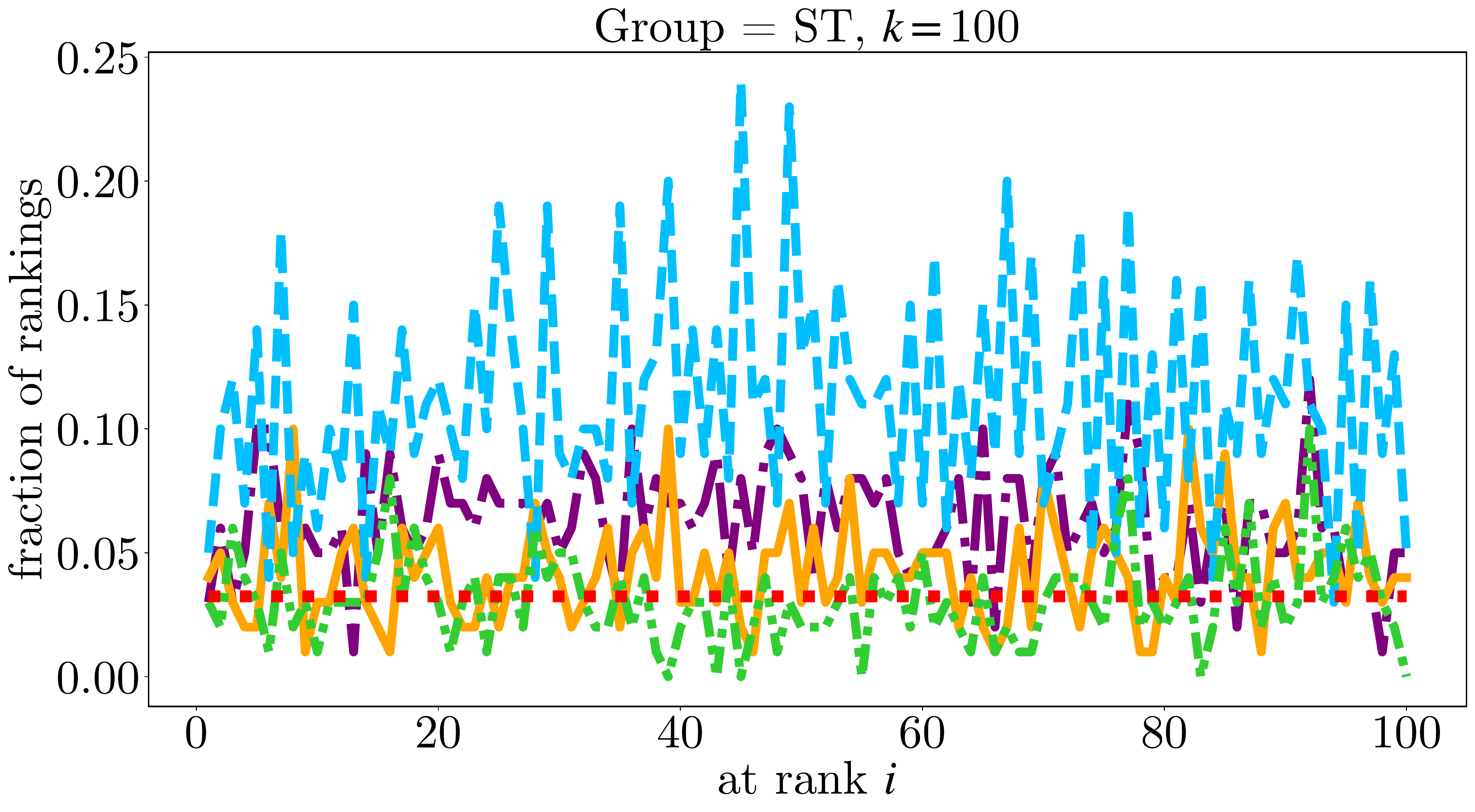} 
		\label{fig:german_35_rep}
	\end{subfigure}
	\begin{subfigure}[b]{0.25\linewidth}
		\centering
		\includegraphics[scale=0.13]{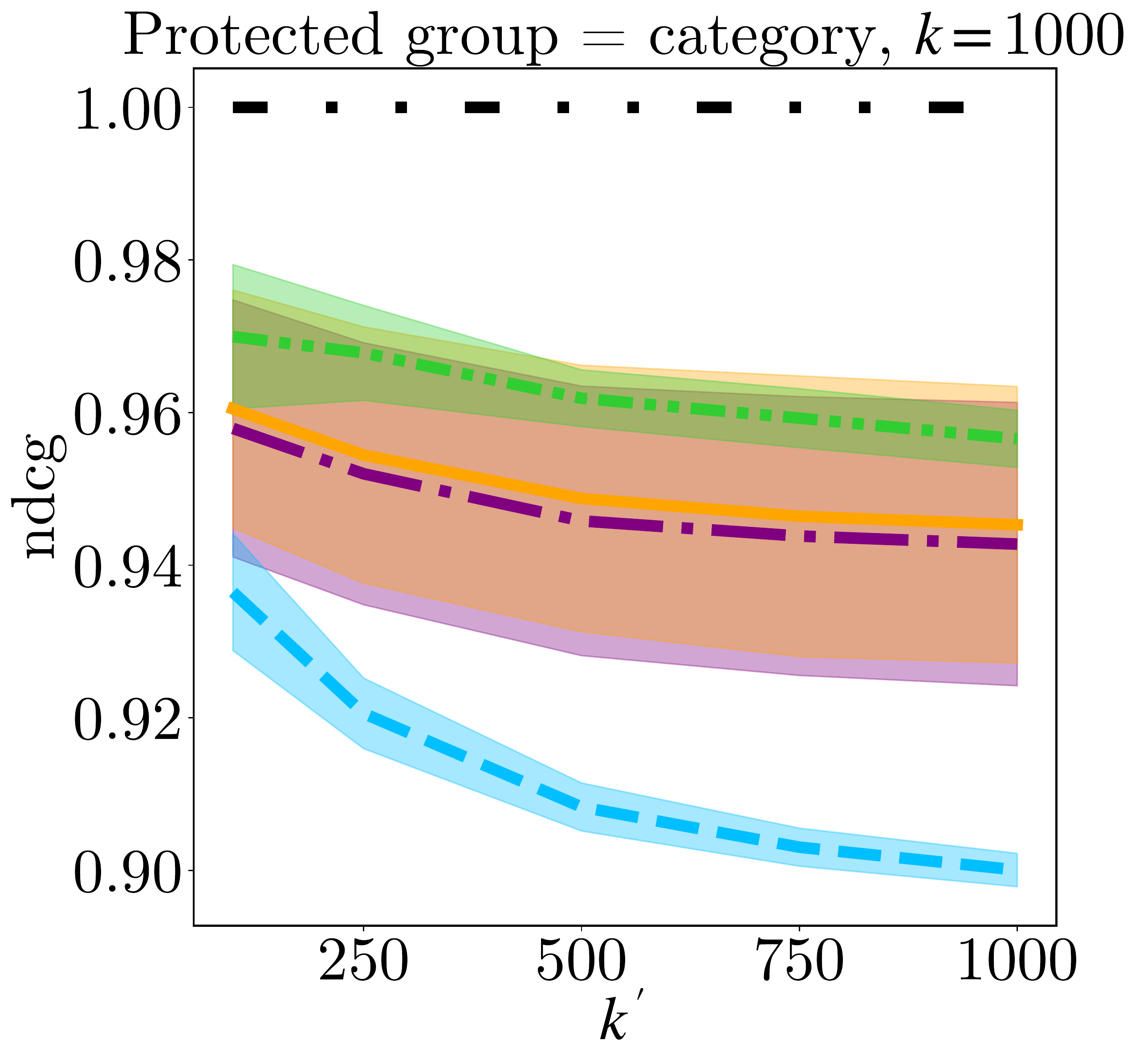} 
		\label{fig:german_35_rep}
	\end{subfigure}

	\caption{Results on the JEE 2009 dataset with \textit{birth category} as the protected group (with $5$ groups). For Fair $\epsilon$-greedy we use $\epsilon =0.3$.}
	\label{fig:jee_category}
\end{figure*}

\begin{figure*}[t]
	\centering
	\begin{subfigure}[b]{\linewidth}
		\centering
		\includegraphics[scale=0.12]{legend.pdf} 
	\end{subfigure}
	
	\begin{subfigure}[b]{0.25\linewidth}
		\centering
		\includegraphics[scale=0.13]{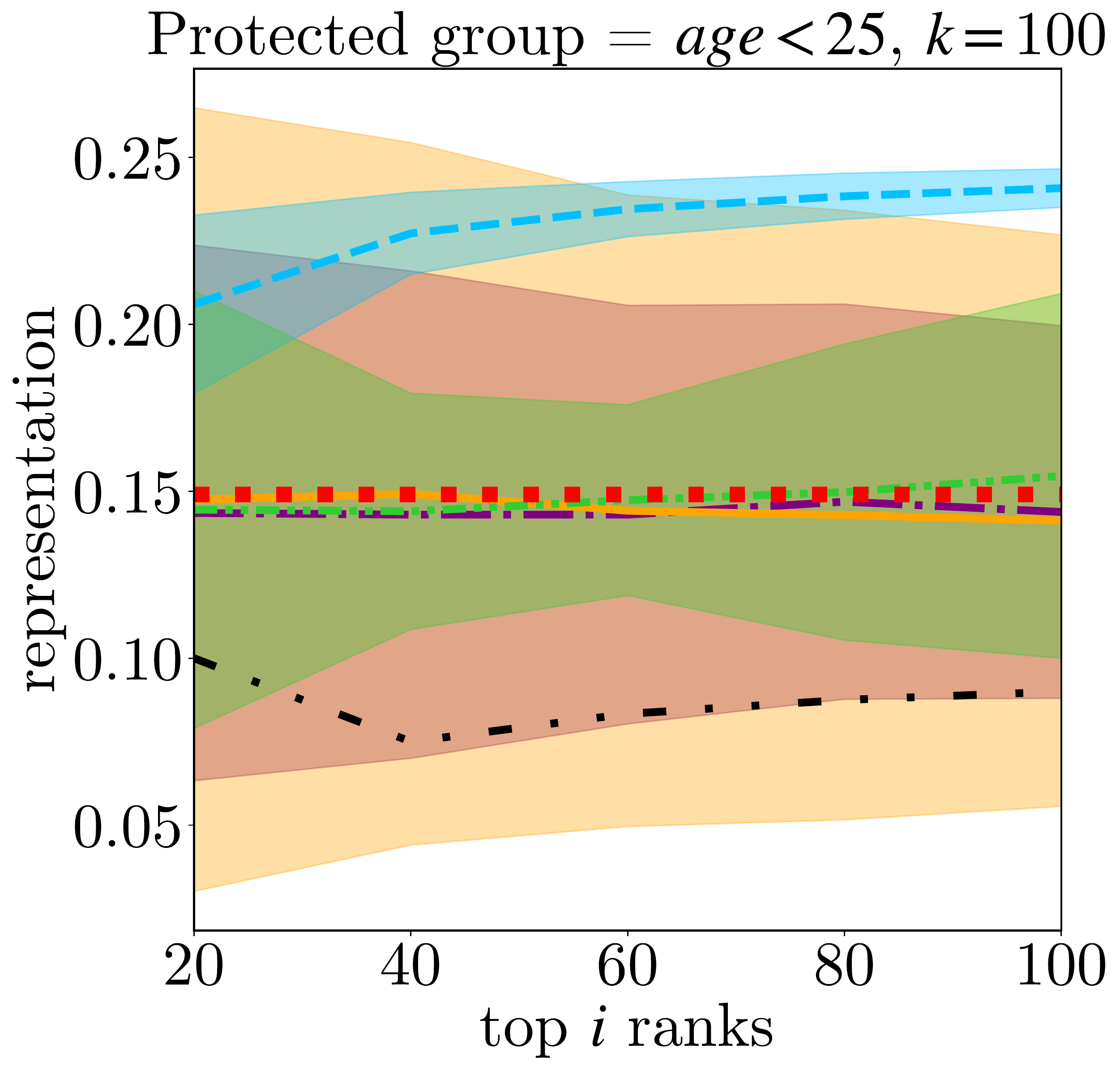} 
		\label{fig:german_25_rep}
	\end{subfigure}
	\begin{subfigure}[b]{0.48\linewidth}
		\centering
		\includegraphics[scale=0.13]{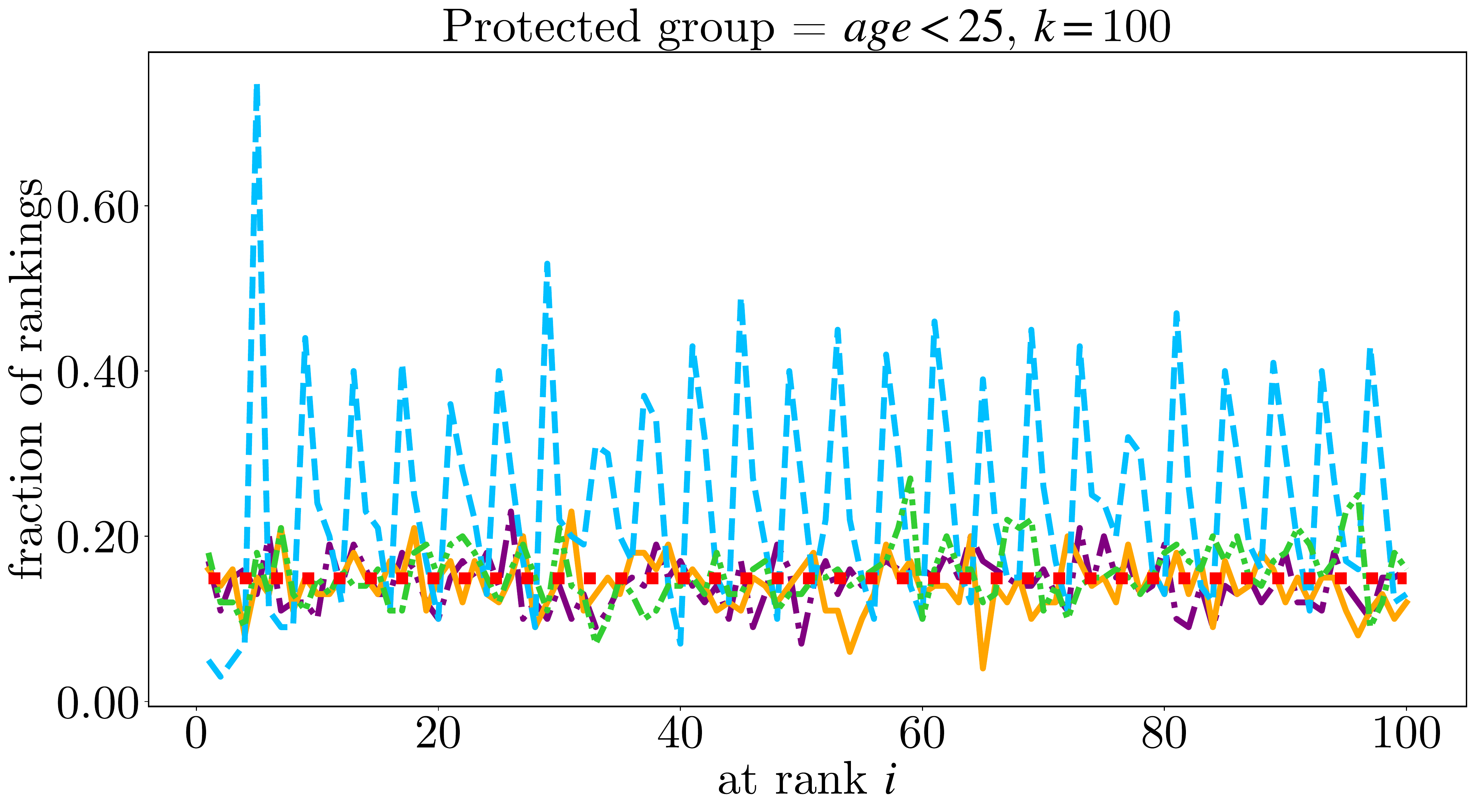} 
		\label{fig:german_35_rep}
	\end{subfigure}
	\begin{subfigure}[b]{0.25\linewidth}
		\centering
		\includegraphics[scale=0.13]{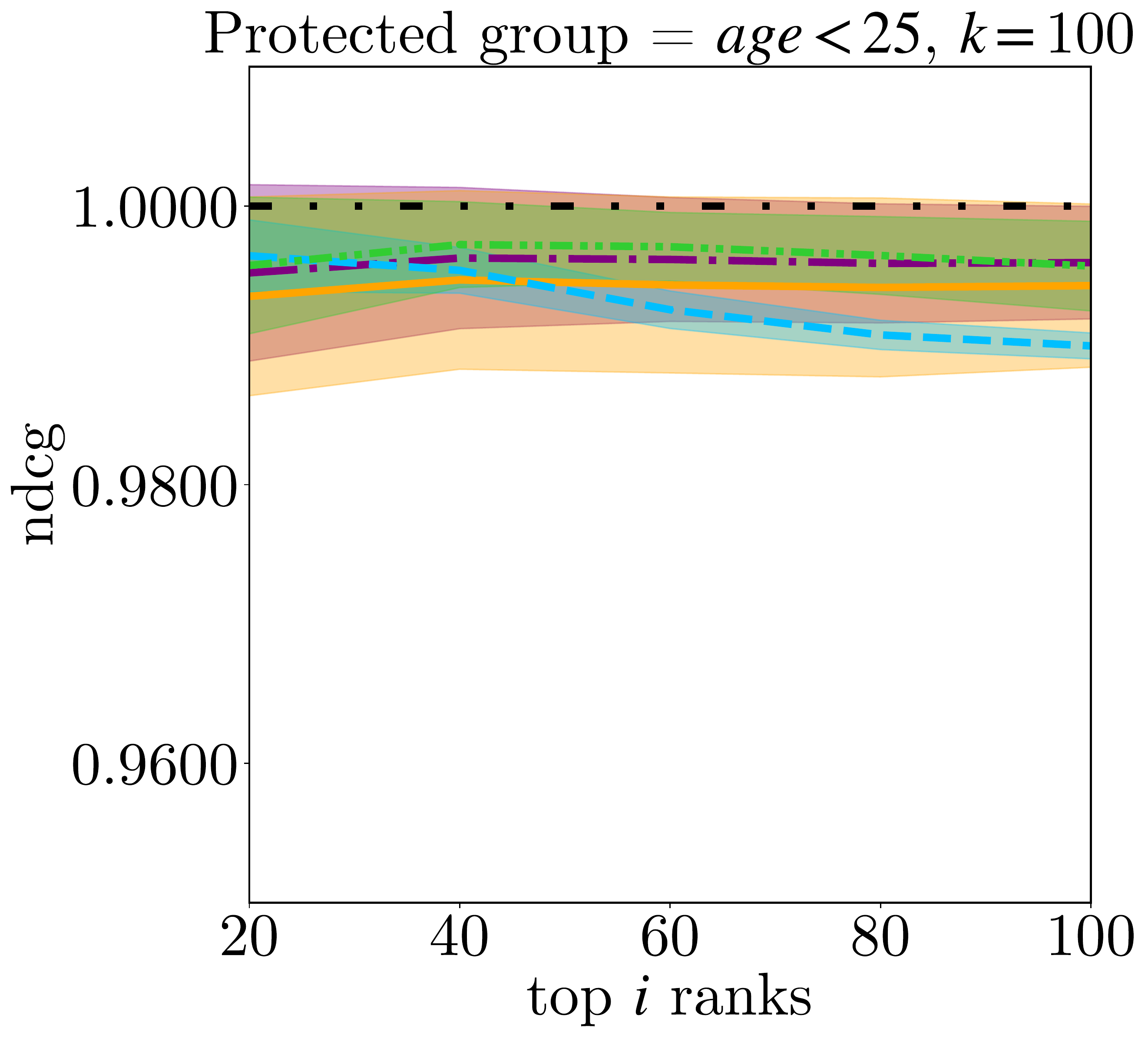} 
		\label{fig:german_35_rep}
	\end{subfigure}

	\begin{subfigure}[b]{0.25\linewidth}
		\centering
		\includegraphics[scale=0.13]{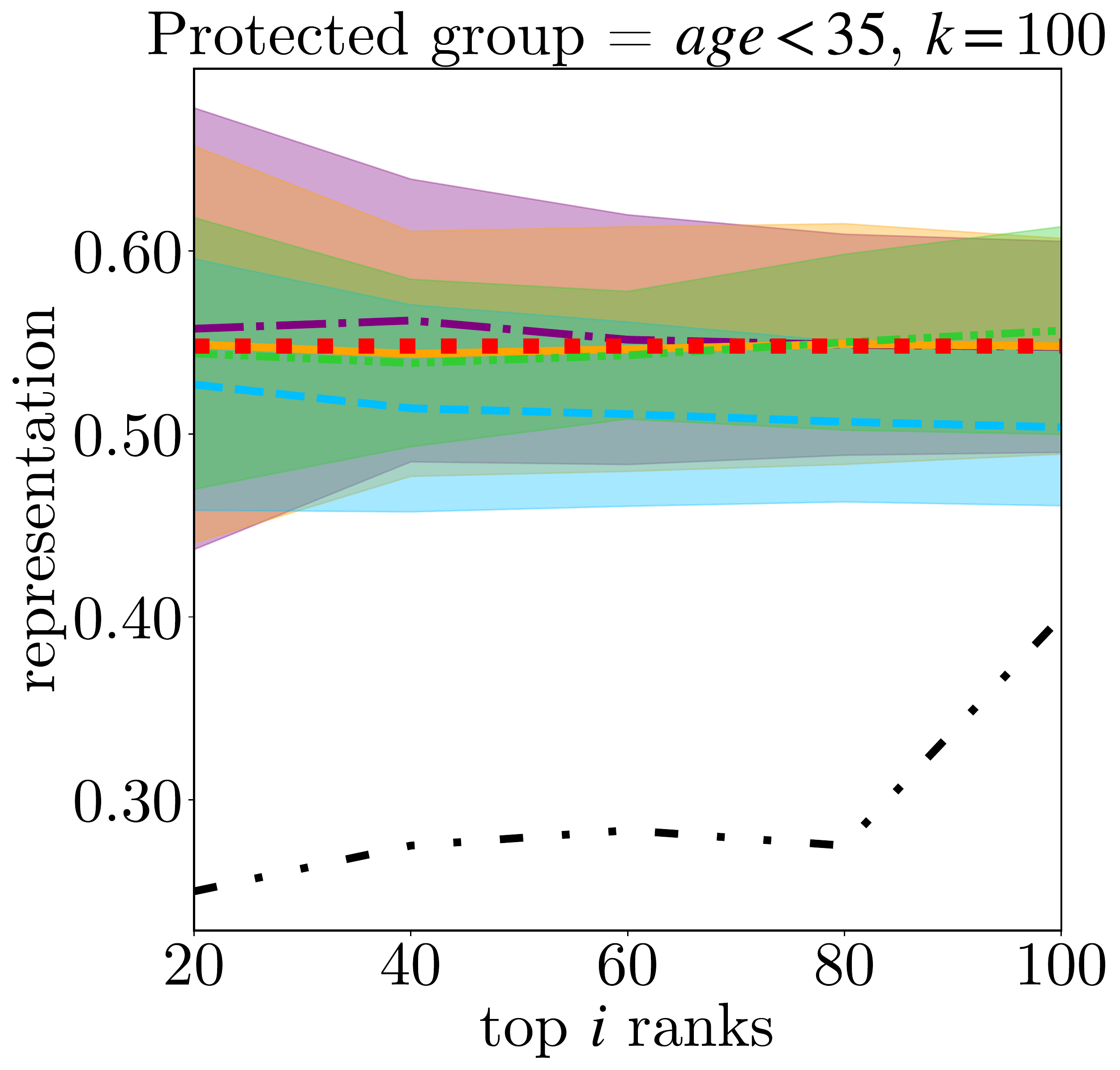} 
		\label{fig:german_25_prop}
	\end{subfigure}
	\begin{subfigure}[b]{0.48\linewidth}
		\centering
		\includegraphics[scale=0.13]{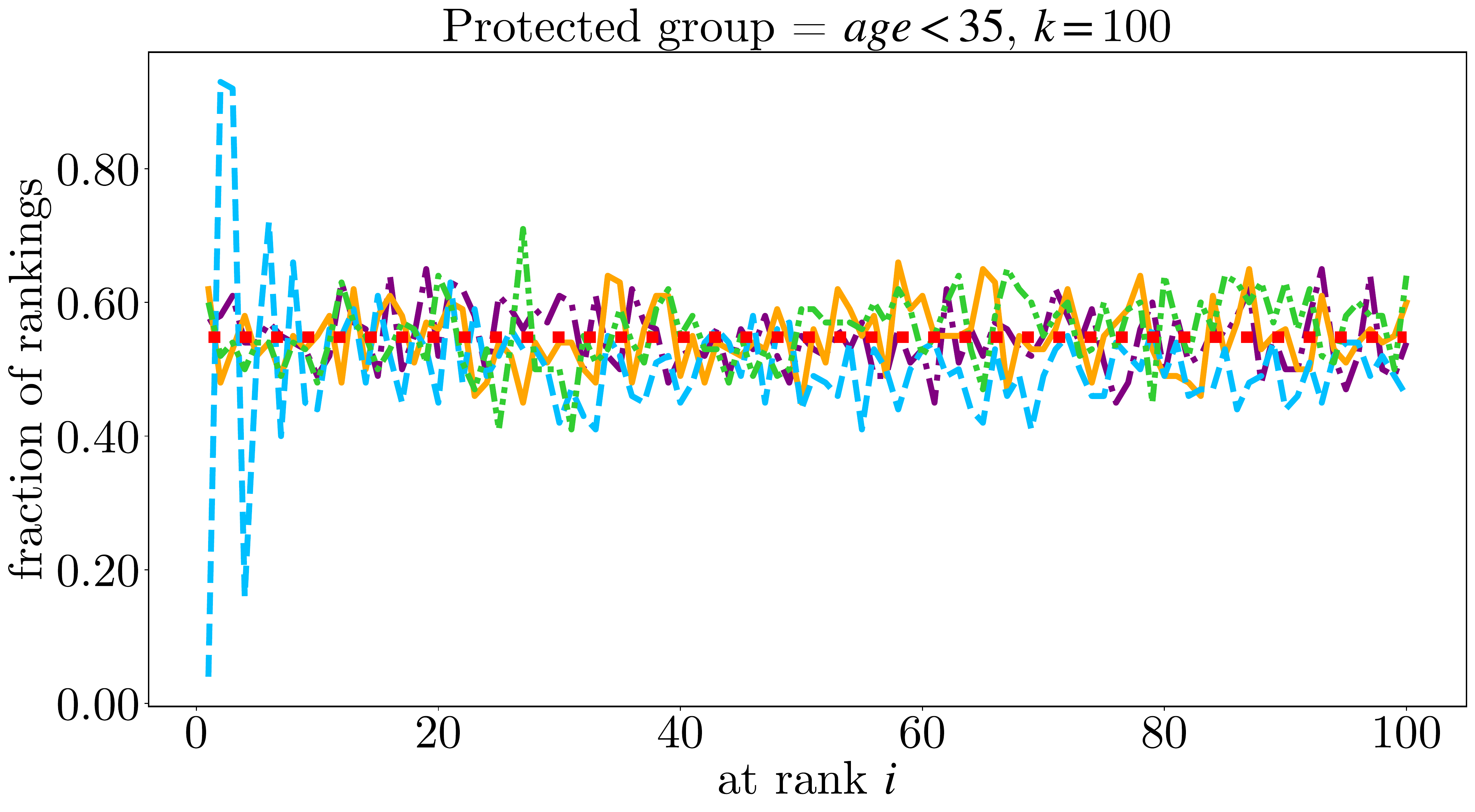} 
		\label{fig:german_35_prop}
	\end{subfigure}
	\begin{subfigure}[b]{0.25\linewidth}
		\centering
		\includegraphics[scale=0.13]{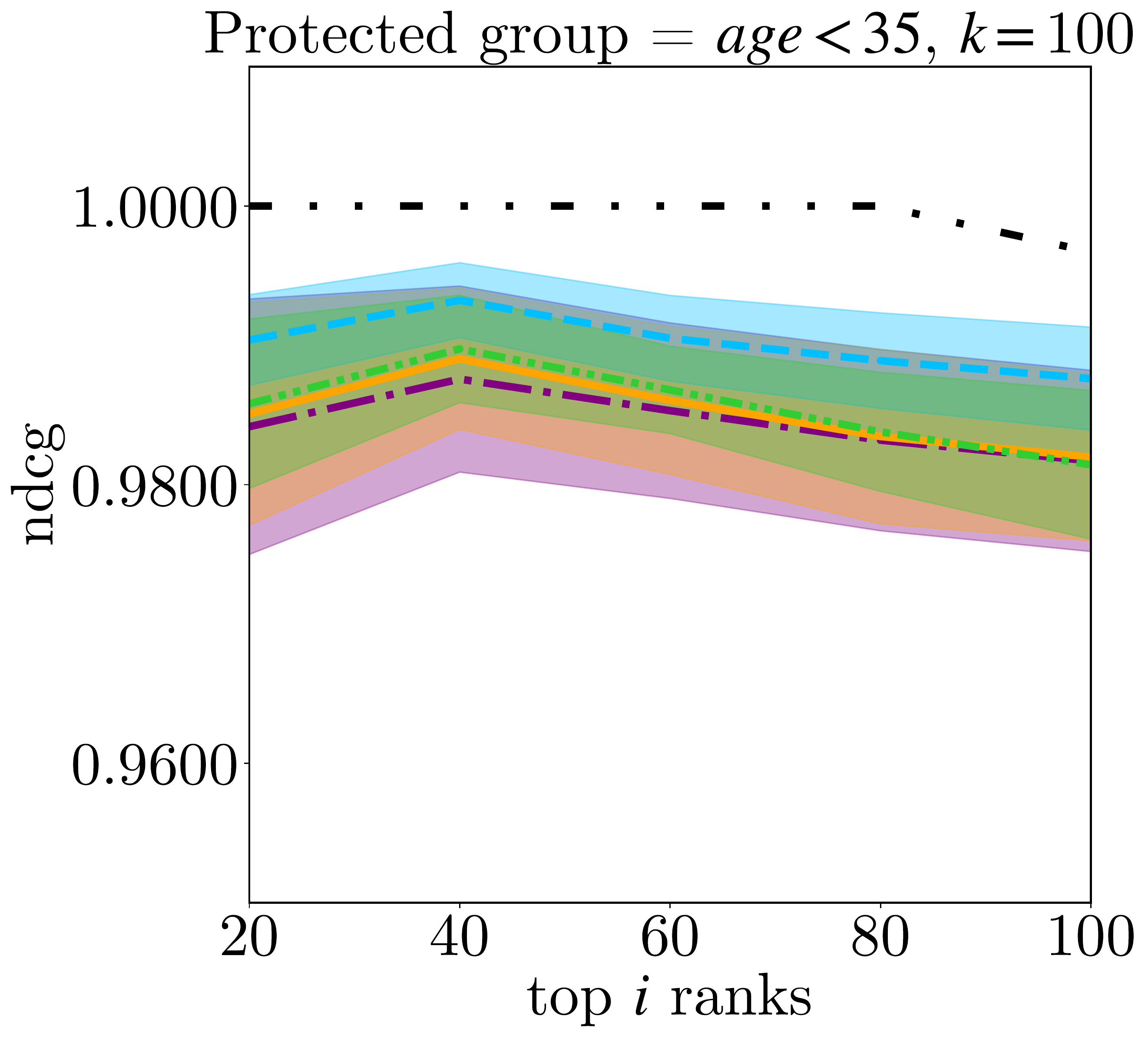} 
		\label{fig:german_35_rep}
	\end{subfigure}

	\caption{Results on the German Credit Risk dataset with \textit{age} $< 25$ as the protected group in the first row and \textit{age} $< 35$ as the protected group in the first row. For Fair $\epsilon$-greedy we use $\epsilon =0.15$.}
	\label{fig:german_binary_eps015}
\end{figure*}
\begin{figure*}[t]
	\centering
	\begin{subfigure}[b]{\linewidth}
		\centering
		\includegraphics[scale=0.12]{legend.pdf} 
	\end{subfigure}
	
	\begin{subfigure}[b]{0.25\linewidth}
		\centering
		\includegraphics[scale=0.13]{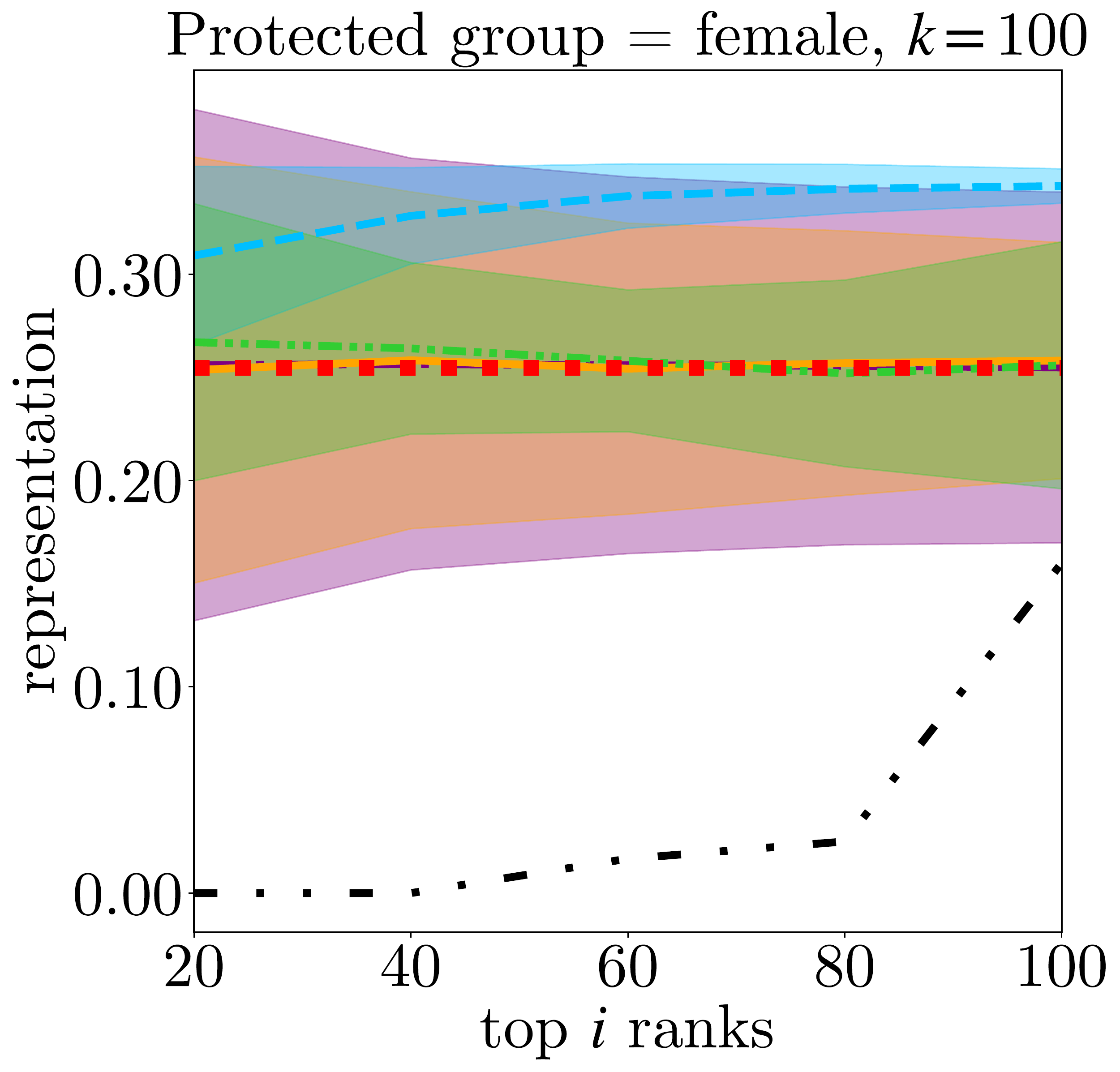} 
		\label{fig:german_25_rep}
	\end{subfigure}
	\begin{subfigure}[b]{0.48\linewidth}
		\centering
		\includegraphics[scale=0.13]{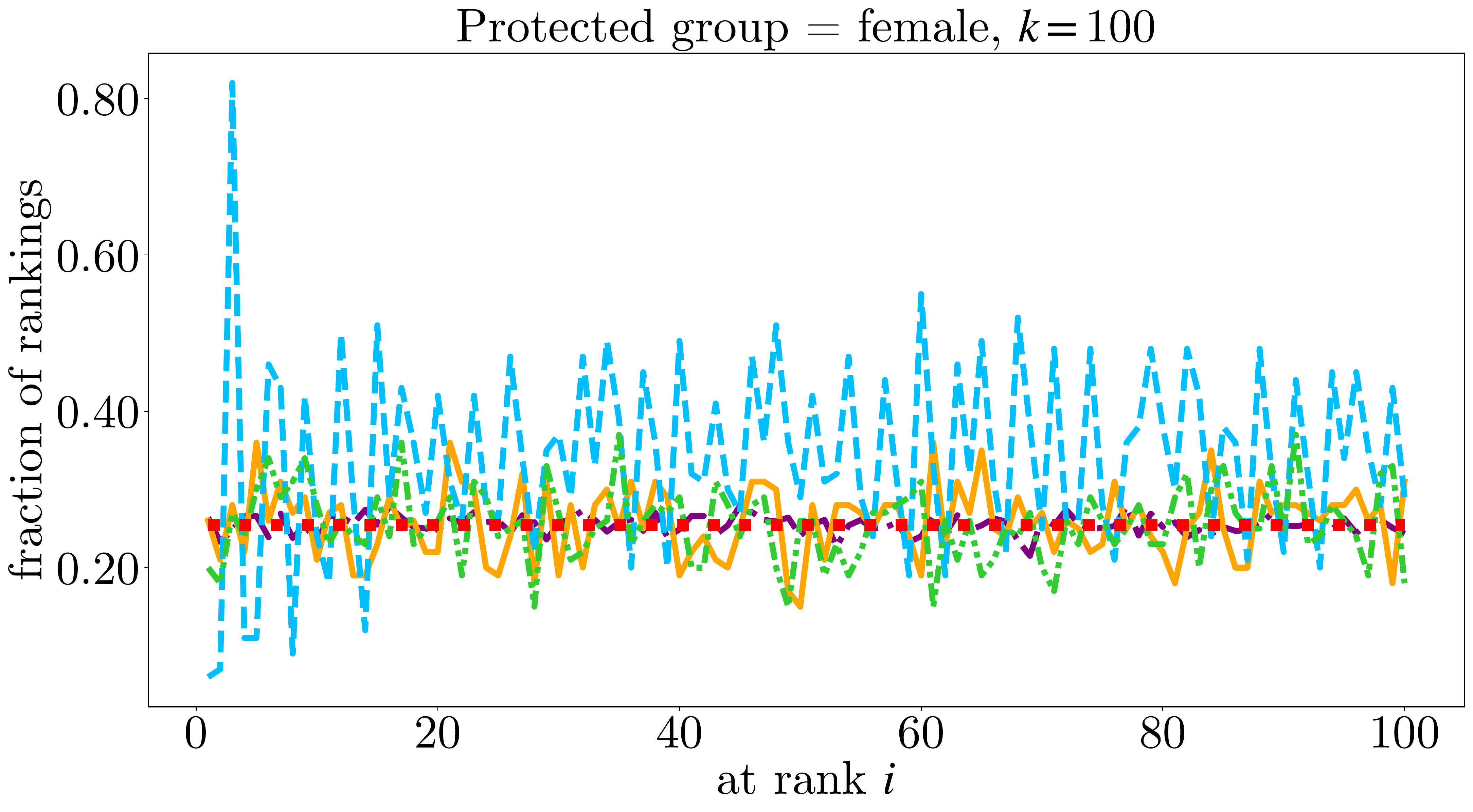} 
		\label{fig:german_35_rep}
	\end{subfigure}
	\begin{subfigure}[b]{0.25\linewidth}
		\centering
		\includegraphics[scale=0.13]{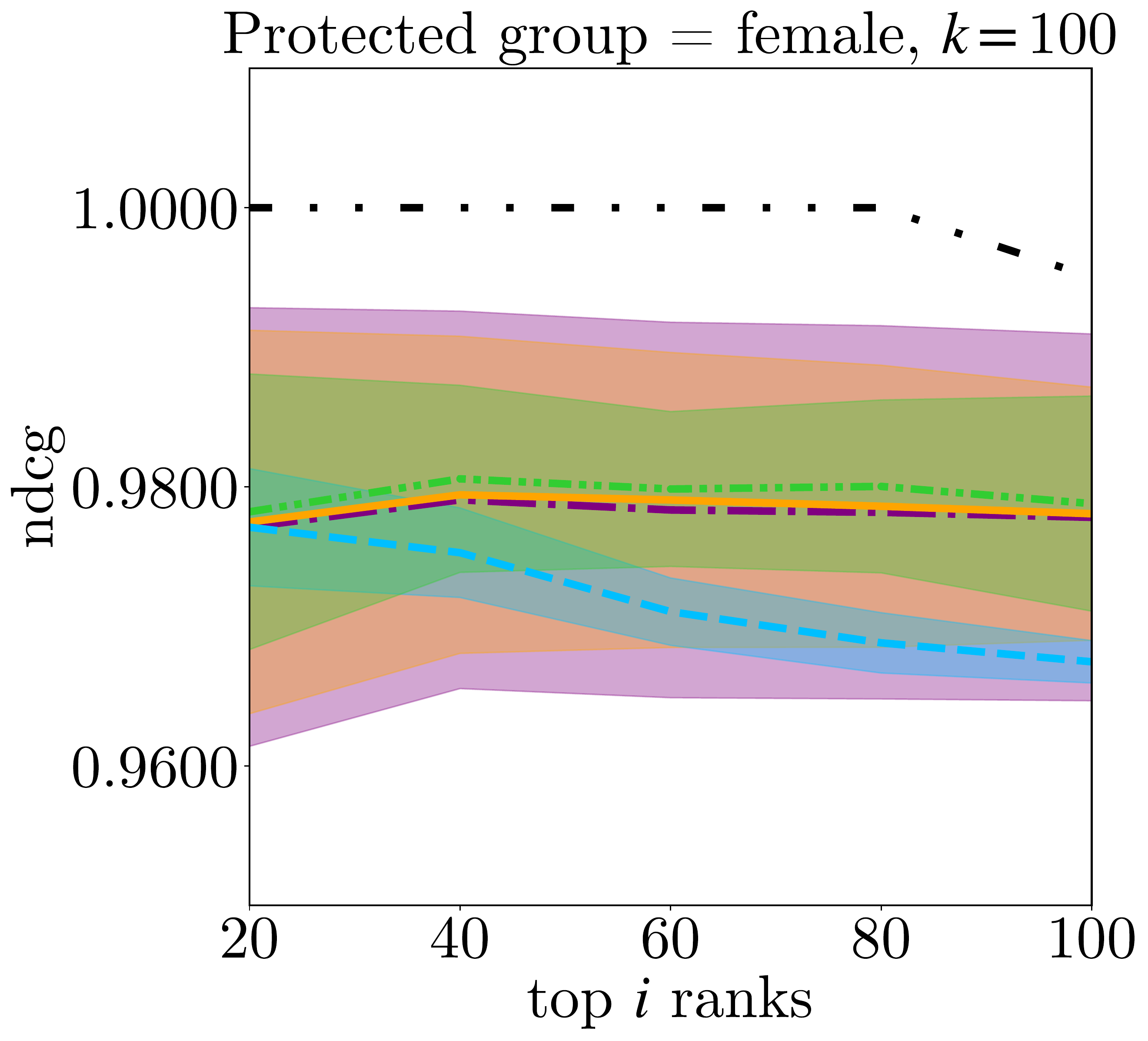} 
		\label{fig:german_35_rep}
	\end{subfigure}
	
	\caption{Results on the JEE 2009 dataset with \textit{gender} as the protected group. For Fair $\epsilon$-greedy we use $\epsilon =0.15$.}
	\label{fig:jee_gender_eps015}
\end{figure*}

\begin{figure*}[t]
	\centering
	\begin{subfigure}[b]{\linewidth}
		\centering
		\includegraphics[scale=0.12]{legend.pdf} 
	\end{subfigure}
	
	\begin{subfigure}[b]{0.25\linewidth}
		\centering
		\includegraphics[scale=0.13]{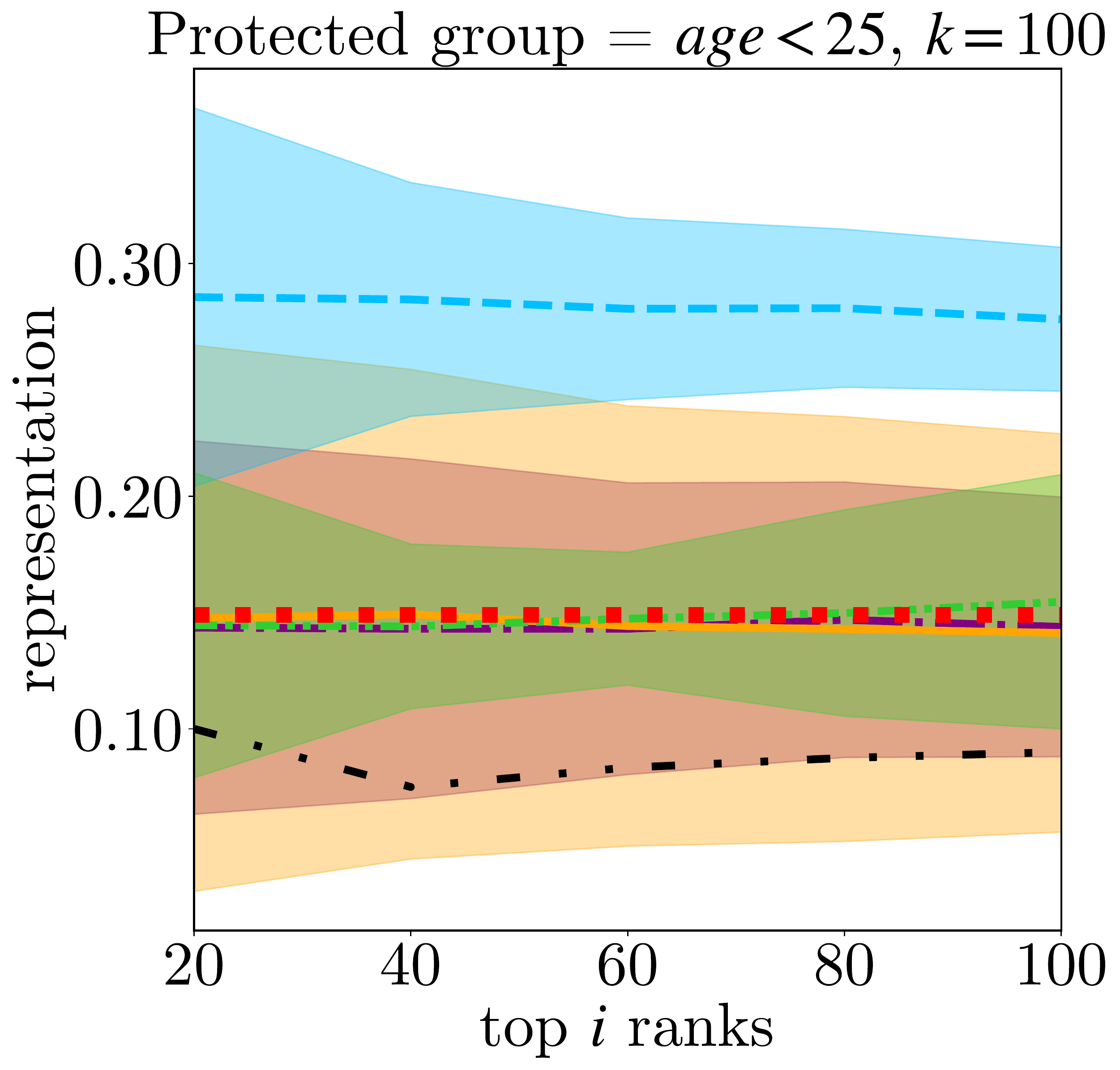} 
		\label{fig:german_25_rep}
	\end{subfigure}
	\begin{subfigure}[b]{0.48\linewidth}
		\centering
		\includegraphics[scale=0.13]{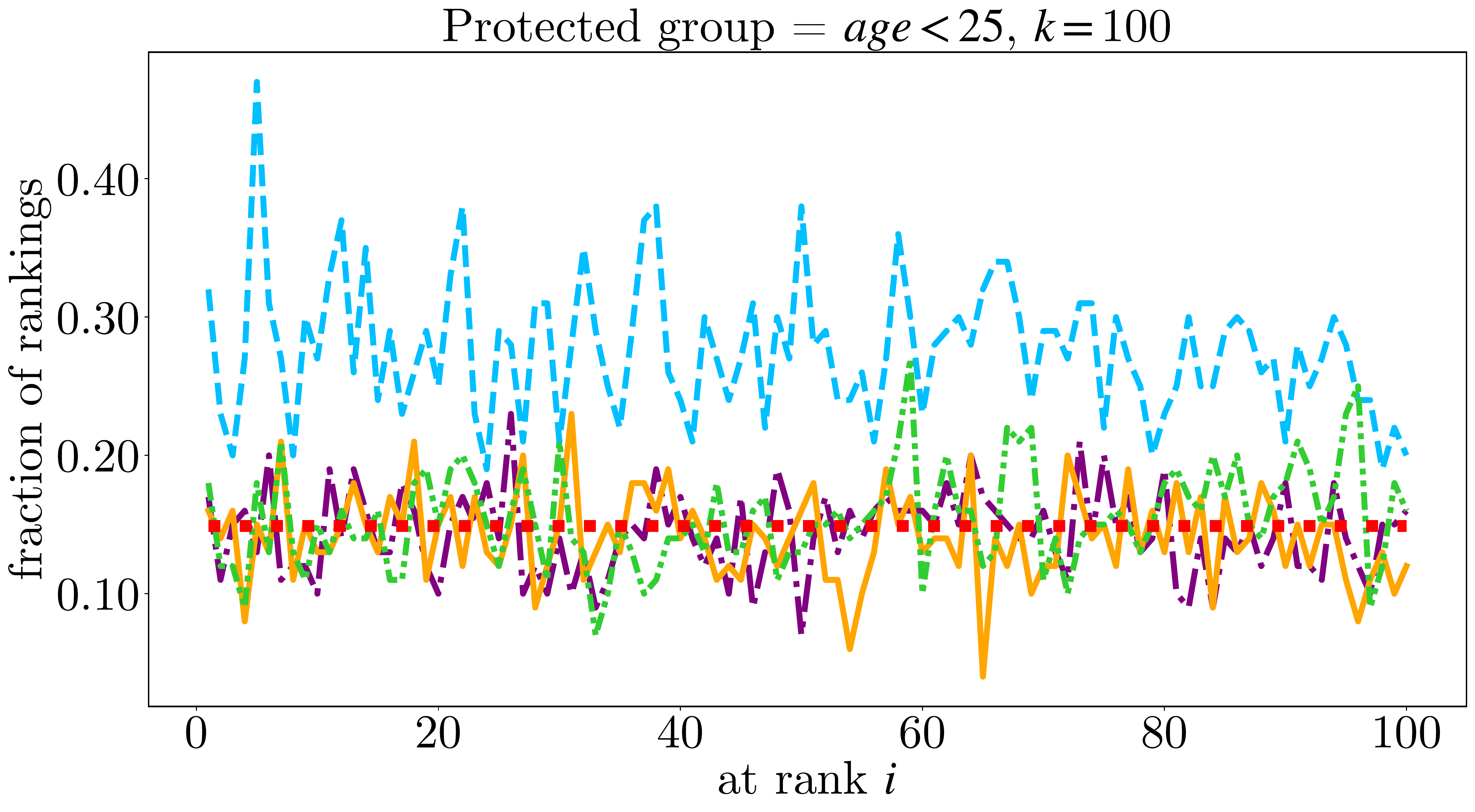} 
		\label{fig:german_35_rep}
	\end{subfigure}
	\begin{subfigure}[b]{0.25\linewidth}
		\centering
		\includegraphics[scale=0.13]{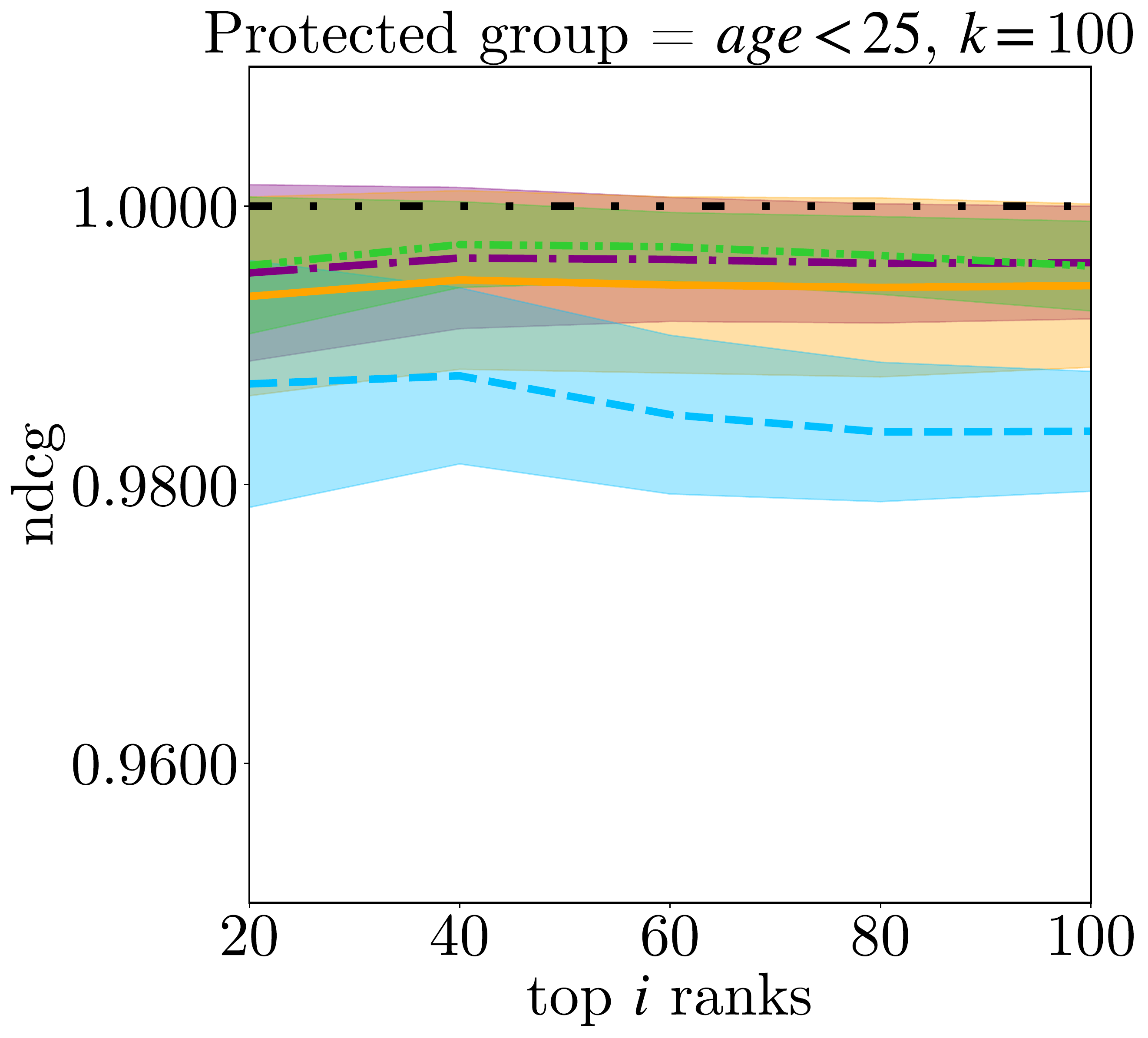} 
		\label{fig:german_35_rep}
	\end{subfigure}

	\begin{subfigure}[b]{0.25\linewidth}
		\centering
		\includegraphics[scale=0.13]{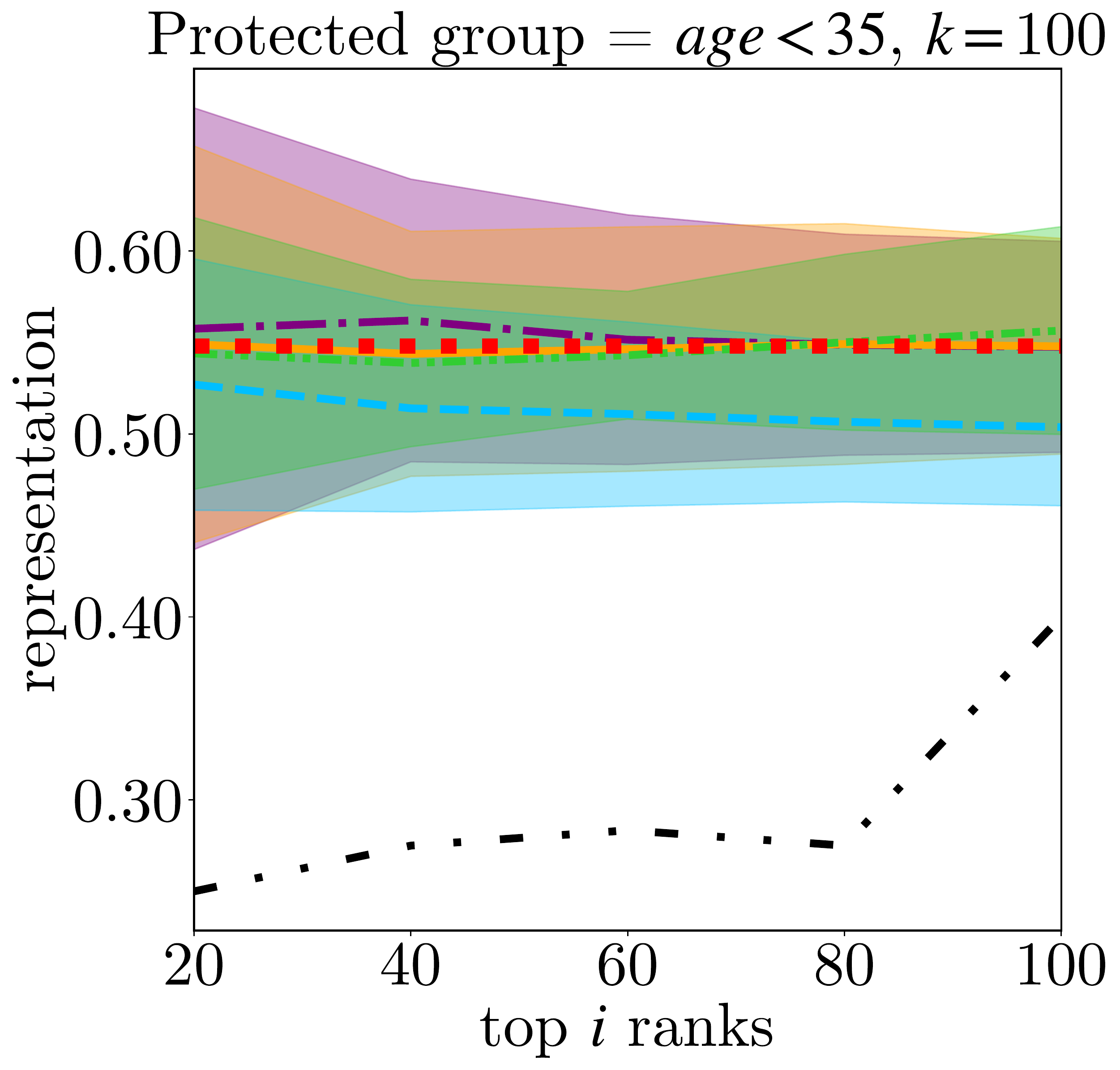} 
		\label{fig:german_25_prop}
	\end{subfigure}
	\begin{subfigure}[b]{0.48\linewidth}
		\centering
		\includegraphics[scale=0.13]{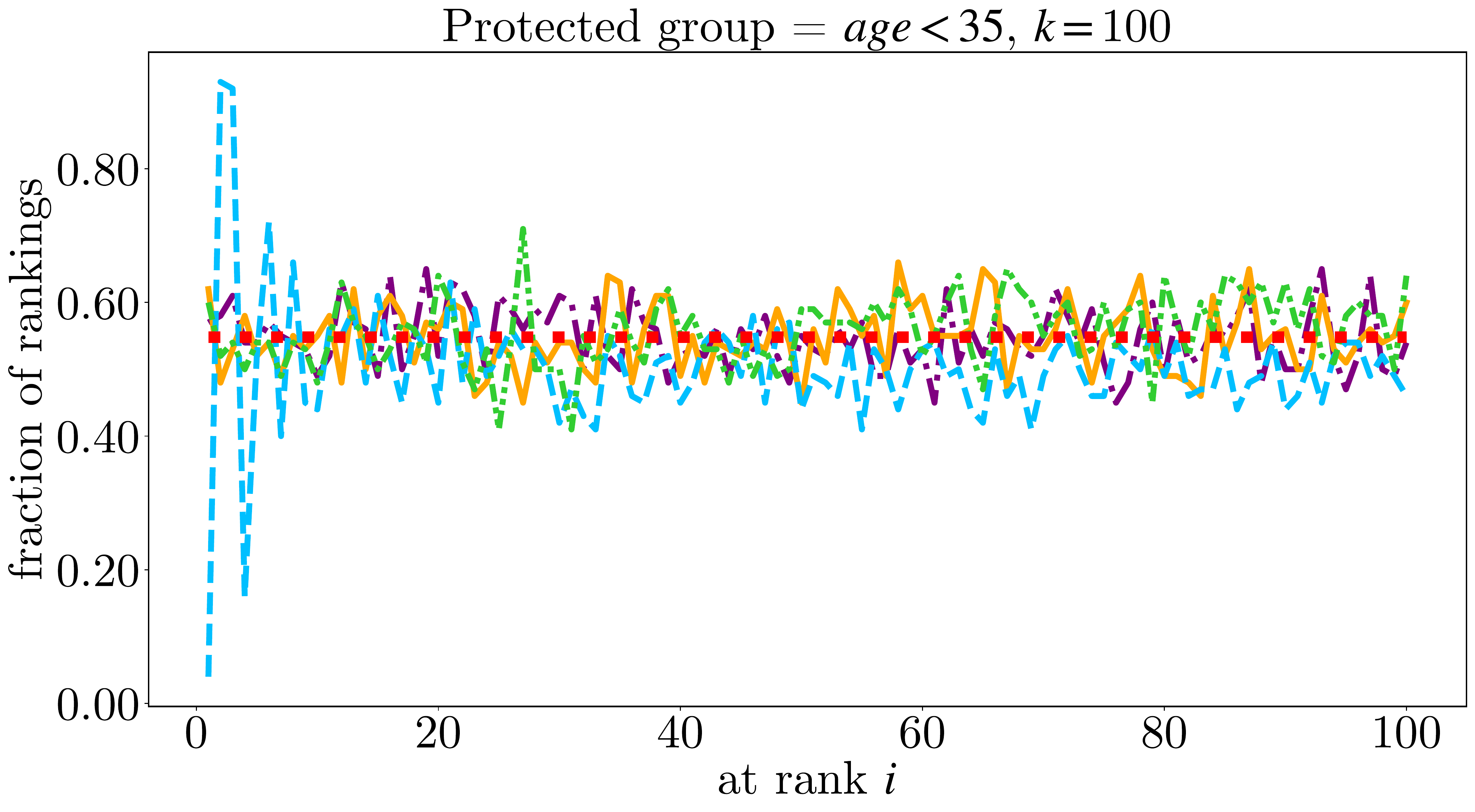} 
		\label{fig:german_35_prop}
	\end{subfigure}
	\begin{subfigure}[b]{0.25\linewidth}
		\centering
		\includegraphics[scale=0.13]{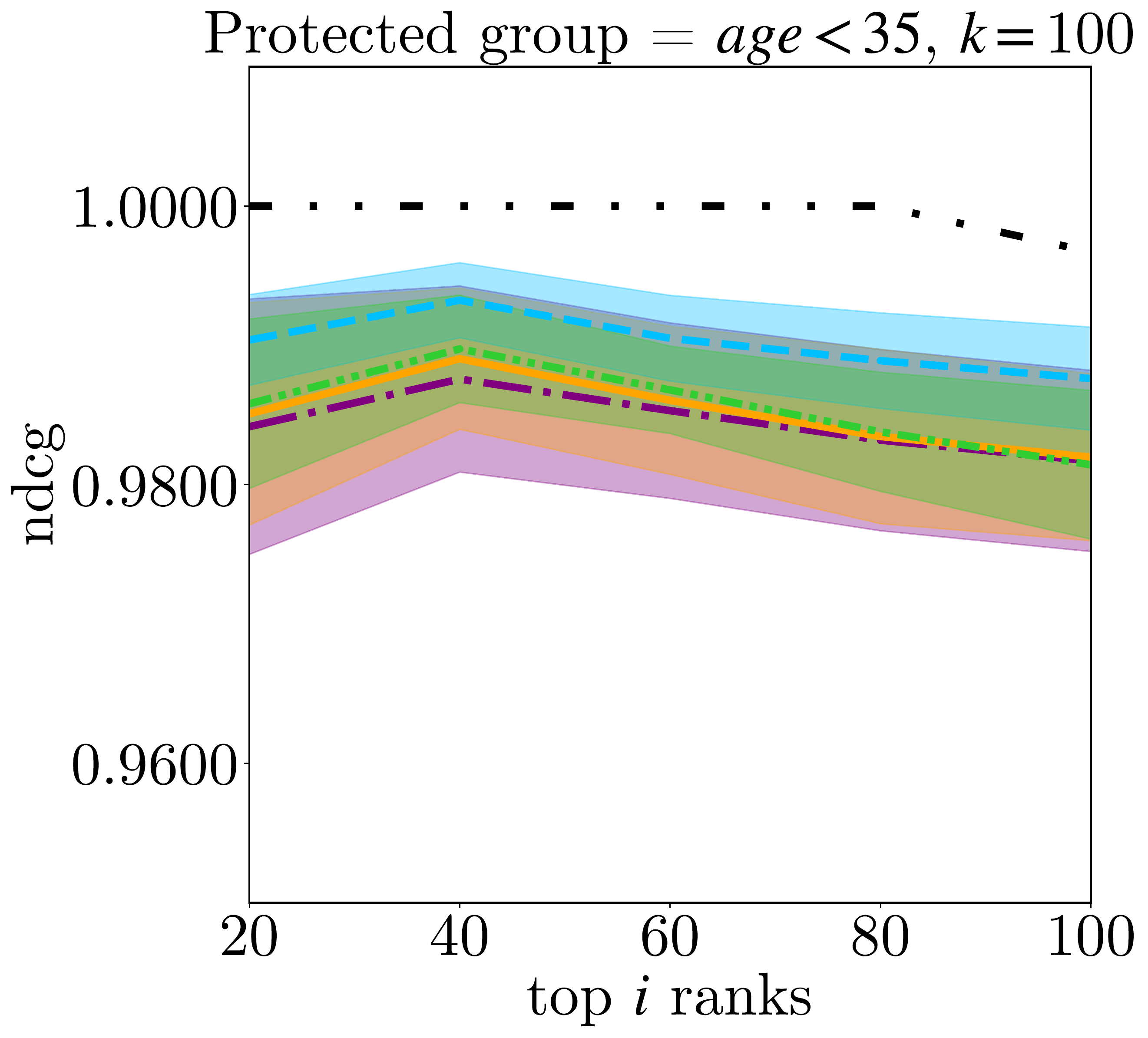} 
		\label{fig:german_35_prop}
	\end{subfigure}

	\caption{Results on the German Credit Risk dataset with \textit{age} $< 25$ as the protected group in the first row and \textit{age} $< 35$ as the protected group in the first row. For Fair $\epsilon$-greedy we use $\epsilon =0.5$.}
	\label{fig:german_binary_eps05}
\end{figure*}
\begin{figure*}[t]
	\centering
	\begin{subfigure}[b]{\linewidth}
		\centering
		\includegraphics[scale=0.12]{legend.pdf} 
	\end{subfigure}
	
	\begin{subfigure}[b]{0.25\linewidth}
		\centering
		\includegraphics[scale=0.13]{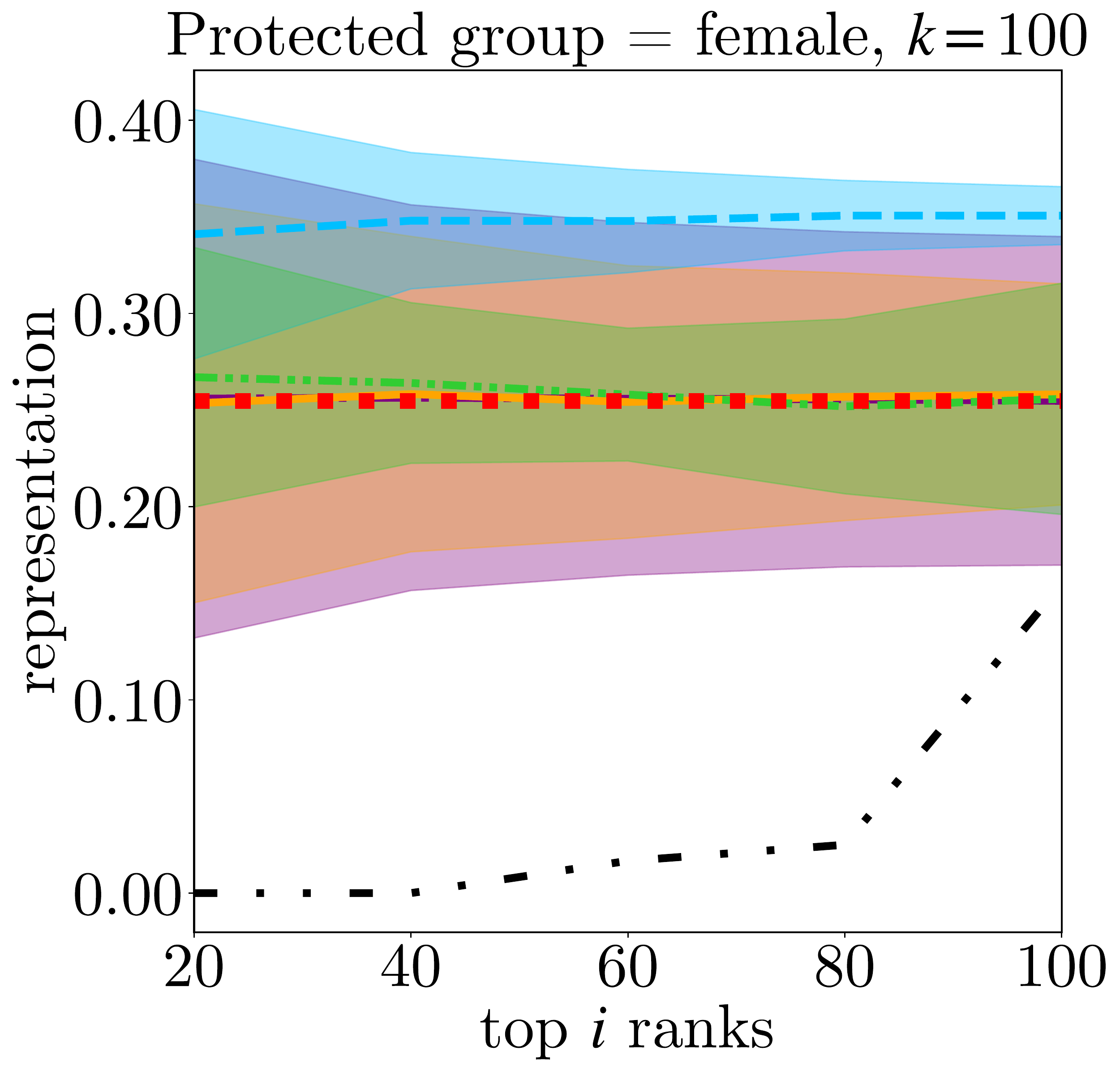} 
		\label{fig:german_25_rep}
	\end{subfigure}
	\begin{subfigure}[b]{0.48\linewidth}
		\centering
		\includegraphics[scale=0.13]{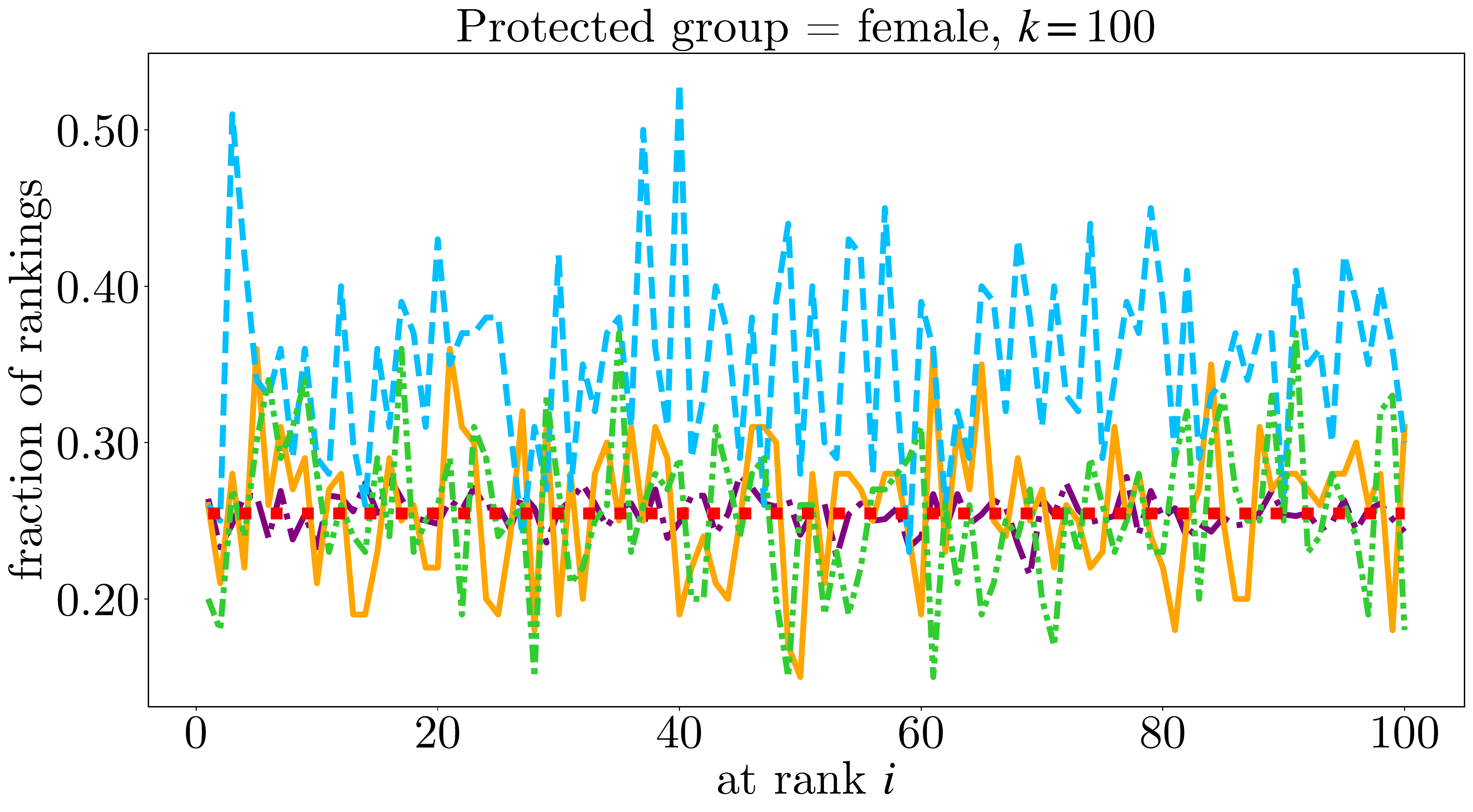} 
		\label{fig:german_35_rep}
	\end{subfigure}
	\begin{subfigure}[b]{0.25\linewidth}
		\centering
		\includegraphics[scale=0.13]{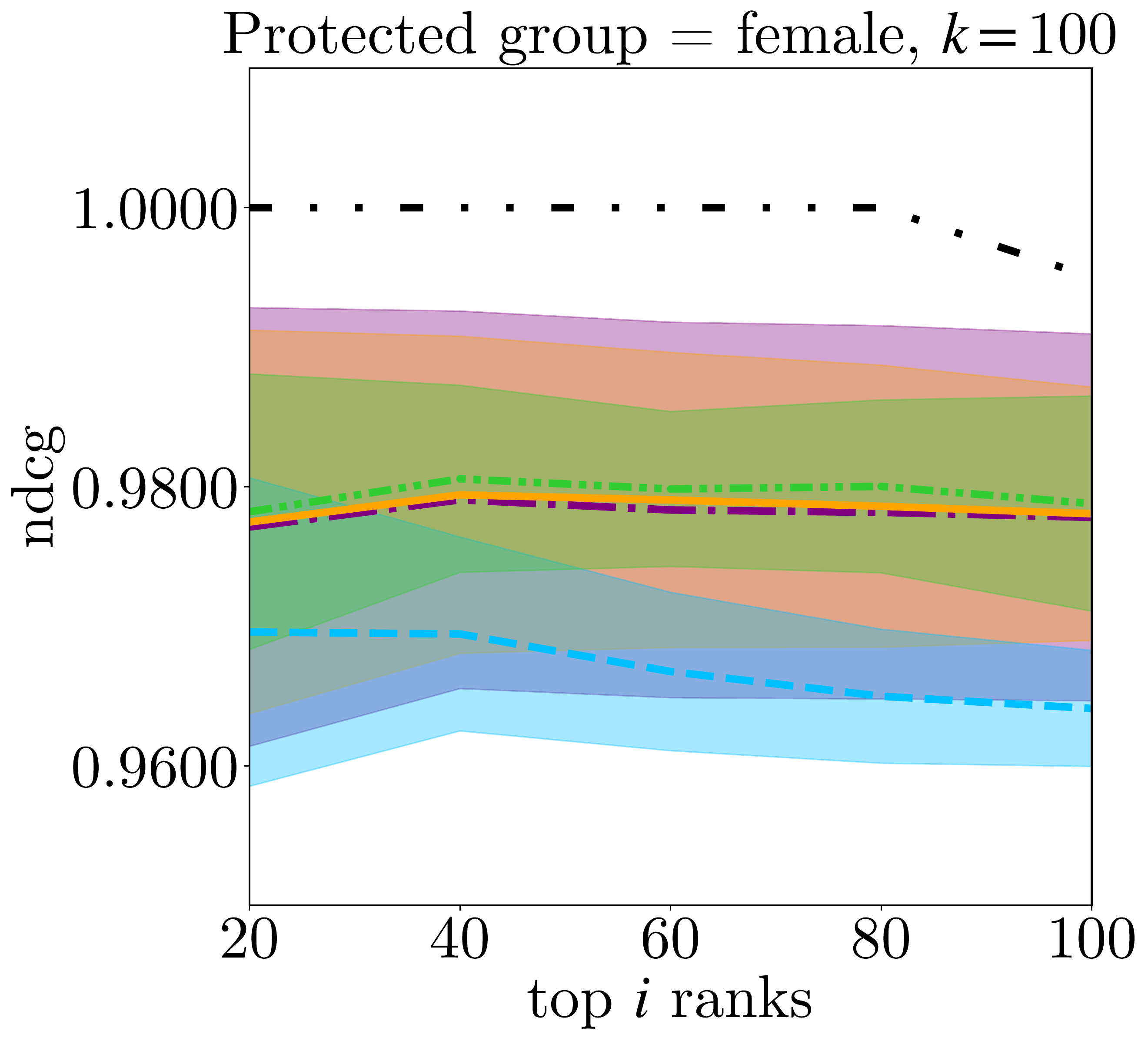} 
		\label{fig:german_35_rep}
	\end{subfigure}
	
	\caption{Results on the JEE 2009 dataset with \textit{gender} as the protected group. For Fair $\epsilon$-greedy we use $\epsilon =0.5$.}
	\label{fig:jee_gender_eps05}
\end{figure*}

\end{document}